\theoremstyle{plain}
\let\hat\widehat
\let\tilde\widetilde
\newcommand{\argmin}{\mathop{\mathrm{argmin}}}
\newcommand{\argmax}{\mathop{\mathrm{argmax}}}
\newtheorem{lemma}{{\bf Lemma}}
\newtheorem{corollary}{{\bf Corollary}}
\newtheorem{theorem}{{\bf Theorem}}
\newtheorem{assumption}{{\bf Assumption}}
\newtheorem{definition}{{\bf Definition}}
\newtheorem{remark}{{\bf Remark}}
\title{\Large{\textbf{Privacy-Preserving Dynamic Assortment Selection}} }
\author{Young Hyun Cho\thanks{Department of Statistics, Purdue University, Email: cho472@purdue.edu} \text{ and }Will Wei Sun\thanks{Mitchell E. Daniels, Jr. School of Business, Purdue University. Email: sun244@purdue.edu. Corresponding author.} }
\date{}
\begin{document} 

\maketitle

\begin{abstract}
\noindent

\end{abstract}
With the growing demand for personalized assortment recommendations, concerns over data privacy have intensified, highlighting the urgent need for effective privacy-preserving strategies. This paper presents a novel framework for privacy-preserving dynamic assortment selection using the multinomial logit (MNL) bandits model. Our approach employs a perturbed upper confidence bound method, integrating calibrated noise into user utility estimates to balance between exploration and exploitation while ensuring robust privacy protection. We rigorously prove that our policy satisfies Joint Differential Privacy (JDP), which better suits dynamic environments than traditional differential privacy, effectively mitigating inference attack risks. This analysis is built upon a novel objective perturbation technique tailored for MNL bandits, which is also of independent interest. Theoretically, we derive a near-optimal regret bound of \(\tilde{O}(\sqrt{T})\) for our policy and explicitly quantify how privacy protection impacts regret. Through extensive simulations and an application to the Expedia hotel dataset, we demonstrate substantial performance enhancements over the benchmark method.


\bigskip
\noindent{\bf Key Words:} Bandit algorithms; Differential privacy; Online decision making; Reinforcement learning; Regret analysis.

\newpage
\baselineskip=25pt 

\section{Introduction}
\label{sec:introduction}

Assortment optimization has become a crucial element of online retail platforms, significantly influencing product recommendation on e-commerce websites, marketing campaigns, and promotional slots. The primary objective is to select a subset of items that maximizes expected revenue based on a user’s choice model. With the advent of big data and the widespread adoption of machine learning algorithms, personalized recommendations have become feasible, enabling companies to present relevant products to the right audience and significantly enhancing potential revenue.  In a recent McKinsey report\footnote{\url{https://www.mckinsey.com/capabilities/growth-marketing-and-sales/our-insights/the-value-of-getting-personalization-right-or-wrong-is-multiplying}}, consumer demand for personalization is growing, with 71\% of consumers expecting personalized interactions and 76\% expressing frustration when these expectations are not met.

The literature on revenue management increasingly addresses the development of assortment optimization algorithms in online retail, commonly referred to as dynamic assortment selection. This involves dynamically optimizing and adapting product offerings in response to changing conditions, leveraging customer data such as past purchases, browsing history, demographics, and other relevant information. However, the rapid growth of personalized recommendations has also led to heightened concerns about privacy \citep{bi2023distribution,su2024statistical}. A KPMG survey\footnote{\url{https://kpmg.com/us/en/articles/2023/bridging-the-trust-chasm.html}} revealed that 70\% of companies have increased their collection of personal consumer data, while 86\% of individuals expressed growing concerns about data privacy. Additionally, 40\% of individuals do not trust companies to use their data ethically, and 30\% are unwilling to share personal data for any reason. These concerns have prompted stringent regulations such as the General Data Protection Regulation (GDPR)\footnote{\url{https://www.consilium.europa.eu/en/policies/data-protection/data-protection-regulation/}} in the European Union and the California Consumer Privacy Act (CCPA)\footnote{\url{https://oag.ca.gov/privacy/ccpa}}. Moreover, in 2022, China introduced the ``Internet Information Service Algorithmic Recommendation Management Provisions"\footnote{\url{https://www.chinalawtranslate.com/en/algorithms/}} to regulate algorithmic recommendations. This policy emphasizes the need for socially responsible model design and mandates regular outcome reviews to safeguard personal information.


\begin{figure}[htbp]
\centering
\includegraphics[scale=0.6]{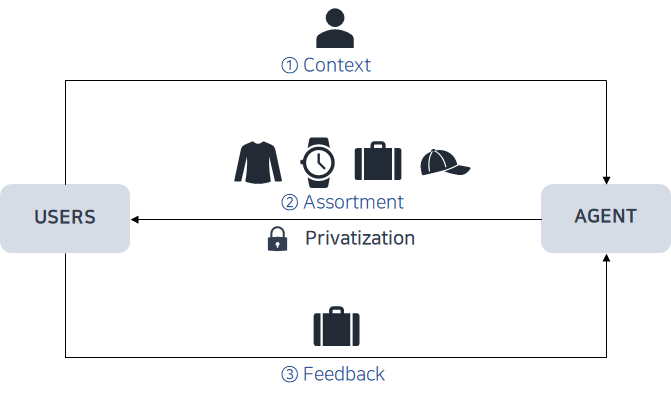}
\caption{Flow of Privacy-Preserving Dynamic Assortment Selection}
\label{fig: bandit loop}
\end{figure} 

This paper explores privacy-preserving personalized dynamic assortment selection, a crucial issue in online retail platforms. Dynamic assortment selection has been effectively modeled using the multinomial logit (MNL) bandits framework, where the customer's choice model is represented by a multinomial logit model \citep{chen2020dynamic, oh2019thompson, oh2021multinomial,dong2023pasta,lee2024nearly,zhang2024online}. In the MNL bandit problem, as depicted in Figure \ref{fig: bandit loop}, a potential customer arrives at each time step \( t \), and the agent observes the user's contextual information. Based on this information and historical feedback data, the agent offers a personalized assortment of items. The user then decides whether to purchase one of the items or to make no purchase at all. The purchase decision serves as feedback to the agent, enabling it to update its policy and optimize the assortment for future users. 

While online platforms recognize the importance of privacy and typically employ secure communication channels to safeguard user data from third-party access, the risk of privacy breaches remains, particularly through ``inference attacks" \citep{chen2022privacy,lei2023privacy,xu2024rate}. Even when customer data is not shared with third parties, malicious adversaries can either infiltrate the system by pretending to be users or collude with users to infer sensitive information. These adversaries can share their data including prescribed outputs and leverage observed fluctuations in these outputs to infer information about other users. For instance, if an item that was not frequently recommended before time \( t \) suddenly starts being recommended, it can be inferred that the user at time \( t \) may have purchased that item. Furthermore, if that item is commonly associated with customers who share certain features, it may also be possible to infer the specific characteristics of the user at time \( t \).

The primary challenge in privacy-preserving online decision-making lies in managing the trade-off between exploration and exploitation while preventing adversaries from inferring sensitive information. At each time step, the agent must decide whether to exploit existing knowledge gained from previous observations or to explore lesser-known options that may yield better outcomes. Since the agent only receives feedback on the items it recommends, this feedback is inherently partial, leaving the agent with incomplete knowledge of the overall item set even after multiple rounds, which can lead to suboptimal recommendations without adequate exploration. Privacy concerns further complicate this problem. Assortments based on sensitive user information, if exposed, could assist adversaries' inference attacks. To mitigate this risk, additional randomness needs to be introduced in optimizing the assortments to obscure the sensitive information. However, it is critical to decide the correct level of randomness, as too much randomness may also lead to suboptimal personalization decisions.


To address these intertwined challenges, we introduce a perturbed upper confidence bound (UCB) method. Although online learning methods are advancing rapidly \citep{shi2023value, shi2023dynamic, qi2024proximal, zhou2024policy}, the UCB approach \citep{lai1985asymptotically} remains a core technique in bandit and reinforcement learning, effectively managing the exploration-exploitation trade-off by giving higher exploration bonuses to underexplored actions. In the context of privacy preservation, we extend UCB by adding calibrated noise to the user utility estimates, resulting in a perturbed UCB framework. This additional layer of randomness increases the complexity of the problem, requiring the design of an algorithm that is able to explicitly quantify the trade-off between the randomness introduced to preserve privacy and the degradation in personalization.


To formally quantify privacy protection, we ensure that our policy satisfies Joint Differential Privacy (JDP) \citep{hsu2014private}, which is a relaxed standard derived from differential privacy (DP) \citep{dwork2006differential}, designed to be more suitable for online decision-making settings, such as bandit algorithms \citep{shariff2018differentially, chen2022privacy} and reinforcement learning \citep{vietri2020private, zhou2022differentially}. DP ensures individual privacy by requiring that mechanisms applied to neighboring datasets differing by a single entity yield similar distributions. However, in the context of dynamic assortment selection, where the output of bandit policy is a sequence of assortments for all users, DP enforces similar recommendations across all the users, which conflicts with the goal of personalized decisions. Moreover, consideration of the entire sequence of assortments in DP is unrealistic in practical online platforms. The idea behind targeting the entire sequence is to ensure that the adversary, who can observe the full sequence of recommendations, is unable to perform an inference attack. However, in real-world online platforms where secure channels are typically used, the adversary does not have access to a user’s assortment unless they are colluding with that user. This makes DP overly stringent for our setting, as it assumes a level of exposure that rarely occurs in practice. JDP, on the other hand, requires that replacing any single user does not significantly affect the assortments for the remaining \(T-1\) users. Thus, it safeguards against inference attacks by adversaries colluding with the \(T-1\) users, excluding the target user. Since the target user's assortment is not taken into account, JDP not only better reflects real-world scenarios but also allows for personalized recommendations based on the target user's sensitive information.

\subsection{Our Contributions}

Our contributions can be categorized into two main areas: methodological advancements and theoretical developments.

\noindent
\textbf{Methodological Contributions:} We propose the first privacy-preserving dynamic assortment selection policy via the MNL contextual bandit framework that satisfies JDP. We highlight that addressing the MNL model requires novel solutions, as existing privacy-preserving methods for bandit problems based on linear and generalized linear models (GLM) are not applicable to this setting. For instance, privacy-preserving linear contextual bandit studied in \cite{shariff2018differentially} extensively relies on a tree-based aggregation \citep{chan2011private} which continuously releases private sum statistics, leveraging that its UCB has sufficient statistics in the form of sum statistics. However, this approach is infeasible for our MNL bandits due to the lack of such explicit sufficient statistics. On the other hand, while GLM bandit studied in \cite{chen2022privacy} relies on private MLE through objective perturbation \citep{chaudhuri2011differentially,kifer2012private}, there is no established technique in objective perturbation that is applicable to our multinomial models. To bridge this gap, we develop a novel objective perturbation technique tailored to MNL bandits, which is also of independent interest in the field of privacy.

We reformulate the presentation of Joint Differential Privacy (JDP) to minimize the noise required for privacy protection in our MNL bandit model. Unlike traditional DP, which has various formulations \citep{bun2016concentrated, dong2022gaussian, su2024statistical}, JDP has traditionally relied on a likelihood ratio-based definition that is less efficient in online settings. In online learning, it is often necessary to compose multiple privacy mechanisms to safeguard newly arising sensitive information. This challenge is particularly pronounced in our multinomial models, where privacy concerns involve multiple items simultaneously, in contrast to linear contextual bandits or GLM bandits, which focus on a single item at a time. To tackle this issue, we employ R\'enyi divergence \citep{renyi1961measures} to establish tighter composition bounds, thereby achieving a more efficient privacy mechanism. Empirical results demonstrate the effectiveness of this refined approach.


\noindent
\textbf{Theoretical Contributions:} 
In theory, we establish the regret bound of our algorithm. Regret measures the cumulative difference between the rewards achieved by the algorithm and those achieved by an optimal policy with full knowledge of the environment. In our case, it reflects the gap between the expected reward of the optimal assortment and that of our privacy-preserving policy. Our policy achieves a regret bound of \(\Tilde{O}\left(\left(d+\frac{d^{5/2}}{\rho_{1}}+\frac{d^{3/4}}{\rho_{2}^{1/4}}\right)\sqrt{T}\right),\) where $d$ represents the dimension of contextual vectors, $T$ denotes the entire time horizon, and \(\rho_{1}\) and \(\rho_{2}\) are the privacy parameters. Here \(\rho_{1}\) is associated with the \texttt{PrivateMLE} subroutine, and \(\rho_{2}\) corresponds to the \texttt{PrivateCov} subroutine. This bound matches the \(\Omega(\sqrt{T})\) lower bond of MNL contextual bandit policy established in \citep{chen2018note}, with other terms held constant. Importantly, this regret bound also suggests that \(\rho_{1}\) has a greater influence on regret than \(\rho_{2}\), suggesting that, for a fixed privacy budget, a larger portion should be allocated to \(\rho_{1}\). This practical guideline for privacy budget allocation is further validated through extensive experiments.


Our proof technique diverges significantly from previous work, such as \cite{chen2022privacy}, which studied privacy in GLM bandits but did not offer a detailed analysis of how each subroutine's privacy budget impacts regret. In contrast, we provide a more nuanced analysis by explicitly quantifying the contribution of each subroutine to the overall regret. Moreover, our privacy analysis builds upon the newly developed objective perturbation technique tailored to our MNL bandits. It leverages advanced optimization methods and random matrix theory to effectively handle the noise introduced in each subroutine to guarantee privacy.


\subsection{Related Literature}
Our work is closely related to dynamic assortment selection and differentially private contextual bandits literature. In the following, we review these two areas and highlight how our approach differs from existing studies.

\noindent
\textbf{Dynamic Assortment Selection:} The dynamic assortment problem with unknown customer preferences began with \cite{caro2007dynamic}. While noncontextual MNL models have been extensively studied \citep{rusmevichientong2010dynamic, saure2013optimal, agrawal2017thompson, agrawal2019mnl, chen2018note}, recent works focus on personalized contexts \citep{article, chen2020dynamic, ou2018multinomial, oh2019thompson, oh2021multinomial,lee2024low,lee2024nearly}. UCB-based policies for MNL contextual bandits were first introduced by \cite{chen2020dynamic}, where UCB was applied to the expected revenue of each possible assortment. 
In a different approach, \cite{oh2021multinomial} applied UCB to each item's utility individually, selecting the assortment that maximizes the expected revenue based on these optimistic utility estimates. 
Despite inherent privacy concerns, privacy in dynamic assortment selection remains unexplored. We address this gap by building on \cite{oh2021multinomial} to develop the first private dynamic assortment selection algorithm. Unlike \cite{lei2023privacy} which deals with offline personalized assortment, our method handles online and adaptive assortments.

\noindent
\textbf{Differentially Private Contextual Bandits:} The literature on private contextual bandits is rapidly emerging, with a focus on both linear and generalized linear bandits. There are two primary privacy regimes in this context. The first is the Central DP regime, where a trusted agent manages raw user data and injects noise to ensure privacy. On the other hand, the Local DP regime \citep{zheng2020locally, han2021generalized, chen2022privacy} assumes no trusted agent; users themselves perturb the signal before communicating with the central server, preventing the server from accessing raw data. Our paper considers the central DP. In the Central DP, perturbed UCB-based policies have been studied for both linear contextual bandit \citep{shariff2018differentially} and generalized linear bandit \citep{chen2022privacy, su2023differentially}. Our work is closely related to \cite{chen2022privacy}, which studies GLM bandits in the Central DP regime using UCB-based strategy. However, our multinomial model introduces higher technical complexity in both the bandit formulation and privacy preservation, as the objective perturbation technique used in \cite{chen2022privacy} is not applicable to the multinomial setting. Another key distinction from \cite{shariff2018differentially} and \cite{chen2022privacy} is their adoption of Anticipated Differential Privacy (ADP), which is a weaker guarantee than our JDP. 
While ADP ensures that adversaries cannot infer sensitive information about a target user from future users, it does not protect against collusion with earlier users. In contrast, JDP provides a stronger guarantee by protecting against both past and future collusions.

\subsection{Notation}
Throughout this paper, we denote $[T] = \{1,2,\cdots,T\}$ for any positive integer $T$. For a vector $x \in \mathbb{R}^{d},$ $\Vert x \Vert$ denotes its $l_{2}$-norm. The weighted $l_{2}$-norm with respect to a positive-definite matrix $V$ is defined by $\Vert x \Vert_{V} = \sqrt{x^\top V x}.$ The minimum and maximum eigenvalues of a symmetric matrix $V$ are written as $\lambda_{\text{min}}(V)$ and $\lambda_{\text{max}}(V)$, respectively. For two positive sequences $\{a_{n}\}_{n\geq1}$ and $\{b_{n}\}_{n\geq1}$, we say $a_{n} = \mathcal{O}(b_{n})$ if $a_{n} \leq C b_{n}$ for some positive constant $C$ for all large $n$. We let $\Tilde{\mathcal{O}}(\cdot)$ represent the same meaning of $\mathcal{O}(\cdot)$ except for ignoring a log factor. In addition, $a_{n} = \Omega(b_{n})$ if $a_{n} \geq C b_{n}$ for some positive constant $C$ for all large $n$.


\section{Problem Setting}
\label{sec: problem setting}

In this section, we first introduce the problem of dynamic assortment selection in a MNL contextual bandit setting, describing the modeling of user preferences and the objective of minimizing the expected cumulative regret. We then formalize the privacy requirements by defining DP and JDP using the R\'enyi divergence, discuss their application to JDP, and present key theoretical results that facilitate the design of a privacy-preserving policy.

\subsection{Multinomial Logit Contextual Bandit}

At each round $t \in [T]$, a customer comes to the platform and the agent observes feature vector $x_{ti} \in \mathbb{R}^{d}$ for all items $i \in [N]$ which contains contextual information on both customer and item. Using the given information as well as the historical interactions until $t-1$ period, the agent offers an assortment set $S_{t} = \{i_1, \dots, i_K\} \in \mathcal{S}$ that consists of $K$ items out of $N$ total items among the set of candidate assortments $\mathcal{S}$. Then the agent observes the user's purchase decision $c_{t} \in S_{t} \cup \{0\},$ where $\{0\}$ denotes the case the user does not purchase any item offered in $S_{t}$. This decision serves as feedback for the agent to update its knowledge on users' demand on items. 

To model users' preference, we consider a widely adopted multinomial logit (MNL) choice model \citep{mcfadden1972conditional}, where the choice probability for item $i_{k} \in S_{t} \cup \{0\}$ is defined as 
\begin{equation*}\label{equ:choice_probability}
    p(i_{k} \mid S_{t}, \theta^{\ast}) = \frac{\exp\left\{x_{ti_k}^\top \theta^{\ast}\right\}}{1 + \sum_{j \in S_{t}} \exp\left\{x_{tj}^\top \theta^{\ast}\right\}}, \quad
    p(0 \mid S_{t}, \theta^{\ast}) = \frac{1}{1 + \sum_{j \in S_{t}} \exp\left\{x_{tj}^\top \theta^{\ast}\right\}},
\end{equation*}
where $\theta^{\ast}$ is a true time-invariant parameter unknown to the agent. Note that $x_{ti}^\top \theta^{\ast}$ can be understood as $t^{th}$ user's \emph{utility parameter} for product $i$. The MNL model assumes that choice response for items in $S_{t}$ is sampled from the following multinomial distribution:
\begin{equation*}
 y_{t} = (y_{t0},y_{t1},\dots,y_{ti_K}) \sim \text{Multi}\{1,p_{t}(0 \vert S_t,\theta^{\ast}),p_{t}(i_1 \vert S_t,\theta^{\ast}),\dots,p_{t}(i_K \vert S_t,\theta^{\ast})\}.   
\end{equation*}
Moreover, we denote the noise $\epsilon_{ti} = y_{ti} - p_{t}(i \vert S_{t},\theta^{\ast})$ for $i \in S_{t} \cup \{0\} $ and $t \in [T]$. As $\epsilon_{ti}$ is a bounded random variable, it is $\gamma^{2}$-subgaussian random variable with $\gamma^{2} = 1/4$.

Following \cite{oh2021multinomial}, we assume the revenue $r_{ti}$'s are public information to the agent and satisfy $\vert r_{t,i} \vert \leq 1$. The expected revenue of the assortment $S_{t}$ is given by
  \begin{equation*}
   R_{t}(S_{t},\theta^{\ast}) = \sum_{i \in S_{t}} r_{ti}p_{t}(i \vert S_{t},\theta^{\ast}).
  \end{equation*}

For a dynamic assortment selection policy that sequentially decides the assortment $S_{t} \in \mathcal{S}$, its performance is measured by \emph{expected cumulative regret} which is the gap between the expected revenue of this policy and that of the optimal assortments in hindsight: 
\begin{equation*}
    \mathcal{R}_{T} = \mathbb{E} \left[ \sum_{t=1}^{T} \left( R_{t}(S_{t}^{\ast},\theta^{\ast}) - R_{t}(S_{t},\theta^{\ast}) \right)
\right],
\end{equation*}
where $S_{t}^{\ast} = \argmax_{S \subset \mathcal{S}}R_{t}(S,\theta^{\ast})$ is the oracle optimal assortment at time $t$ when $\theta^{\ast}$ is known apriori. The goal of dynamic assortment selection is to minimize the expected cumulative over the total time horizon $T$. 
\subsection{Joint Differential Privacy}
Unlike the existing literature in MNL contextual bandit \citep{chen2020dynamic,oh2019thompson,oh2021multinomial,lee2024low,lee2024nearly}, we aim to design a private dynamic assortment selection policy that satisfies JDP which, while being a relaxed form of DP, still provides strong enough protection against inference attack. Both JDP and DP require that outputs from two neighboring datasets, differing by only one entry, have similar distributions. In bandit problems with a finite time horizon \(T\), a sequence of \(T\) users constitutes the dataset \(U = \{u_{t}\}_{t=1}^{T}\), where each user \(u_{t} \in \mathcal{U}\) is characterized by their context, the prescribed assortment, and their purchase decision, \(\{\{x_{ti}\}_{i \in [N]}, S_{t}, y_{t}\}\). To provide privacy for any  $t^{th}$ user, we consider neighboring datasets as those that differ by a single user at $t$ while sharing the remaining \(T-1\) users.

\begin{definition}\label{def: neighboring data}($t$-neighboring datasets). Two datasets \(U \in \mathcal{U}^{T}\) and \(U' \in \mathcal{U}^{T}\) are said to be $t$-neighboring if they differ only in their \(t\)-th entry.
\end{definition}

\begin{figure}[htbp]
    \centering
    \includegraphics[width=\textwidth]{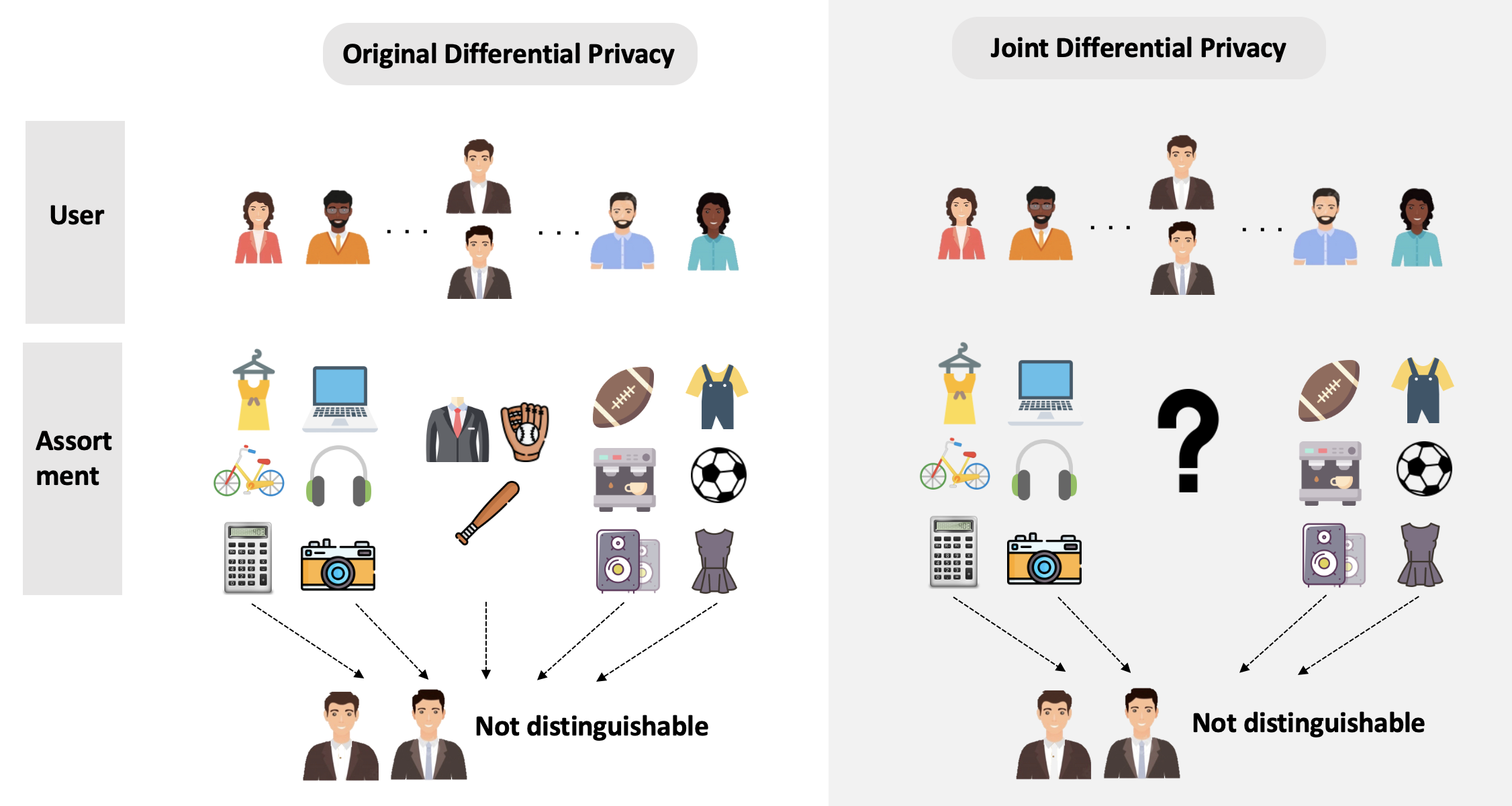}
    \caption{Comparison of DP and JDP: DP ensures that the entire sequence of assortments remains similar across neighboring datasets, while JDP requires similarity for assortments excluding the target user's, allowing for more flexible personalized recommendation.}
    \label{fig: jdp def}
\end{figure}

\begin{definition}\label{def: zcdp_jdp_combined} 
A contextual bandit algorithm \(\mathcal{M}: \mathcal{U}^{T} \rightarrow \mathcal{S}^{T}\) satisfies \(\rho\)-zero concentrated differential privacy (\(\rho\)-zCDP) \citep{bun2016concentrated} if for any $t$-neighboring user sequences \(U\) and \(U'\),
    \[
    D_{\alpha}\left(\mathcal{M}(U) \Vert \mathcal{M}(U')\right) \leq \rho \alpha, \text{ for all } \alpha > 1,
    \]
    where $D_{\alpha}(P \Vert Q) = \frac{1}{\alpha -1}\log \left(\sum_{E}P(E)^{\alpha}Q(E)^{(1-\alpha)}\right)$ is the R\'enyi divergence of order $\alpha$ of the distribution $P$ from the distribution $Q$.
    
    On the other hand, \(\mathcal{M}\) satisfies \(\rho\)-joint zCDP if for any $t$-neighboring user sequences \(U\) and \(U'\),
    \[
    D_{\alpha}\left(\mathcal{M}_{-t}(U) \Vert \mathcal{M}_{-t}(U')\right) \leq \rho \alpha, \text{ for all } \alpha > 1,
    \]
    where \(\mathcal{M}_{-t}(U)\) represents all outputs except for the assortment recommended to user \(t\).

\end{definition}

In Definition \ref{def: zcdp_jdp_combined}, for both $\rho$-zCDP and $\rho$-joint zCDP, the distributions are constrained to be similar with respect to R\'enyi divergence with the privacy parameter \(\rho\) that governs the level of similarity, with smaller values of \(\rho\) indicating greater similarity between the distributions and, thus, stronger privacy protection. As illustrated in Figure \ref{fig: jdp def}, the fundamental difference between DP and JDP lies in how they regulate the similarity of outputs from neighboring datasets. DP ensures that the entire sequence of assortments produced by the bandit policy across \(T\) time steps is similar.  In contrast, JDP requires that the sequence of assortments for the \(T-1\) users, excluding the prescribed assortment for user \(t\), exhibits similar distributions.

Note that the consideration of the entire sequence of assortments in $\rho$-zCDP inevitably results in similar assortments across all users, which contradicts the personalized nature of contextual bandits. Indeed, \cite{shariff2018differentially} and \cite{chen2022privacy} showed that linear contextual bandits and GLM bandits that satisfying DP have lower bound of $\Omega(T)$ for their regrets, which is undesirable. JDP, on the other hand, offers flexibility for personalized recommendations by not requiring the consideration of \(S_{t}\). This is illustrated in Lemma \ref{thm: billboard lemma}, which demonstrates that if a bandit mechanism’s action at each time step is governed by subroutines satisfying DP and the true data of $t$ user, the entire mechanism satisfies JDP.

\begin{lemma}\label{thm: billboard lemma}(Billboard Lemma \citep{hsu2014private}) 
    Suppose \( M: \mathcal{U}^{T} \rightarrow \mathcal{R} \) satisfies \(\rho\)-zCDP. Consider any set of deterministic functions \( f_{t}:\mathcal{U}_{t} \times \mathcal{R} \rightarrow \mathcal{R}' \), where \(\mathcal{U}_{t}\) represents the data associated with the \(t^{th}\) user. The composition \(\mathcal{M}(U) = \{f_{t}(U_{t}, M(U))\}_{t=1}^{T}\) satisfies \(\rho\)-joint zCDP, where \(U_{t}\) denotes the subset of the full dataset \(U\) corresponding to the \(t^{th}\) user’s data.
\end{lemma}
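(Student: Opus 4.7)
The plan is to leverage two facts: (i) for $t$-neighboring datasets $U$ and $U'$, all user data except that of user $t$ are identical, so the outputs for users $s \neq t$ can be written as a common deterministic function of $M(U)$ alone; and (ii) R\'enyi divergence satisfies the post-processing (data processing) inequality. Since $\rho$-zCDP is defined via R\'enyi divergence, pushing the inequality through the post-processing map transfers the privacy guarantee from $M$ to $\mathcal{M}_{-t}$.

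Concretely, I would first fix an arbitrary order $\alpha > 1$ and an arbitrary pair of $t$-neighboring datasets $U, U' \in \mathcal{U}^T$. By Definition \ref{def: neighboring data}, $U_s = U'_s$ for every $s \neq t$. I would then define the deterministic map $g: \mathcal{R} \rightarrow (\mathcal{R}')^{T-1}$ by
\[
g(r) \;=\; \bigl\{ f_s(U_s, r) \bigr\}_{s \in [T]\setminus\{t\}},
\]
noting that this same $g$ applies under both $U$ and $U'$, precisely because the $\{U_s\}_{s \neq t}$ coincide with $\{U'_s\}_{s \neq t}$. Consequently, $\mathcal{M}_{-t}(U)$ and $\mathcal{M}_{-t}(U')$ are equal in distribution to $g(M(U))$ and $g(M(U'))$, respectively.

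Next, I would invoke the post-processing property of R\'enyi divergence: for any deterministic (or, more generally, randomized) map $g$ and any two distributions $P, Q$, one has $D_\alpha(g_\#P \,\|\, g_\#Q) \le D_\alpha(P \,\|\, Q)$, where $g_\#$ denotes the pushforward. Applying this with $P$ and $Q$ being the laws of $M(U)$ and $M(U')$ yields
\[
D_\alpha\bigl(\mathcal{M}_{-t}(U) \,\big\|\, \mathcal{M}_{-t}(U')\bigr) \;\le\; D_\alpha\bigl(M(U) \,\big\|\, M(U')\bigr) \;\le\; \rho\alpha,
\]
where the last inequality uses the assumed $\rho$-zCDP of $M$. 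Since $\alpha > 1$ and the pair $U, U'$ were arbitrary, Definition \ref{def: zcdp_jdp_combined} gives $\rho$-joint zCDP of $\mathcal{M}$.

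The argument itself is short and essentially bookkeeping; I do not anticipate a substantive obstacle. The only step requiring care is the justification that $g$ is a bona fide post-processing map, i.e., that it does not depend on $U_t$ or $U'_t$; this is where the restriction to outputs indexed by $s \neq t$ (the defining feature of JDP versus DP) is crucial, and it is exactly why the same statement would fail for the full mechanism $\mathcal{M}(U)$ including the $t$-th coordinate. I would also mention that the result extends verbatim if each $f_t$ is itself randomized, provided the randomness is independent across users and independent of $M$, by absorbing the auxiliary randomness into an enlarged post-processing map.
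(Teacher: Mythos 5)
Your proof is correct and is essentially the paper's own argument: both construct the post-processing map from the shared data $\{U_s\}_{s\neq t}$ (so the same map applies under $U$ and $U'$) and then invoke the invariance of R\'enyi divergence under post-processing (Lemma \ref{thm: postprocessing}) to transfer the $\rho$-zCDP bound on $M$ to $\mathcal{M}_{-t}$. Your added remarks on why the map must not depend on $U_t$ and on the extension to independently randomized $f_t$ are fine but do not change the route.
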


In particular, Lemma \ref{thm: billboard lemma} enables our algorithm to incorporate the true context vector of the user $t$, which is crucial for making personalized recommendations. While the overall policy ensures JDP, each privacy subroutine is carefully designed to satisfy DP. In summary, our goal is to construct a dynamic assortment selection policy that not only satisfies \(\rho\)-joint zCDP but also minimizes expected cumulative regret, thereby ensuring both privacy protection and personalized assortment recommendation.

Before concluding this section, we provide two key observations on the privacy definition. First, to the best of our knowledge, this is the first work to employ \(\rho\)-zCDP for JDP. Previous research on privacy-preserving bandit \citep{shariff2018differentially, chen2022privacy} and reinforcement learning problems \citep{vietri2020private, zhou2022differentially} typically relies on standard \((\epsilon,\delta)\)-DP language which bounds the likelihood ratio of outputs between neighboring datasets. While theoretically valid, this approach is not efficient in practice. In online settings, where multiple compositions of privacy mechanisms are required, the privacy parameters \(\epsilon\) and \(\delta\) in \((\epsilon,\delta)\)-DP tend to inflate conservatively with each composition. In contrast, \(\rho\)-zCDP, based on R\'enyi divergence, offers tighter composition bounds and less inflation of privacy parameter $\rho$, as shown in \cite{bun2016concentrated}. This results in a more efficient privacy mechanism with reduced noise, while maintaining equivalent levels of privacy protection. Additionally, \(\rho\)-zCDP provides stronger privacy guarantees, as it can be converted into \((\epsilon,\delta)\)-DP, but not the reverse. Therefore, our use of \(\rho\)-zCDP for JDP results in satisfying privacy protection with less noise than existing approaches. 

Lastly, we underscore the importance of selecting the appropriate adjacency in JDP. In privacy mechanisms, adjacency refers to how two datasets are considered neighboring. There are two primary types: bounded and unbounded type. Bounded type assumes that neighboring datasets differ by exactly one entity while  having the same number of entries, which is suitable when the total number of entries is known or not sensitive. Unbounded type allows neighboring datasets to differ by the presence or absence of an entity, which is relevant when the number of entities itself is sensitive. It is crucial to select the correct notion of adjacency, as the two are not interchangeable, and using the wrong one can lead to flawed privacy analysis. Our privacy guarantee is based on bounded type adjacency, which is suited for scenarios like inference attacks, where the focus is on protecting changes in user interactions rather than their mere presence.


\section{Methodology}
\label{sec: methodology}
In this section, we present a privacy-preserving MNL bandit policy \texttt{DPMNL} (Algorithm \ref{alg: dpmnl}), supported by two essential subroutines: \texttt{PrivateMLE} (Algorithm \ref{alg: private mle}) and \texttt{PrivateCov} (Algorithm \ref{alg: privatecov}). 
\begin{figure}[h!]
    \centering
    \includegraphics[scale=0.6]{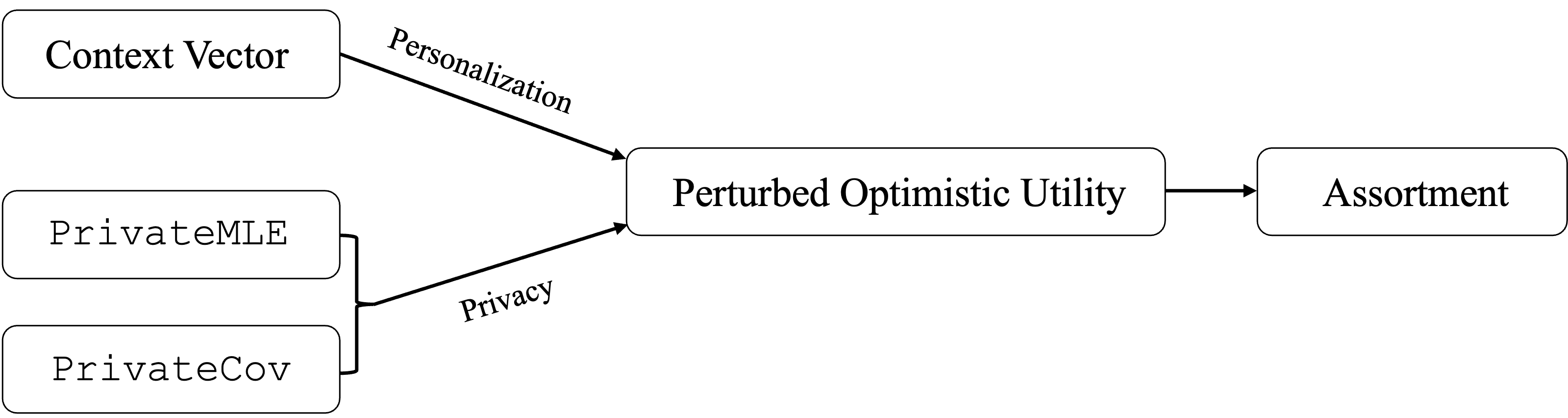}
    \caption{Overview of the mechanism design.}
    \label{fig: bigpicture}
\end{figure}
Figure \ref{fig: bigpicture} illustrates the overall mechanism design. 
\texttt{PrivateMLE} satisfies \(\rho_{1}\)-zCDP by constructing a private MLE for the model parameter \(\theta^{\ast}\), allowing \( x_{ti}^\top \hat{\theta} \) to serve as the point estimate of the utility of the \(i^{th}\) item for user \(t\). \texttt{PrivateCov}, on the other hand, encapsulates both the noisy matrix required for $\rho_{2}$-zCDP guarantees and the contextual information from historical users' assortments. This subroutine primarily contributes to the computation of exploration bonuses. Moreover, to provide personalized recommendations, the true context vector of the user at time \(t\) is utilized without perturbation. By combining these components, the policy computes perturbed optimistic utilities for each item, and selects the assortment with the highest estimated revenue.

\subsection{Main Algorithm: \texttt{DPMNL}}

We present the detailed explanation on \texttt{DPMNL} algorithm, which balances exploration and exploitation using a perturbed optimistic utility approach while ensuring privacy. The algorithm operates in two main phases: a pure exploration phase followed by an exploration-exploitation phase driven by the perturbed UCB strategy.

\begin{algorithm}[htb!]
\caption{Main Algorithm: \texttt{DPMNL}}\label{alg: dpmnl}
\begin{algorithmic}[1]
    \STATE \textbf{Input: } privacy parameters $\rho_{1}$, $\rho_{2}$, number of pure-exploration periods $T_{0}$, maximum number of \texttt{PrivateMLE} $D_{\text{MLE}}$, regularization parameter $\lambda$, confidence radius $a_{t}$
    \STATE \textbf{Pure exploration phases}
    \FOR{$t \in [T_{0}]$}
    \STATE A user comes with context $\{x_{ti}\}_{i \in N}$
    \STATE Randomly choose $S_{t}$ with $\vert S_{t} \vert = K$ and observe user feedback vector $y_{t}$
    \STATE $V_{t} = V_{t-1} + \sum_{i \in S_{t}}x_{ti}x_{ti}^\top$
    \ENDFOR
    \STATE Compute MLE $\hat{\theta}_{T_{0}} = \texttt{PrivateMLE}(T_{0},\frac{\rho_{1}}{D_{\text{MLE}}})$ via Algorithm \ref{alg: private mle} and \\
    Compute $V_{T_{0}} = \texttt{PrivateCov}(T_{0},\rho_{2}) + 2\lambda I$ via Algorithm \ref{alg: privatecov}
    \STATE \textbf{Exploration and exploitation phases}
    \FOR{$t = T_{0}+1$ to $T$} 
    \STATE A user comes with context $\{x_{ti}\}_{i \in N}$
    \STATE Compute $z_{ti} = x_{ti}^\top \hat{\theta}_{t-1} + \alpha_{t}\Vert x_{ti} \Vert_{V_{t-1}^{-1}}$ for all $i \in [N]$
    \STATE Offer $S_{t} = \argmax_{S \subset \mathcal{S}}\Tilde{R}_{t}(S)$ in (\ref{equ: optimistic revenue}) and observe user feedback vector $y_{t}$
    \STATE Compute $V_{t} = \texttt{PrivateCov}(t,\rho_{2}) + 2\lambda I$ via Algorithm \ref{alg: privatecov}
    \IF {$\text{det}(V_{t}) > 2 \text{det}(V_{t-1})$ and $D_{\text{MLE}} < D$}
    \STATE $\hat{\theta}_{t} = \texttt{PrivateMLE}(t,\frac{\rho_{1}}{D})$, $D = D+1$
    \ELSE
    \STATE $\hat{\theta}_{t}=\hat{\theta}_{t-1}$
    \ENDIF
    \ENDFOR
\end{algorithmic}
\end{algorithm}

The algorithm stars from a pure exploration phase. During there \(T_0\) exploration periods, assortments are randomly selected to ensure reasonable initialization of parameter estimation. As shown in Thorem \ref{thm: regret}, \( T_0\) can be chosen as  \( \mathcal{O}\big(\frac{1}{\rho_1} \log T\big) \) to ensure reliable MLE. Since the exploration length scales logarithmically with \( T \), the contribution of \( T_0 \) to the overall regret is negligible in \( \Tilde{O} \)-notation. In addition, the choice of \( T_0 \) is also influenced by the privacy parameter. A smaller value of \( \rho_1 \) increases the privacy requirement in \texttt{PrivateMLE}, introducing a greater perturbation. This necessitates a larger \( T_0 \) to collect a sufficiently strong signal from the explored information. It is important to note that \( T_0 \) affects only MLE estimation and remains independent of the privacy parameter \( \rho_2 \) related to \texttt{PrivateCov}.

After the \( T_0 \) pure exploration phases, the algorithm transitions into the exploration-exploitation phase, where it uses a perturbed UCB strategy to select assortments based on the perturbed utility estimate for each item:
\[
    z_{ti} = x_{ti}^\top \hat{\theta}_{t-1} + \beta_{ti},
\]
where the first term, \( x_{ti}^\top \hat{\theta}_{t-1} \), represents the estimated utility of item \( i \) for user \( t \). Here, \( \hat{\theta}_{t-1} \) is a perturbed MLE obtained through \texttt{PrivateMLE} with privacy parameter \( \rho_{1}/D_{\text{MLE}} \). The second term, \( \beta_{ti} \), which refers to as \textit{exploration bonus}, quantifies the uncertainty in the utility estimate and is calculated as \( \alpha_t \Vert x_{ti} \Vert_{V_{t-1}^{-1}} \), where \( \alpha_t \) is selected based on the confidence bound of \( \hat{\theta}_{t} \), as established in Lemma \ref{thm: mle bound matrix norm}. The matrix \( V_{t-1} \), representing the shifted noisy Gram matrix, is computed as \texttt{PrivateCov}\((t-1, \rho_{2}) + 2\lambda I\), where the \texttt{PrivateCov} subroutine will be discussed in Algorithm \ref{alg: privatecov}. The parameter \( \lambda \) is chosen to ensure that \( V_{t-1} \) is positive definite; see Lemma \ref{thm: pd noisy gram} for the explicit formulation of \( \lambda \).


Both privacy parameters, \( \rho_1 \) and \( \rho_2 \), affect the confidence width \( \alpha_t \). When \( \rho_1 \) decreases, more perturbation in \texttt{PrivateMLE} leads to a higher estimation error in $\hat{\theta}_{t}$, which results in a larger \( \alpha_t \) to account for the added uncertainty. On the other hand, as \( \rho_2 \) decreases, more noise is added to the gram matrix, which requires a larger \( \lambda \). Since \( \alpha_t \) measures the estimation error in a weighted norm with respect to \( V_{t-1} \), a larger \( \lambda \) also leads to a larger \( \alpha_t \).

The hyperparameter \( D_{\text{MLE}} \) in Algorithm \ref{alg: dpmnl} limits the frequency of MLE updates. Frequent updates would introduce excessive noise, so we impose a determinant condition, \( \text{det}(V_t) > 2 \text{det}(V_{t-1}) \), ensuring updates occur only when sufficient new information is gathered. This condition restricts the number of updates to \( \mathcal{O}(d \log KT) \), and thus, \( D_{\text{MLE}} =  \mathcal{O}(d \log KT) \), thereby balancing performance and privacy by limiting unnecessary noise accumulation.

The UCB-based optimistic revenue estimate is:
\begin{equation}\label{equ: optimistic revenue}
    \Tilde{R}_t(S) = \frac{\sum_{i \in S} r_{ti} \exp(z_{ti})}{1 + \sum_{j \in S} \exp(z_{tj})}.
\end{equation}
As demonstrated in Lemma \ref{thm: oh lemma 3_5}, this estimate serves as an upper bound for the true expected revenue of the optimal assortment. 
By selecting assortments based on the perturbed optimistic utility \( z_{ti} \), the algorithm ensures both privacy and personalization.

\subsection{Intuivie Overview on Perturbed UCB Strategy}

Our perturbed UCB strategy addresses the exploration-exploitation tradeoff in a privacy-preserving manner by introducing calibrated noises into the utility estimates. The utility for user \(t\) and item \(i \in [N]\) in the UCB algorithm is given by:
\[
    z_{ti} = \underbrace{x_{ti}^\top \hat{\theta}_{t-1}}_{\text{point estimate}} + \underbrace{\alpha_{t} \Vert x_{ti} \Vert_{V_{t-1}^{-1}}}_{\text{exploration bonus}},
\]
where \(\hat{\theta}_{t-1}\) is the perturbed MLE estimate of the true parameter \(\theta^\ast\), and the exploration bonus is essential for balancing exploration and exploitation.

Compared to the non-private UCB, both the point estimate and the exploration bonus are perturbed due to privacy constraints, thus incurring a ``privacy cost." The point estimate, for instance, requires a longer exploration phase \(T_0\) due to smaller values of \(\rho_1\), which demands more time to gather sufficient information for reliable estimates. Additionally, the MLE itself has a higher estimation error compared to its non-perturbed counterpart. This is reflected in the confidence bound \(\alpha_t\), which increases as \(\rho_1\) decreases, showing that there is a larger uncertainty in the point estimate compared to the non-private setting.

The exploration bonus plays a crucial role in balancing the trade-off between exploration and exploitation by allocating a smaller bonus to well-explored items and a larger bonus to under-explored ones. To demonstrate this, we decompose \( V_{t-1} \) as follows:
\[
V_{t-1} = \Sigma_{t-1} + N_{t-1} + 2 \lambda I,
\]
where \(\Sigma_{t-1} = \sum_{n=1}^{t-1}\sum_{i \in S_{n}}x_{ni}x_{ni}^\top\) is the non-perturbed Gram matrix, and \(N_{t-1}\) is the noisy matrix added by \texttt{PrivateCov} (Algorithm \ref{alg: privatecov}) to satisfy \(\rho_2\)-zCDP. Further spectral decomposition yields that 
\[
\alpha_{t} \Vert x_{ti} \Vert_{V_{t-1}^{-1}} = \alpha_{t}  \sqrt{x_{ti}^\top \left(\Sigma_{t-1}+N_{t-1}+2\lambda I \right)^{-1}x_{ti}} = \alpha_{t}  \sqrt{\frac{(x_{ti}^\top v_{1})^{2} }{\lambda_{1}+\eta} + \cdots \frac{(x_{ti}^\top v_{d})^{2} }{\lambda_{d}+\eta}},
\]
where \(v_{i}\) are the eigenvectors of \(V_{t-1}^{-1}\), \(\lambda_{1} \geq \lambda_{2} \geq \cdots \lambda_{d} > 0\) are the eigenvalues of \(\Sigma_{t-1}\), and \(\eta\) represents the eigenvalues of the perturbed matrix \(N_{t-1} + 2\lambda I\).

This decomposition explicitly reveals the cost of privacy. Without the injected noise, the context vector \(x_{ti}\) would project onto the eigenvectors of the true Gram matrix \(\Sigma_{t-1}\), scaled by the eigenvalues. For well-explored context vectors, projections onto larger eigenvectors would result in smaller exploration bonuses, as these are scaled by larger eigenvalues. However, since we use the perturbed matrix \(V_{t-1}\), the projections are instead taken onto the eigenvectors of the noisy matrix to protect user's privacy. Furthermore, the scaling by \(\lambda_{i} + \eta\) inflates the exploration bonus uniformly, particularly when \(\lambda_i\) is smaller than \(\eta\), reducing the algorithm's precision in distinguishing between well-explored and under-explored directions.

Additionally, the confidence width \(\alpha_t\), which depends on both \(\rho_1\) and \(\rho_2\), prolongs the exploration phase, potentially leading to extended periods of suboptimal decision-making and a higher regret. Nevertheless, as exploration progresses, the exploration bonus diminishes, eventually converging to zero as \(\lambda_i\) grows with increasing \(t\). Once sufficient exploration has been conducted, the perturbed MLE closely approximates the true parameter \(\theta^\ast\), enabling effective exploitation of the learned preferences. Consequently, we can expect the regret to approach the rate seen in non-private settings in sufficiently large time horizon $T$.

\subsection{Design and Analysis of Privacy Algorithms}

In this section, we provide detailed explanations on the two subroutines \texttt{PrivateMLE} and \texttt{PrivateCov} used in Algorithm \ref{alg: dpmnl}. 

\subsubsection{Private MLE via Objective Perturbation}
We derive a differentially private MLE using the objective perturbation framework. The objective perturbation algorithm solves: 
\begin{equation}\label{equ: objective perturbation} 
\hat{\theta}(Z) = \argmin_{\theta \in \Theta}\mathcal{L}(\theta; Z,b) = \argmin_{\theta \in \Theta} \sum_{z \in Z} l(\theta;z) + \frac{\Delta}{2}\Vert \theta \Vert_{2}^{2} + b^\top \theta, 
\end{equation} 
where $b \sim N(0,\sigma^{2} I)$ and the parameters $\Delta$ and $\sigma^{2}$ are chosen to satisfy the privacy guarantee.

Objective perturbation is a well-established technique for satisfying DP in optimization problems, yet existing methods, whether for \((\epsilon,\delta)\)-DP \citep{chaudhuri2011differentially, kifer2012private} or \(\rho\)-zCDP \citep{redberg2024improving}, are not directly applicable to our multinomial model due to their assumption that the hessian of the objective function has a rank of at most 1. This condition, applicable in generalized linear models, does not hold in the multinomial logit model, where the Hessian has a higher rank. In addition to this limitation, current methods also diverge from the requirements of our setting. The approaches developed by \cite{chaudhuri2011differentially} and \cite{kifer2012private}, while satisfying \((\epsilon,\delta)\)-DP, do not satisfy  \(\rho\)-zCDP that we aim to achieve. Furthermore, the method proposed by \cite{redberg2024improving}, though aligned with \(\rho\)-zCDP, is formulated for unbounded DP, whereas we require bounded DP.

To address these limitations, we develop a novel objective perturbation algorithm that satisfies bounded \(\rho\)-zCDP and is applicable to multinomial models.

\begin{theorem}\label{thm: objective perturbation}(Bounded $\rho$-zCDP guarantee of objective perturbation). Let $\ell(\theta;z)$ be convex and twice-differentiable with $\Vert \nabla \ell(\theta, z)\Vert_{2} \leq L$ and the eigenvalues of $\nabla^{2}\ell(\theta;z)$ is upper bounded by $\eta$ for all $\theta \in \Theta$ and $z \in Z$. In addition, let $R$ be the rank of the hessian matrix of $l(\theta;z)$. Then the objective perturbation in Equation (\ref{equ: objective perturbation}) satisfies $\rho$-zCDP when 
$\Delta \geq \frac{\eta}{\exp(\frac{(1-q)\rho}{R})-1}$ and $\sigma^{2} \geq \left(\frac{L\left(\sqrt{d+2q\rho}+\sqrt{d}\right)}{q\rho}\right)^{2}$, for any $q \in (0,1)$.
\end{theorem}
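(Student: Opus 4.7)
The plan is to follow the classical objective-perturbation blueprint via change of variables, adapting every step to accommodate (i) the higher-rank per-sample Hessian peculiar to MNL and (ii) bounded-DP adjacency. Fix any bounded neighboring datasets $Z, Z'$ differing only in a single entry $z \leftrightarrow z'$. Because $\mathcal{L}(\theta; Z, b)$ is $\Delta$-strongly convex in $\theta$, the first-order condition $\nabla_\theta \mathcal{L} = 0$ defines a smooth bijection $b \mapsto \hat\theta(b, Z)$ with inverse $b_Z(\theta) = -\nabla_\theta \sum_{z \in Z}\ell(\theta;z) - \Delta\theta$ and Jacobian $J(\theta, Z) = \nabla^2_\theta \sum_{z \in Z}\ell(\theta;z) + \Delta I \succeq \Delta I$. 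A change of variables yields $p_Z(\theta) = \phi_\sigma(b_Z(\theta))\det J(\theta, Z)$, where $\phi_\sigma$ is the $N(0, \sigma^2 I)$ density, so
\begin{equation*}
D_\alpha(p_Z \Vert p_{Z'}) = \frac{1}{\alpha-1}\log \mathbb{E}_{\theta \sim p_Z}\left[\left(\frac{\phi_\sigma(b_Z(\theta))}{\phi_\sigma(b_{Z'}(\theta))}\right)^{\alpha-1}\left(\frac{\det J(\theta, Z)}{\det J(\theta, Z')}\right)^{\alpha-1}\right],
\end{equation*}
which I will split into a deterministic Jacobian bound and a Gaussian MGF bound.

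For the Jacobian factor, I decompose $J(\theta, Z) = J_0 + H_z$ and $J(\theta, Z') = J_0 + H_{z'}$, where $J_0 \succeq \Delta I$ is the contribution of the shared $T-1$ samples and $H_z = \nabla^2 \ell(\theta; z) \succeq 0$ has rank at most $R$ with operator norm at most $\eta$. Because $J_0^{-1/2} H_z J_0^{-1/2}$ inherits rank $\leq R$ and has every nonzero eigenvalue bounded by $\eta/\Delta$, the identity $\log\det(J_0 + H_z) - \log\det J_0 = \log\det(I + J_0^{-1/2} H_z J_0^{-1/2})$ yields $0 \leq \log\det(J_0+H_z) - \log\det J_0 \leq R\log(1+\eta/\Delta)$, and analogously for $H_{z'}$. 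Subtracting gives the uniform bound $|\log(\det J(\theta,Z)/\det J(\theta,Z'))| \leq R\log(1+\eta/\Delta)$, pinned to $(1-q)\rho$ exactly by the choice $\Delta \geq \eta/(\exp((1-q)\rho/R)-1)$. This is the step that departs from Chaudhuri--Monteleoni--Sarwate and Redberg--Wang, which rely on $R=1$ via the matrix determinant lemma; the eigenvalue-counting argument extends cleanly to the multinomial loss even though its per-sample Hessian may have rank up to $K$.

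For the Gaussian factor, I change variables to $b \sim N(0, \sigma^2 I)$ via $\theta = \hat\theta(b, Z)$; under $p_Z$ the vector $b_Z(\theta)$ is distributed exactly as $b$, and $b_{Z'}(\theta) = b + g(b)$ with $g(b) := \nabla \ell(\hat\theta(b,Z); z) - \nabla \ell(\hat\theta(b,Z); z')$ obeying $\Vert g(b) \Vert \leq 2L$. The factor reduces to $\mathbb{E}_b \exp((\alpha-1)(2b^\top g(b) + \Vert g(b)\Vert^2)/(2\sigma^2))$. Using the triangle-type bound $\Vert b + g(b)\Vert^2 - \Vert b\Vert^2 \leq 4L\Vert b\Vert + 4L^2$ together with the Lipschitz-concentration MGF $\mathbb{E}\exp(t\Vert b \Vert) \leq \exp(t\sigma\sqrt{d} + t^2\sigma^2/2)$ (valid since $\Vert b\Vert$ is $1$-Lipschitz in $b$ and $\mathbb{E}\Vert b\Vert \leq \sigma\sqrt{d}$), substitution of $t = 2(\alpha-1)L/\sigma^2$ gives
\begin{equation*}
\frac{1}{\alpha-1}\log \mathbb{E}_b \exp\left(\frac{(\alpha-1)(2b^\top g(b) + \Vert g(b)\Vert^2)}{2\sigma^2}\right) \leq \frac{2L\sqrt{d}}{\sigma} + \frac{2\alpha L^2}{\sigma^2}.
\end{equation*}
Adding the two contributions, $D_\alpha(p_Z\Vert p_{Z'}) \leq (1-q)\rho + 2L\sqrt{d}/\sigma + 2\alpha L^2/\sigma^2$; requiring this to be $\leq \rho\alpha$ uniformly in $\alpha>1$ is equivalent (matching constant and linear-in-$\alpha$ coefficients) to $q\rho \geq 2L\sqrt{d}/\sigma + 2L^2/\sigma^2$, which rearranges by solving the quadratic in $1/\sigma$ to the stated $\sigma \geq L(\sqrt{d+2q\rho} + \sqrt{d})/(q\rho)$.

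The main obstacle is the Jacobian step: existing bounded-DP objective-perturbation results rely on a rank-one Hessian identity and cannot be invoked for MNL, so showing that the $\log\det$ difference still scales as $R\log(1+\eta/\Delta)$ when the per-sample Hessian has rank up to $K$, without inflating the noise beyond what the low-rank case requires, is the crux. A secondary difficulty is the $b$-dependence of $g(b)$ in the Gaussian factor, which prevents a direct Gaussian-mechanism shortcut and forces the use of the Lipschitz-concentration MGF of $\Vert b \Vert$; this is precisely what produces the additive $\sqrt{d}$-term inside the prescribed $\sigma$.
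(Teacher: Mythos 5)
Your proposal follows essentially the same route as the paper's proof: the change of variables through the first-order optimality map $b_Z(\theta)$, the split of the log-ratio into a determinant factor and a Gaussian-density factor, the rank-$R$ eigenvalue argument giving $\log\det$ difference at most $R\log(1+\eta/\Delta)$ (pinned to $(1-q)\rho$), the bound $\Vert b_{Z'}\Vert^{2}-\Vert b_{Z}\Vert^{2}\leq 4L^{2}+4L\Vert b_{Z}\Vert$ from the gradient bound, and the final budget split yielding exactly the stated thresholds for $\Delta$ and $\sigma^{2}$. The one place you genuinely depart is the last expectation step: the paper passes from $\mathbb{E}_{a\sim\hat\theta(Z)}\exp\bigl(c\Vert b(a;Z)\Vert_{2}\bigr)$ to $\exp\bigl(c\,\mathbb{E}\Vert b(a;Z)\Vert_{2}\bigr)$ by invoking Jensen's inequality, which as written is the reversed direction of Jensen and so is not justified there; you instead keep the moment generating function and use the Lipschitz-concentration bound $\mathbb{E}\exp(t\Vert b\Vert)\leq\exp\bigl(t\sigma\sqrt{d}+t^{2}\sigma^{2}/2\bigr)$, which is rigorous, produces the extra $2(\alpha-1)L^{2}/\sigma^{2}$ term, and forces you to verify the bound linearly in $\alpha$ — and this verification reduces to precisely the same quadratic condition $2L\sqrt{d}/\sigma+2L^{2}/\sigma^{2}\leq q\rho$, hence the same $\sigma$. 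So your argument buys a cleaner justification of the paper's Step 3 at no cost in the resulting noise calibration; everything else (including the two-sided versus one-sided determinant bound, which is immaterial) matches the paper's proof in substance.
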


Theorem \ref{thm: objective perturbation} allows for a general rank $R$, unlike previous methods \citep{chaudhuri2011differentially,kifer2012private,redberg2024improving} that assume $R \leq 1$. This makes it applicable to multinomial models, where the rank $R = \min \{d,K-1\}$ of their hessian matrices is usually greater than 1. 
The parameters \(L\), \(\eta\), and the rank \(R\) are used to bound the R\'enyi divergence by controlling the influence of individual data entries. Moreover, Theorem \ref{thm: objective perturbation} is of independent interest, as it extends differential privacy to a wider class of convex functions, enabling private optimization for more complex models beyond previous rank constraints.


The main technical challenge in proving Theorem \ref{thm: objective perturbation} is that we cannot directly apply the commonly used additive noise mechanisms, which typically inject noise proportional to the sensitivity—the maximum change in the output when a single input is altered. These methods achieve differential privacy by obscuring the contribution of individual data points. However, in our case, the complexity of the optimization problem makes it difficult to compute the sensitivity. To overcome this limitation, we leverage an alternative approach based on the first-order optimality condition $ b(a;Z) = -\big(\nabla \sum_{z \in Z} l(\theta;z) + \Delta^\top \theta\big)$
where \( a = \argmin_{\theta \in \Theta} \left(\sum_{z \in Z} l(\theta;z) + \frac{\Delta}{2}\Vert \theta \Vert_{2}^{2} + b^\top \theta\right) \). We then verify the variance of such gaussian noise \( b \) to establish the necessary bounds on the R\'enyi divergence. This method enables us to avoid the need for explicitly calculating the sensitivity.

\begin{algorithm}[htb!]
\caption{\texttt{PrivateMLE}}\label{alg: private mle}
\begin{algorithmic}[1]
    \STATE \textbf{Input:} current time step $t$, dimension of context $d$, assortment size $K$, rank of the hessian matrix $R = \min\{d,K-1\}$, privacy parameter $\rho$
    \STATE Fix any $q \in (0,1)$
    \STATE Set $\Delta = 4\left(\exp\left(\frac{(1-q)\rho_{1}}{RD}\right)-1\right)^{-1}, \sigma_{\text{MLE}}^{2} = \left(\frac{2(\sqrt{d+2q\rho}+\sqrt{d})}{q\rho}\right)^{2}$
    \STATE Sample $b \sim N(0,\sigma_{\text{MLE}}^{2}I)$
    \STATE \textbf{Output: } $\hat{\theta}_{t} =
    \sum_{n=1}^{t}\sum_{i \in S_{n}\cup\{0\}}-y_{ti}\log p_{n}(i \vert S_{n},\theta)+ \frac{\Delta}{2}\Vert \theta \Vert_{2}^{2} + b^\top \theta$
\end{algorithmic}
\end{algorithm}

Algorithm \ref{alg: private mle} gives the tailored version of privatized MLE in our multinomial model by leveraging Theorem \ref{thm: objective perturbation}. In our model, the parameters from Theorem \ref{thm: objective perturbation} can be computed as $L=2,\eta=4$ and $R=\min\{d,K-1\}$, as shown in Section \ref{proof: privatemle}. Moreover, we fix $q=1/2$ in all the numerical studies in Section \ref{sec: numerical study}.
Although Algorithm \ref{alg: private mle} is presented with a generic privacy parameter \(\rho\), its implementation within the \texttt{DPMNL} framework uses a privacy parameter of \(\frac{\rho_{1}}{D_{\text{MLE}}}\), where \texttt{PrivateMLE} is executed at most \(D_{\text{MLE}}\) times.

\begin{corollary}\label{thm: mle privacy}
    Each call of \texttt{PrivateMLE} (Algorithm \ref{alg: private mle}) with privacy parameter $\rho = \frac{\rho_{1}}{D_{\text{MLE}}}$ satisfies $\frac{\rho_{1}}{D_{\text{MLE}}}$-zCDP. Suppose \texttt{PrivateMLE} is called at most $D_{\text{MLE}}$ times in Algorithm \ref{alg: dpmnl}. Then the composition of $D_{\text{MLE}}$ outputs satisfies $\rho_{1}$-zCDP.
\end{corollary}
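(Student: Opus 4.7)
The plan is to prove the corollary in two parts, corresponding to the two sentences in its statement. The first part is a direct application of Theorem 1, once the three structural constants $L$, $\eta$, and $R$ for the MNL negative log-likelihood are verified. The second part is a standard composition argument for $\rho$-zCDP.

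First I would verify that the per-round loss $\ell(\theta;z_n) = -\sum_{i \in S_n \cup \{0\}} y_{ni} \log p_n(i \mid S_n, \theta)$ satisfies the hypotheses of Theorem 1 with $L=2$, $\eta=4$, and $R = \min\{d, K-1\}$. For the gradient bound, I would compute $\nabla_\theta \ell(\theta;z_n) = \sum_{i \in S_n} (p_n(i\mid S_n,\theta) - y_{ni})\, x_{ni}$ and bound its $\ell_2$-norm using $\sum_{i \in S_n} p_n(i\mid S_n,\theta) \le 1$, $\sum_{i \in S_n} y_{ni} \le 1$, and $\|x_{ni}\| \le 1$, which yields $L = 2$. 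For the Hessian bound, I would write $\nabla^2_\theta \ell(\theta;z_n) = X_n^\top (\operatorname{diag}(p_n) - p_n p_n^\top) X_n$ where $X_n$ stacks the context vectors $\{x_{ni}\}_{i \in S_n}$ and $p_n$ is the vector of MNL probabilities restricted to $S_n$. The middle factor $\operatorname{diag}(p_n)-p_n p_n^\top$ has eigenvalues in $[0,1]$, and it is well known that this matrix has rank at most $K-1$ because its kernel contains the all-ones vector. Combining this with $\operatorname{rank}(X_n) \le d$ gives $R = \min\{d, K-1\}$, while the operator-norm bound together with $\|x_{ni}\|\le 1$ gives $\eta \le 4$.

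Next, with $L=2$, $\eta=4$, and $R = \min\{d, K-1\}$, I would substitute into the parameter prescriptions of Theorem 1 applied with privacy parameter $\rho = \rho_1/D_{\text{MLE}}$. This gives exactly the choices $\Delta = 4\bigl(\exp((1-q)\rho_1/(RD_{\text{MLE}})) - 1\bigr)^{-1}$ and $\sigma_{\text{MLE}}^2 = \bigl(2(\sqrt{d+2q\rho} + \sqrt{d})/(q\rho)\bigr)^2$ used in Algorithm 2. Because these are the same $\Delta$ and $\sigma^2$ that Theorem 1 requires, a single invocation of \texttt{PrivateMLE} with parameter $\rho_1/D_{\text{MLE}}$ directly inherits the $(\rho_1/D_{\text{MLE}})$-zCDP guarantee.

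For the second claim, I would invoke the adaptive composition property of zCDP (Proposition 1.4 of \citet{bun2016concentrated}): if mechanisms $\mathcal{M}_1, \ldots, \mathcal{M}_k$ each satisfy $\rho_j$-zCDP, then their adaptive composition satisfies $(\sum_{j=1}^k \rho_j)$-zCDP. Since \texttt{PrivateMLE} is invoked at most $D_{\text{MLE}}$ times in Algorithm 1, each call with privacy budget $\rho_1/D_{\text{MLE}}$, the total privacy cost telescopes to $D_{\text{MLE}} \cdot (\rho_1/D_{\text{MLE}}) = \rho_1$, establishing $\rho_1$-zCDP for the aggregate MLE mechanism.

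The main obstacle I anticipate is the Hessian-rank computation: showing cleanly that the MNL curvature matrix has rank at most $\min\{d, K-1\}$ rather than the naive $d$ bound, since this is what lets Theorem 1 produce a meaningfully smaller $\Delta$ than a rank-$d$ analysis would. Everything else --- the gradient and Hessian norm bounds, the substitution into Theorem 1, and the composition --- is routine once these structural facts about the MNL loss are in hand.
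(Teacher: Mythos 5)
Your proposal is correct and follows essentially the same route as the paper: the paper's proof of this corollary consists precisely of identifying $L=2$, $\eta=4$, and $R=\min\{d,K-1\}$ for the MNL loss (gradient bound, centered-vector form of the Hessian, and the probability-weighted linear dependence among the centered contexts, which is the same fact as your all-ones-kernel claim for $\operatorname{diag}(p_n)-p_np_n^\top$ written in the factored form $X_n^\top(\operatorname{diag}(p_n)-p_np_n^\top)X_n$), and then plugging these constants into Theorem \ref{thm: objective perturbation} with per-call budget $\rho_1/D_{\text{MLE}}$. The composition step you describe is likewise the paper's: additivity of zCDP under composition (Lemma \ref{thm: zcdp_composition}) gives $D_{\text{MLE}}\cdot(\rho_1/D_{\text{MLE}})=\rho_1$.
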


\subsubsection{Private Cov via Tree-based Aggregation}

\texttt{PrivateCov} in Algorithm \ref{alg: privatecov} is designed to continuously output a sequence of noisy gram matrices while controlling the noise accumulation over time. Given the set $S_{t}$ and the context vectors $\{x_{ti}\}_{i \in S{t}}$, Algorithm \ref{alg: privatecov} continuously updates a binary tree structure to efficiently update and release privatized Gram matrices. 

\begin{algorithm}[htb!]
\caption{\texttt{PrivateCov}}\label{alg: privatecov}
\begin{algorithmic}[1]
\STATE \textbf{Input: } privacy parameters $\rho_{2}$, $T$
\STATE Set $\sigma_{\text{Cov}}^{2} = \frac{Km}{\rho_{2}}$, $m = 1+\lfloor \log_{2}T \rfloor$ 
\STATE \textbf{Initialization: } $p(l) = \hat{p}(l) = 0$ for all $l = 0,\cdots,m-1$
\FOR{$t \in [T]$}
\STATE Express $t$ in binary form: $t = \sum_{l=0}^{m-1} \text{Bin}_{t}(l)2^{l}$, where $\text{Bin}_{t}(l) \in \{0,1\}$
\STATE Find index of first one $l_{n} = \min \{l: \text{Bin}_{t}(l)=1\}$
\STATE Update $p$-sums: $p(l_{n}) = \sum_{l < l_{n}}p(l) + \sum_{i \in S_{t}}x_{ti}x_{ti}^\top$ and $p(l) = \hat{p}(l) = 0$ for all $l < l_{n}$
\STATE Inject noise: $\hat{p}(l_{n}) = p(l_{n}) + N \text{ where }N_{ij} = N_{ji} \stackrel{\text{i.i.d.}}{\sim} \mathcal{N}(0,\sigma_{\text{Cov}}^{2}) $
\STATE Release $V_{t} = \sum_{l=0}^{m-1} \text{Bin}_{t}(l)\hat{p}(l)$
\ENDFOR
\end{algorithmic}    
\end{algorithm}

The tree-based aggregation technique \citep{chan2011private} constructs a complete binary tree in an online manner, where each node contains a partial sum combined with independent noise. The leaf nodes correspond to individual data entries, while internal nodes represent aggregated sums of their respective subtrees. The depth of this binary tree is at most \(\lfloor \log_{2} t \rfloor + 1\), which provides two significant advantages in terms of DP and utility.

First, the sensitivity of the noise introduced during the aggregation process, defined as the maximum change in output due to the modification of a single user's data, is constrained by the limited number of affected nodes, bounded by \(\lfloor \log_2 t \rfloor + 1\). This logarithmic bound on sensitivity ensures that the amount of noise required to maintain privacy does not scale linearly with the total number of users \(T\), as would occur in a naive aggregation approach where every user's data affects the output at every step.

Second, at each update at time \(t\), the published output is the sum of at most \(\lfloor \log_2 t \rfloor + 1\) nodes. This constraint limits noise accumulation to a small subset of the total nodes. This controlled aggregation ensures that noise accumulation grows logarithmically with \(t\), preserving data utility while satisfying privacy requirements. If noise were allowed to accumulate unrestricted across all nodes, the resulting degradation in data quality would significantly impair the algorithm’s performance.

By employing the tree-based aggregation technique for Gram matrices, sum statistics over contextual vectors, \texttt{PrivateCov} efficiently privatizes the gram matrices.

\begin{remark}\label{remark: tree and linear bandit}(Comparison with Differentially Private Linear Contextual Bandits). In private linear contextual bandits, the tree-based aggregation technique is widely employed due to its compatibility with the sufficient statistics of UCB. For context vectors \(x_{n} \in \mathbb{R}^{d}\) and feedbacks \(y_{n} \in \mathbb{R}\), these sufficient statistics can be expressed as \((\sum_{n=1}^{t}x_{n}x_{n}^\top, \sum_{n=1}^{t}x_{n}y_{n})\). The least-squares estimator, \((\sum_{n=1}^{t}x_{n}x_{n}^\top)^{-1}\sum_{n=1}^{t}x_{n}y_{n}\), fits naturally into the tree aggregation framework, enabling a straightforward implementation of DP linear contextual bandits without the need for additional privacy mechanisms \citep{shariff2018differentially}. In contrast, the MLE or its sufficient statistics in our multinomial logit setting cannot generally be expressed as simple sum statistics, making the tree-based technique unsuitable for directly privatizing the MLE. To overcome this limitation, we adopt the asynchronous update strategy introduced in Algorithm \ref{alg: dpmnl}, which mitigates noise accumulation in MLE updates by limiting the frequency of updates to \(O(\log T)\). This highlights the new technical difficulty to achieve our privacy-preserving MNL bandit policy.
\end{remark}

\begin{theorem}\label{thm: Cov DP guarantee}
    \texttt{PrivateCov} satisfies \(\rho_{2}\)-zCDP.
\end{theorem}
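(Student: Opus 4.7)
My strategy is to reduce Theorem \ref{thm: Cov DP guarantee} to three standard facts about $\rho$-zCDP --- post-processing, the Gaussian mechanism, and adaptive composition --- after identifying the right sensitivity and unpacking the tree structure. By post-processing, the released matrices $V_{t}$ are deterministic functions of the noisy $p$-sums $\{\hat{p}(l)\}$, so it suffices to prove that the entire sequence $\{\hat{p}(l)\}$ produced across all time steps satisfies $\rho_{2}$-zCDP with respect to any pair of $t$-neighboring datasets.

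First, I would examine the tree aggregation schedule. Since $p(l)$ and $\hat{p}(l)$ are reset to zero whenever a node at a higher level is sealed, each user's contribution $\sum_{i\in S_{t}}x_{ti}x_{ti}^{\top}$ enters exactly one $p$-sum at each level of the binary tree, so at most $m = 1 + \lfloor \log_{2}T \rfloor$ outputs $\hat{p}(l)$ depend on user $t$. On the two neighboring datasets, each such affected $p$-sum differs by the same quantity $\Xi_{t} := \sum_{i\in S_{t}}x_{ti}x_{ti}^{\top} - \sum_{i\in S_{t}'}x_{ti}'(x_{ti}')^{\top}$, whose Frobenius norm is controlled by the standing boundedness assumption on $\|x_{ti}\|$ together with $|S_{t}|=K$.

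Second, I would invoke the Gaussian mechanism for zCDP, adapted to symmetric matrix releases. Because $\hat p(l)=p(l)+N$ with $N_{ij}=N_{ji}\sim\mathcal{N}(0,\sigma_{\mathrm{Cov}}^{2})$, the density of $\hat{p}(l)$ lives on the $d(d+1)/2$-dimensional space of upper-triangular entries, and a direct Rényi-divergence computation between the two Gaussians with shifted means $p(l;U)$ and $p(l;U')$ yields $D_{\alpha}\!\big(\hat{p}(l;U)\,\Vert\,\hat{p}(l;U')\big)=\tfrac{\alpha}{2\sigma_{\mathrm{Cov}}^{2}}\,\|\Xi_{t}\|_{\mathrm{upper}}^{2}$, bounded in turn by $\tfrac{\alpha}{2\sigma_{\mathrm{Cov}}^{2}}\|\Xi_{t}\|_{F}^{2}$. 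Hence each affected $\hat{p}(l)$ individually satisfies $\tfrac{\|\Xi_{t}\|_{F}^{2}}{2\sigma_{\mathrm{Cov}}^{2}}$-zCDP.

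Third, I would apply the additive composition rule for zCDP from \cite{bun2016concentrated}: because at most $m$ noisy $p$-sums depend on user $t$ and they are perturbed with independent Gaussian noise, the joint release is $\tfrac{m\,\|\Xi_{t}\|_{F}^{2}}{2\sigma_{\mathrm{Cov}}^{2}}$-zCDP. Plugging in $\sigma_{\mathrm{Cov}}^{2}=Km/\rho_{2}$ together with the sensitivity bound on $\|\Xi_{t}\|_{F}^{2}$ yields the claimed $\rho_{2}$-zCDP guarantee, which then transfers to the released sequence $\{V_{t}\}$ by post-processing. The main obstacle I anticipate is the calibration in the symmetric-matrix Gaussian mechanism: since $N$ carries only $d(d+1)/2$ independent entries, the effective quadratic form appearing in the Rényi divergence weights diagonal and off-diagonal entries asymmetrically, so one has to choose the sensitivity norm and the variance $\sigma_{\mathrm{Cov}}^{2}$ consistently to recover the clean $\rho_{2}$ bound. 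A secondary subtlety is ensuring that the ``at most $m$ affected nodes'' claim remains valid no matter when the adversary observes each $\hat{p}(l)$; this is handled cleanly by treating the mechanism as an adaptive composition of $m$ independent Gaussians and appealing to the additive composition property of zCDP.
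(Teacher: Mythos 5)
Your overall route is the same as the paper's: identify the per-user change in the released statistics, note that a single user affects at most $m=1+\lfloor\log_{2}T\rfloor$ noisy $p$-sums in the tree, treat each sealed node as a Gaussian mechanism for zCDP (your direct R\'enyi computation on the $d(d+1)/2$ independent upper-triangular coordinates is a fine substitute for the paper's invocation of Lemma \ref{thm: zcdp via gauss noise}), compose additively via Lemma \ref{thm: zcdp_composition}, and pass to the released $V_t$'s by post-processing (Lemma \ref{thm: postprocessing}). All of that matches Steps 1--2 of the paper's argument, and your remarks on the symmetric-noise calibration and on adaptivity are sound.

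The genuine gap is the sensitivity bound, which you invoke but never establish. To land exactly on $\rho_{2}$ from $m\,\|\Xi_{t}\|_{F}^{2}/(2\sigma_{\text{Cov}}^{2})$ with $\sigma_{\text{Cov}}^{2}=Km/\rho_{2}$, you need $\|\Xi_{t}\|_{F}\leq\sqrt{2K}$, and saying this is ``controlled by $\Vert x_{ti}\Vert\leq 1$ and $|S_{t}|=K$'' does not deliver it: the naive triangle-inequality bound only gives $\|\Xi_{t}\|_{F}\leq\big\Vert\sum_{i\in S_{t}}x_{ti}x_{ti}^{\top}\big\Vert_{F}+\big\Vert\sum_{i\in S_{t}'}x_{ti}'(x_{ti}')^{\top}\big\Vert_{F}\leq 2K$, which with the stated noise scale would only certify roughly $2K\rho_{2}$-zCDP, not $\rho_{2}$-zCDP. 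This is precisely the step the paper isolates as a standalone result (Lemma \ref{thm: sensitivity}), proved by expanding the squared Frobenius norm as a trace and exploiting positive semidefiniteness of the two Gram matrices to discard the cross term, rather than by a generic norm inequality; the constant here is delicate (repeated or nearly collinear feature vectors within an assortment are exactly the regime where crude bounds scale like $K$ instead of $\sqrt{K}$), so whichever bound you use must be stated and proved explicitly. In short, your plan reproduces the paper's proof except at the one place where the paper does nontrivial quantitative work, and that piece must be filled in for the claimed $\rho_{2}$ guarantee to follow.
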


Theorem \ref{thm: Cov DP guarantee} leverages the variance of Gaussian random matrices in Algorithm \ref{alg: privatecov}. In our multinomial model with \(K\) items, the noise level scales as \(\sigma_{\text{Cov}}^{2} = \frac{Km}{\rho_{2}}\), which is driven by the sensitivity of the covariance matrix, quantifying the maximum change in output when a single user’s data is modified. Lemma \ref{thm: sensitivity} establishes that the sensitivity in our setting is \(\sqrt{2K}\). This contrasts with linear contextual bandits or GLM bandits, which have a sensitivity of \(\sqrt{2}\) under the assumption \(\Vert x \Vert_{2} \leq 1\) \citep{biswas2020coinpress}, since only one “category” of information per context is involved. The higher sensitivity in our multinomial model reflects the added complexity of capturing multiple item interactions within each assortment, requiring proportionally more noise to guarantee privacy.

Finally, we provide the privacy guarantee of Algorithm \ref{alg: dpmnl}.

\begin{theorem}\label{thm:jdp guarantee} Given the $D_{\text{MLE}}$ composition of \texttt{PrivateMLE} satisfies $\rho_{1}$-zCDP and \texttt{PrivateCov} satisfies $\rho_{2}$-zCDP respectively, \texttt{DPMNL} satisfies $(\rho_{1}+\rho_{2})$-Joint zCDP.
\end{theorem}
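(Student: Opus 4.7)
The plan is to deduce the joint-zCDP guarantee of the full policy as a consequence of three ingredients already in place: (i) the privacy of the two subroutines, namely Corollary \ref{thm: mle privacy} and Theorem \ref{thm: Cov DP guarantee}; (ii) the (adaptive) composition property of $\rho$-zCDP; and (iii) the Billboard Lemma (Lemma \ref{thm: billboard lemma}). The conceptual picture I would work with is to view the collection of all private estimates $\{\hat{\theta}_t\}_{t=1}^T$ and all privatized Gram matrices $\{V_t\}_{t=1}^T$ as the output of one ``master'' mechanism $M$, and then observe that each assortment $S_t$ is simply a post-processing of $M(U)$ together with the current user's own context $U_t=\{x_{ti}\}_{i\in[N]}$, which is exactly the setup needed to invoke the Billboard Lemma.

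First, I would show that $M$ satisfies $(\rho_1+\rho_2)$-zCDP. The sequence of \texttt{PrivateMLE} releases already aggregates to $\rho_1$-zCDP by Corollary \ref{thm: mle privacy}, and the \texttt{PrivateCov} stream is $\rho_2$-zCDP by Theorem \ref{thm: Cov DP guarantee}. By the (adaptive) composition theorem for zCDP from Bun--Steinke 2016, independently noised $\rho_1$- and $\rho_2$-zCDP mechanisms compose to $(\rho_1+\rho_2)$-zCDP, which is what I need. The one point to check carefully here is that the interleaving in Algorithm \ref{alg: dpmnl} is adaptive---whether \texttt{PrivateMLE} is re-invoked in round $t$ depends on the determinant doubling condition $\det(V_t)>2\det(V_{t-1})$---but because this condition is computed purely from the already-released private matrices $V_t, V_{t-1}$, it is a function of past private outputs and falls squarely within the scope of adaptive composition.

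Second, I would invoke the Billboard Lemma. For each round $t\ge T_0+1$, the assortment $S_t=\argmax_{S\subset\mathcal{S}}\tilde{R}_t(S)$ is determined via $z_{ti}=x_{ti}^\top\hat{\theta}_{t-1}+\alpha_t\|x_{ti}\|_{V_{t-1}^{-1}}$ for each $i\in[N]$. This is a deterministic function of user $t$'s own context $U_t=\{x_{ti}\}_{i\in[N]}$ together with the master outputs $(\hat{\theta}_{t-1},V_{t-1})\subset M(U)$; for $t\le T_0$ the assortment is drawn from independent randomness not depending on the data. Hence the whole release $(S_1,\ldots,S_T)$ has the exact form $\{f_t(U_t,M(U))\}_{t=1}^T$ required by Lemma \ref{thm: billboard lemma}, so the lemma upgrades the $(\rho_1+\rho_2)$-zCDP of $M$ to $(\rho_1+\rho_2)$-joint zCDP of \texttt{DPMNL}.

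The main obstacle I anticipate is purely bookkeeping rather than conceptual: one must verify that every value feeding into $f_t$ that is \emph{not} in $U_t$ is truly a function of $M(U)$ and contains no unprotected dependence on other users' raw data. In particular, the determinant-doubling trigger, the privacy-budget split $\rho_1/D_{\mathrm{MLE}}$, and the regularization constant $\lambda$ must all be either pre-specified or derived from the privatized outputs. Once this is checked, the composition theorem and Billboard Lemma combine cleanly to give the desired $(\rho_1+\rho_2)$-joint zCDP guarantee.
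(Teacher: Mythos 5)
Your proposal is correct and follows essentially the same route as the paper: compose \texttt{PrivateMLE} and \texttt{PrivateCov} into a single $(\rho_{1}+\rho_{2})$-zCDP mechanism (Lemma \ref{thm: zcdp_composition}) and then invoke the Billboard Lemma (Lemma \ref{thm: billboard lemma}). The only cosmetic difference is that you fold the assortment computation directly into the deterministic maps $f_{t}$, whereas the paper first applies the Billboard Lemma to the perturbed utilities $\{z_{ti}\}$ and then passes to the assortments via post-processing (Lemma \ref{thm: postprocessing}); your explicit check that the determinant-doubling trigger depends only on already-released private outputs (so adaptive composition applies) is a welcome extra precaution that the paper leaves implicit.
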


\begin{remark}(Privacy considerations during the initial exploration phase).
    During the pure-exploration phase until \(T_0\), each assortment is selected randomly without relying on historical data or individual user contexts. Therefore, there is no risk of privacy leakage from the observed output sequence during this period. Once \(T_0\) is reached, although user-specific information begins to influence the assortments, it is masked by the privacy-preserving subroutines. As a result, all users remain protected, ensuring that their personal information is not exposed throughout the algorithm's operation.
\end{remark}

\section{Regret Analysis}\label{sec: regret analysis}
Regret analysis evaluates the performance of bandit policies by comparing their outcomes to those of an optimal policy with full knowledge of the environment. In our setting, regret measures the cumulative difference between the expected reward of the optimal assortment and that achieved by our privacy-preserving policy, \texttt{DPMNL}. Unlike non-private MNL bandits, our analysis also incorporates the influence of privacy parameters, highlighting the trade-off between privacy and performance.

The regret analysis is based on the following two assumptions that commonly appears in MNL bandit literature.

\noindent
\begin{assumption}\label{ass: sigma zero}
Each feature vector $x_{ti}$ is drawn $i.i.d$ from an unknown distribution $p_{x},$ with $\Vert x_{ti} \Vert \leq 1$ for all $t,i$ and there exists a constant $\sigma_{0} > 0$ such that $\lambda_{\text{min}}\left(\mathbb{E}[x_{ti}x_{ti}^\top]\right) \geq \sigma_{0}.$
\end{assumption}

\begin{assumption}\label{ass: kappa}
There exists $\kappa > 0$ such that for every item $i \in S$ and any $S \in \mathcal{S}$ and all round $t$, $\min_{\Vert \theta - \theta^{\ast}\Vert \leq 1} p_{t}(i\vert S,\theta)p_{t}(0 \vert S,\theta) \geq \kappa.$     
\end{assumption}

Assumption \ref{ass: sigma zero} implies that consumer's context vectors are relatively widely spread and not concentrated in a specific direction. Furthermore, the increase in the minimum eigenvalue of the associated gram matrix results in the convergence of parameter estimation. The $i.i.d$ assumption is common in generalized linear bandit \citep{li2017provably} and MNL contextual bandit \citep{article,oh2019thompson, oh2021multinomial}. Assumption \ref{ass: kappa} is used to guarantee the asymptotic normality of MLE, especially to ensure the Fisher information matrix to be invertible \citep{lehmann2006theory}. This is a standard assumption in MNL contextual bandits \citep{article,chen2020dynamic,oh2019thompson,oh2021multinomial}, which is also equivalent to the standard assumption for the link function in generalized linear contextual bandits \citep{filippi2010parametric, li2017provably}.

\begin{theorem}\label{thm: regret}
Under Assumptions \ref{ass: sigma zero} and \ref{ass: kappa}, the expected cumulative regret of our \texttt{DPMNL} with $T_{0} = \frac{1}{K}\left( \frac{C_{1}\sqrt{d}+C_{2}\sqrt{2\log T}}{\sigma_{0}} \right)^{2} + \frac{2C_{\rho_1,T}}{K \sigma_{0}}$, where $C_{1}$ and $C_{2}$ are some constants and $C_{\rho_{1},T}$ is defined in Lemma \ref{thm: mle bound l2 norm}, can be bounded as: 
\[
\mathcal{R}_{T} \leq \Tilde{O}\left(\left(d+\frac{d^{5/2}}{\rho_{1}}+\frac{d^{3/4}}{\rho_{2}^{1/4}}\right)\sqrt{T}\right).
\]
\end{theorem}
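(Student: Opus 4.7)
The proof plan follows the standard optimism-based regret analysis for MNL contextual bandits, with careful adaptations to handle the two privacy-induced perturbations. First, I decompose
\[
\mathcal{R}_{T} = \sum_{t=1}^{T_{0}}\big(R_{t}(S_{t}^{\ast},\theta^{\ast})-R_{t}(S_{t},\theta^{\ast})\big) + \sum_{t=T_{0}+1}^{T}\big(R_{t}(S_{t}^{\ast},\theta^{\ast})-R_{t}(S_{t},\theta^{\ast})\big).
\]
Since revenues are bounded by one, the pure-exploration portion contributes at most $T_{0}=\Tilde{O}(d/\rho_{1})$ by the stated choice of $T_{0}$, and this gets absorbed into the final $\Tilde{O}$ bound up to logarithmic factors.

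For each $t>T_{0}$, I condition on the good event $\mathcal{E}_{t}$ that $|x_{ti}^{\top}(\hat{\theta}_{t-1}-\theta^{\ast})|\le \alpha_{t}\|x_{ti}\|_{V_{t-1}^{-1}}$ for every $i\in[N]$, which holds with high probability by Lemma \ref{thm: mle bound matrix norm} together with Lemma \ref{thm: pd noisy gram}. On $\mathcal{E}_{t}$, Lemma \ref{thm: oh lemma 3_5} yields optimism $\Tilde{R}_{t}(S_{t}^{\ast})\ge R_{t}(S_{t}^{\ast},\theta^{\ast})$, and since the algorithm selects $S_{t}=\argmax_{S}\Tilde{R}_{t}(S)$, I obtain the per-round inequality
\[
R_{t}(S_{t}^{\ast},\theta^{\ast})-R_{t}(S_{t},\theta^{\ast}) \le \Tilde{R}_{t}(S_{t})-R_{t}(S_{t},\theta^{\ast}).
\]
An MNL-specific mean-value/Lipschitz bound along the line segment from $\hat{\theta}_{t-1}$ to $\theta^{\ast}$ (in the spirit of \cite{oh2021multinomial}) then controls the right-hand side by $C\sum_{i\in S_{t}}\alpha_{t}\|x_{ti}\|_{V_{t-1}^{-1}}$, where $C$ absorbs constants from bounded revenues and the softmax Jacobian.

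Summing over $t>T_{0}$, I apply a generalized elliptical potential lemma to the perturbed Gram matrix $V_{t}=\Sigma_{t}+N_{t}+2\lambda I$ and use Cauchy--Schwarz to obtain an upper bound of the form $\alpha_{T}\sqrt{KT\cdot O(d\log T)}$. A key subtlety is that the regularizer $\lambda$ must dominate the operator-norm of the symmetric Gaussian $N_{t}$, which, by the random-matrix concentration used in Lemma \ref{thm: pd noisy gram}, forces $\lambda=\Omega(\sqrt{d/\rho_{2}})$. Hence $\lambda$ enters $\alpha_{t}$ through the weighted norm $\|\hat{\theta}_{t}-\theta^{\ast}\|_{V_{t-1}}$, while contributing only a logarithmic dependence on $\rho_{2}$ inside the potential sum itself.

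Plugging in the explicit form of $\alpha_{t}$ then separates the three scalings in the theorem: the non-private $\sqrt{d\log T}$ piece of $\alpha_{t}$ combined with the $\sqrt{dT}$ from the elliptical sum produces the $d\sqrt{T}$ term; the \texttt{PrivateMLE} perturbation inflates $\alpha_{t}$ by a factor of order $d^{2}/\rho_{1}$ (tracing back to $\Delta\sim d/\rho_{1}$ in Theorem \ref{thm: objective perturbation} and $\|b\|\sim\sqrt{d}/\rho_{1}$, propagated through Lemma \ref{thm: mle bound l2 norm}), yielding $d^{5/2}\sqrt{T}/\rho_{1}$; and the $\sqrt{\lambda}$-type factor from \texttt{PrivateCov} gives $d^{1/4}/\rho_{2}^{1/4}$, contributing $d^{3/4}\sqrt{T}/\rho_{2}^{1/4}$. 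The hardest step is isolating these two privacy contributions inside $\alpha_{t}$: bounding $\|\hat{\theta}_{t}-\theta^{\ast}\|_{V_{t-1}}$ requires simultaneously controlling the Gaussian perturbation $b$ from the objective perturbation and the noisy weighting matrix $V_{t-1}$, combining random-matrix concentration for $N_{t}$ with the strong convexity induced by the $(\Delta/2)\|\theta\|^{2}$ term in the private MLE objective. A secondary care-point is a union bound over $t$, plus handling the asynchronous update rule so that the MLE actually used at time $t$ inherits a confidence set with only logarithmic inflation in the number of updates $D_{\text{MLE}}=O(d\log KT)$.
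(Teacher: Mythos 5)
Your proposal follows essentially the same route as the paper's proof: the same pure-exploration/learning-phase decomposition, the same high-probability good event built from Lemmas \ref{thm: mle bound l2 norm}, \ref{thm: mle bound matrix norm} and \ref{thm: pd noisy gram}, optimism via Lemma \ref{thm: oh lemma 3_5}, and Cauchy--Schwarz combined with the elliptical-potential bound (Lemma \ref{thm: self normalize}) on the noisy Gram matrix, with the three contributions $d\sqrt{T}$, $d^{5/2}\sqrt{T}/\rho_{1}$ and $d^{3/4}\sqrt{T}/\rho_{2}^{1/4}$ traced exactly as in Equation (\ref{equ: regret}). The only minor discrepancies---bounding the per-round regret by a sum over $S_{t}$ rather than the maximum (a constant $\sqrt{K}$ loss) and describing $T_{0}$ as $\Tilde{O}(d/\rho_{1})$ when $C_{\rho_{1},T}$ in fact scales like $\Tilde{O}(d^{4}/\rho_{1}^{2})$---do not affect the stated rate, since $T_{0}$ remains independent of $T$ up to logarithmic factors.
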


Theorem \ref{thm: regret} provides a simplified regret bound for our privacy-preserving algorithm. The explicit exact upper bound of $\mathcal{R}_{T}$ is provided in Equation (\ref{equ: regret}) of Proof of Theorem \ref{thm: regret}. For MNL contextual bandits with a \(d\)-dimensional contextual vector, \cite{chen2020dynamic} established an \(\Omega(d\sqrt{T}/K)\) lower bound on the cumulative regret, where \(K\) is the maximum number of displayed items. Treating \(K\) as constant, this simplifies to \(\Omega(d\sqrt{T})\), and our result matches this optimal rate in \(T\), confirming that our algorithm achieves near-optimal performance despite the privacy constraints.

The regret bound further reflects the impact of the privacy parameters \(\rho_1\) and \(\rho_2\). The term \(d^{5/2}/{\rho_1}\) indicates that \texttt{PrivateMLE} has a stronger influence on the regret than that of \texttt{PrivateCov}, which contributes \(d^{3/4}/{\rho_2^{1/4}}\). This suggests that a greater share of the privacy budget should be allocated to \texttt{PrivateMLE}, as the accuracy of the MLE becomes more crucial in later stages when exploration diminishes and exploitation becomes dominant.

Finally, the dependency on \(d\) is stronger in our private setting due to the noise added to mask the sensitive information contained in the contextual vectors. The need to ensure JDP requires noise that scales with the dimension of the context vector, leading to a larger influence of \(d\) on the regret.

\begin{remark}
    It is remarkable that the regret in Theorem \ref{thm: regret} has \emph{no} dependency on the total number of items $N$. This is because the context information is used via $d \times d$ gram matrices and the noise is injected into the gram matrices. In this regard, a naive local DP algorithm that utilizes the perturbed context for all $N$ items would result in the regret that \emph{depends} on $N$. It is prohibitive as $N$ is typically large in the real application of assortment selection.
\end{remark}


\section{Numerical Study}
\label{sec: numerical study}

In this section, we evaluate the cumulative regret of our \texttt{DPMNL} policy across varying scenarios, including different privacy parameters $\rho_{1},\rho_{2}$, dimension of contextual vectors $d$, and capacity of the assortment $K$. We begin by examining it using synthetic data
in Section \ref{sec: simulation} and subsequently validate its practicability in a real-world dataset in Section \ref{sec: real data}.

\subsection{Synthetic Data Analysis}\label{sec: simulation}
In this section, we empirically examine the trade-off between privacy parameters and cumulative regret, as well as the effect of privacy budget allocation as outlined in Theorem \ref{thm: regret}. Additionally, we investigate the dependencies on the dimension of the contextual vector \(d\) and the assortment size \(K\). Since this work introduces the first privacy-preserving MNL contextual bandit, there is no direct comparison available in the existing literature. As a result, we construct a benchmark by extending the privacy-preserving GLM bandit approach studied in \cite{chen2022privacy}, adapting it to satisfy \((\epsilon, \delta)\)-JDP in the multinomial model. A significant challenge in adapting existing methods for generalized linear models is that there is no existing objective perturbation result for multinomial model that satisfies \((\epsilon, \delta)\)-DP. To address this, we develop such a mechanism, detailed in Theorem \ref{thm: objective perturbation eps del} of the Appendix. This adaptation allows for a fair comparison between our \(\rho\)-zCDP-based definition and the benchmark \((\epsilon,\delta)\)-JDP policy.

Throughout the experiment, the total number of items is fixed at \(N = 100\), and we assign uniform revenues \( r_{ti} = 1 \) for all items \( i \) and time steps \( t \). This setup aligns with the objective of maximizing the online click-through rate, as discussed in \cite{oh2021multinomial}. Additionally, we use \( c\alpha_{t} \) in place of \( \alpha_{t} \), with \( c = 10^{-4} \), to uniformly reduce the magnitude of the exploration bonus across all items. This adjustment enables the observation of asymptotic behavior even within a shorter time horizon. The dimension of the contextual vector, \( d \), is set to \( d = 5 \) unless we are specifically investigating dependency on \( d \). The contextual vectors are generated from an i.i.d. multivariate Gaussian distribution \( N(0, I_{d}) \), and the true parameter \( \theta^{\ast} \) is sampled from a uniform distribution over \([0, 1]^d\). Similarly, the assortment size \( K \) is set to \( K = 10 \) unless \( K \) is being varied to examine its dependency. For each experimental configuration, we perform 30 independent replicates and report the average cumulative regret along with the corresponding confidence bands.
\begin{figure}[ht]
    \centering
    \includegraphics[width=\textwidth]{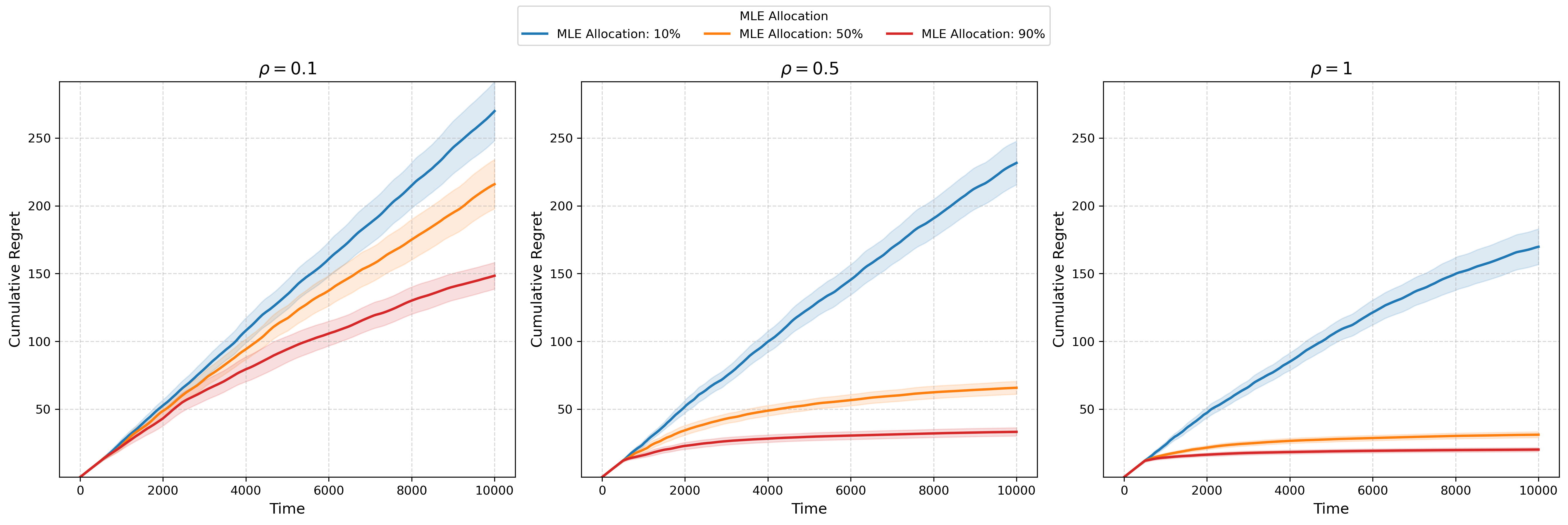}
    \caption{Comparison of cumulative regrets across different level of privacy parameter \( \rho \in \{0.1, 0.5, 1\}\) with each line representing a different allocation of \( \rho \) between \texttt{PrivateMLE} and \texttt{PrivateCov}. For instance, ``MLE Allocation: 90\%" indicates that 90\% of \( \rho \) is allocated to \texttt{PrivateMLE}, while the remaining 10\% is allocated to \texttt{PrivateCov}.}
    \label{fig: rho_comparison}
\end{figure}

In Figure \ref{fig: rho_comparison}, we observe two main findings that align with our theoretical predictions. First, as the privacy level decreases, or as the privacy parameter $\rho$ increases, cumulative regret decreases. This observation is consistent with the typical tradeoff between privacy and utility and supports our theory. Second, at the same privacy level, allocating a larger portion of the privacy budget to \texttt{PrivateMLE} yields better results. This finding also aligns with our theoretical insights, which suggest that \(\rho_{1}\) has a greater impact on the regret bound than \(\rho_{2}\).

\begin{figure}[ht]
    \centering
    \includegraphics[scale=0.43]{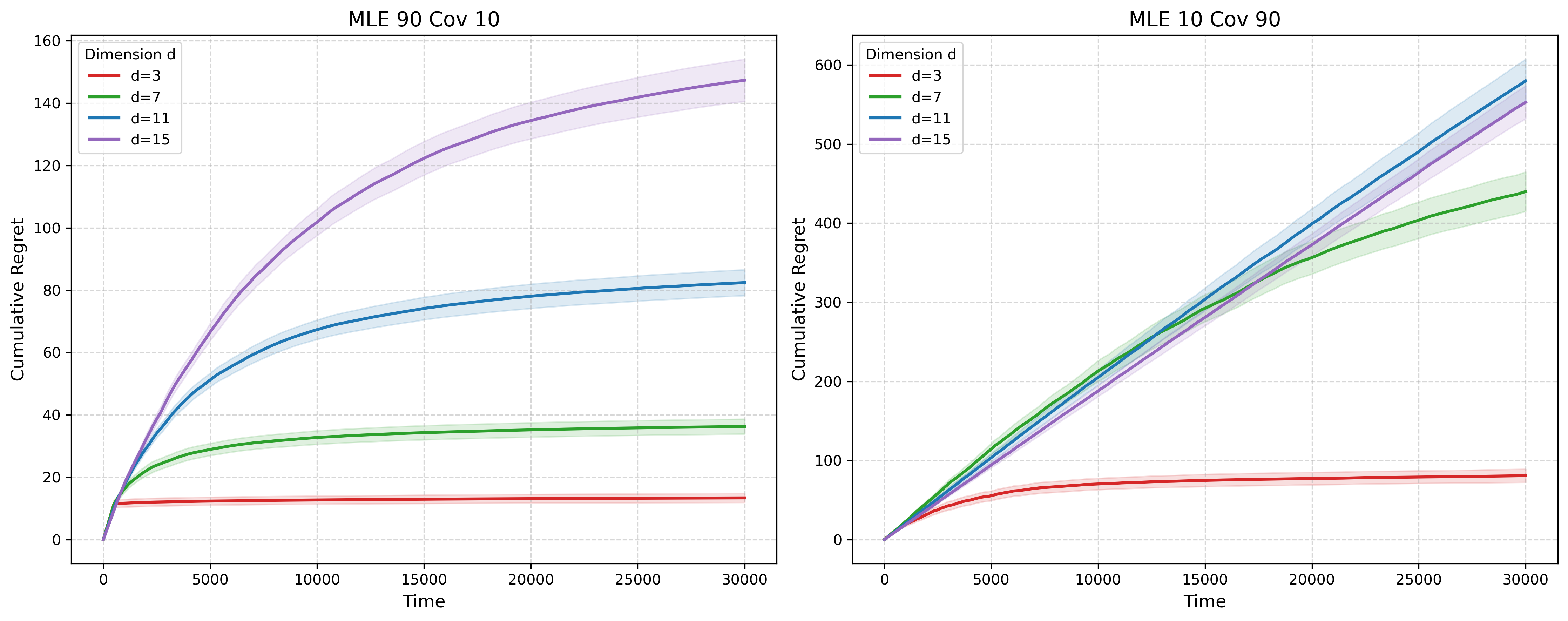}
    \caption{Effect of privacy budget allocation on cumulative regret across different dimensions.}
    \label{fig: simul_d}
\end{figure}

Figure \ref{fig: simul_d} demonstrates the cumulative regret trends for different dimensions of the contextual vector, specifically for \(d = 3, 7, 11,\) and 15, under varying allocations of the privacy budget between \texttt{PrivateMLE} and \texttt{PrivateCov}. The total privacy budget is fixed at \(\rho = 1\). As shown in the left plot of Figure \ref{fig: simul_d}, when 90\% of the privacy budget is allocated to \texttt{PrivateMLE}, the cumulative regret shows a clear sublinear growth across all dimensions, indicating a stabilization of the algorithm’s performance. This aligns with the theoretical upper bound \(R_{T} \leq \Tilde{O}\left(\left(d+\frac{d^{5/2}}{\rho_{1}}+\frac{d^{3/4}}{\rho_{2}^{1/4}}\right)\sqrt{T}\right)\) that allocating more of the privacy budget to \texttt{PrivateMLE}, or increasing \(\rho_{1}\), mitigates the dimensional impact on regret by reducing estimation errors. In contrast, as shown in the right plot of Figure \ref{fig: simul_d}, when only 10\% of the privacy budget is allocated to \texttt{PrivateMLE}, the cumulative regret exhibits a slower sublinear rate, requiring a longer time horizon to observe the sublinear trend for higher dimensions, $d=11,15$. These results empirically validate that a larger allocation to \texttt{PrivateMLE} effectively controls the dimensional impact on regret, thereby enhancing performance under privacy.

\begin{figure}[htbp]
    \centering
    \includegraphics[scale=0.4]{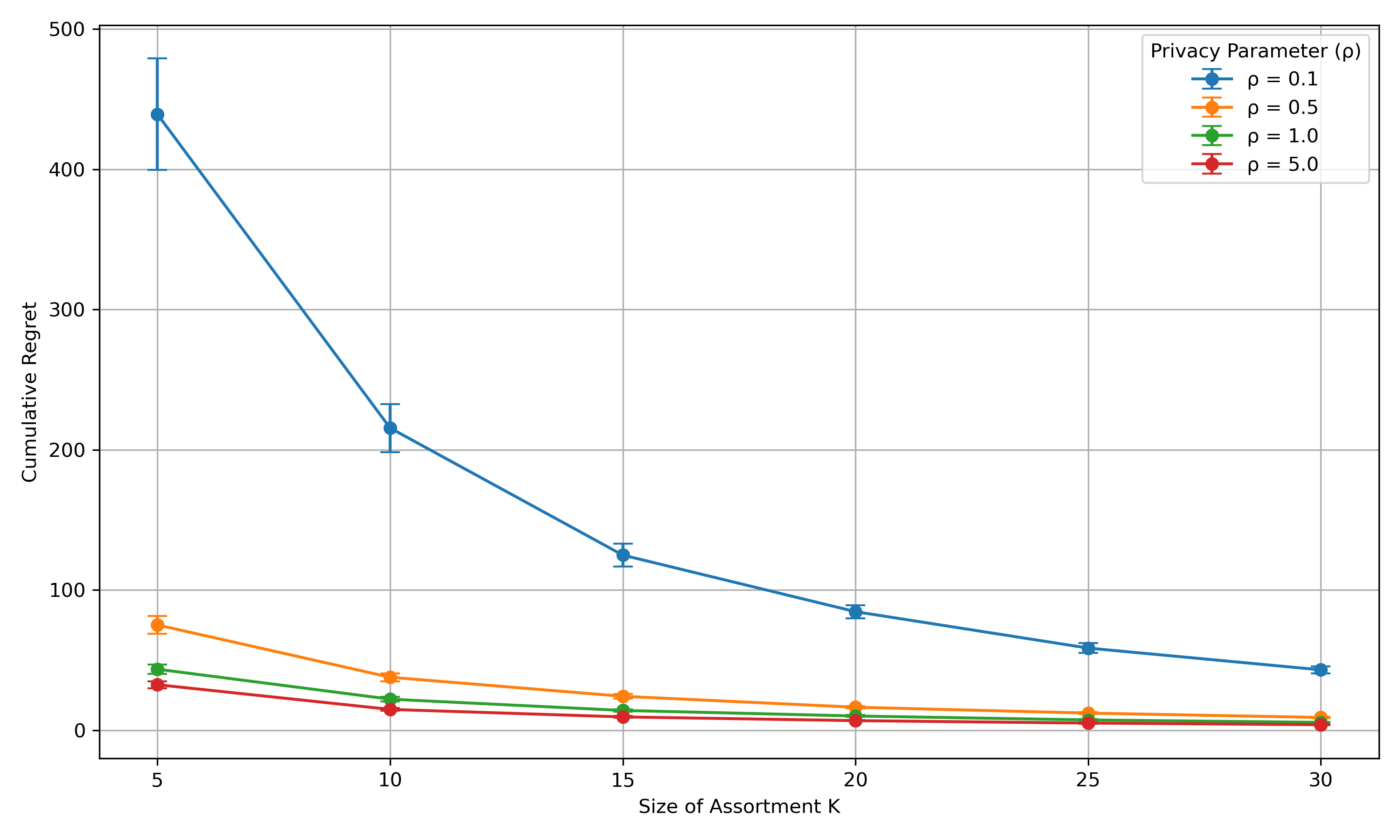}
    \caption{Impact of assortment size \(K\) on cumulative regret under different privacy levels \(\rho \in \{0.1,0.5,1,5\}\).}
    \label{fig: simul_K}
\end{figure}


Moreover, the empirical results presented in Figure \ref{fig: simul_K} demonstrate that increasing the assortment size \(K\) consistently reduces cumulative regret across various levels of the privacy parameter \(\rho\). This effect is most pronounced under stricter privacy settings \(\rho = 0.1\), where cumulative regret declines significantly as \(K\) increases. Note that a larger $K$ contributes to increased perturbation in both \texttt{PrivateMLE} and \texttt{PrivateCov}, as more items expose sensitive information about users, necessitating additional noise for privacy protection. At the same time, a larger \(K\) allows the algorithm to explore more items at a time, which can improve UCB estimates and lead to more accurate decision-making. Figure \ref{fig: simul_K} suggest that in this simulation while increasing \(K\) introduces more noise into the policy, the additional information provided by larger assortments outweighs this drawback, leading to improved performance.

\begin{figure}[htbp]
    \centering
    \includegraphics[width=\textwidth]{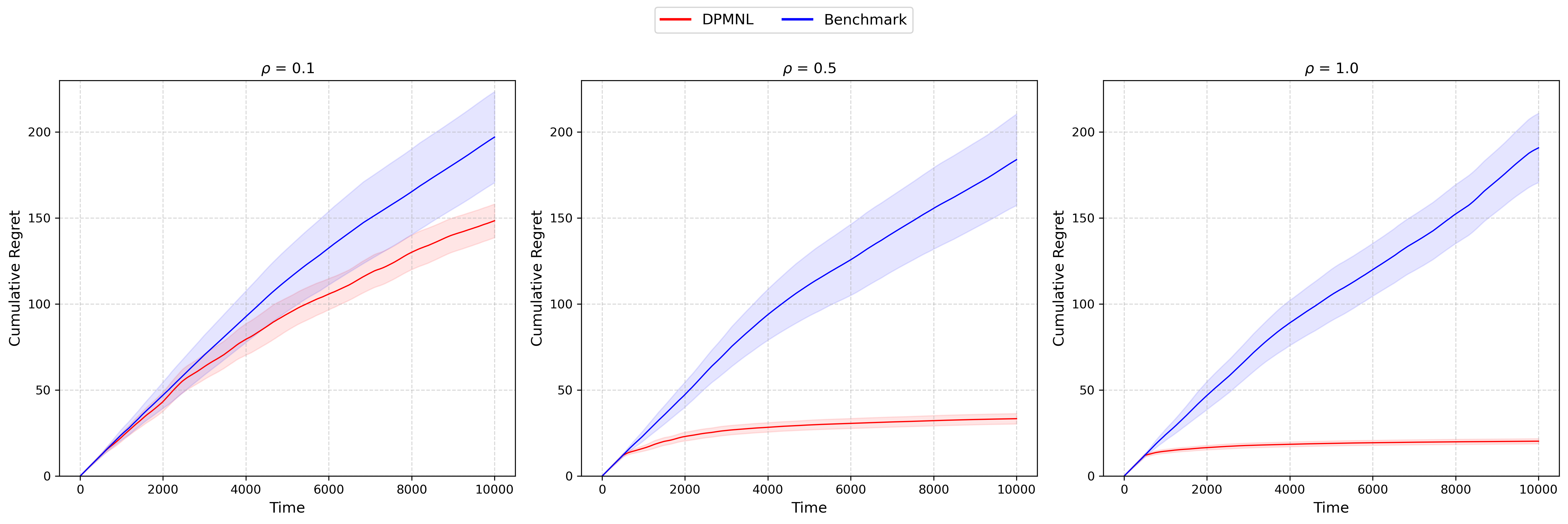}
    \caption{Comparison of cumulative regret between \texttt{DPMNL} and the benchmark across varying levels of privacy parameter $\rho \in \{0.1,0.5,1\}$ on synthetic data.}
    \label{fig: rho_epsdel_comparison}
\end{figure}

Figure \(\ref{fig: rho_epsdel_comparison}\) illustrates the empirical advantage of our \texttt{DPMNL} policy over the benchmark, which is an extension of \cite{chen2022privacy} to our setting. For a fair comparison, on the benchmark, we converted \(\rho\) values to \((\epsilon, \delta)\) using the relation \(\epsilon = \rho + 4\rho \log T\) and \(\delta = 1/T^2\), following Lemma \ref{thm: zcdp conversion}. In both cases, 90\% of the privacy budget was allocated to MLE. Compared to our \texttt{DPMNL}, this benchmark method shows higher cumulative regret, largely due to the looser composition properties of \((\epsilon, \delta)\)-DP, which result in increased injected noise. This highlights the limitations of the benchmark when applied on more complex multinomial model beyond generalized linear models.

\subsection{Real Data Analysis}\label{sec: real data}
In this section, we evaluate our algorithm on the ``Expedia Hotel" dataset \citep{expedia-personalized-sort} to assess its real-world performance. Protecting privacy in hotel booking data is critical, as these records can reveal sensitive details such as travel destinations, preferences, and financial status. Moreover, booking habits might expose personal routines like frequent business trips or vacations. Implementing private mechanisms can help prevent profiling, targeted marketing, and more serious risks like identity theft, thereby upholding ethical standards and fostering trust in the platform.

\subsubsection{Data Pre-processing}
The dataset consists of 399,344 unique searches on 23,715 search destinations each accompanied
by a recommendation of maximum 38 hotels from a pool of 136,886 unique properties. User
responses are indicated by clicks or hotel room purchases. The dataset also includes features for each
property-user pair, including hotel characteristics such as star ratings and location attractiveness,
as well as user attributes such as average hotel star rating and prices from past booking history.

We follow the pre-processing procedure in \cite{lee2024low} to address missing values and conduct feature transformation to aviod outliers. The details are provided in Section \ref{sec: processing expedia} of the Appendix. After pre-processing, we have $T = 4465$ unique searches encompassing  $N = 124$ different hotels, with $d_2 = 10$ hotel features and $d_1$ = 18 user features, where the description is given in Table 1. We normalize each feature to have mean 0 and variance 1. 

\begin{table}[ht]
    \centering
    \begin{tabular}{p{5cm} p{10cm}}
        \hline
        \vspace{0.4cm} \textbf{Item (Hotel) Features} \vspace{0.4cm} & Star rating (4 levels), Average review score (3 levels), Brand of property, Location score, Log historical price, Current price, Promotion \\
        \hline
        \vspace{0.6cm} \textbf{User (Search) Features} \vspace{0.6cm} & Length of stay (4 levels), Time until actual stay (4 levels), \# of adults (3 levels), \# of children (3 levels), \# of rooms (2 levels), Inclusion of Saturday night, Website domain, User historical star rating (3 levels), User historical price (4 levels) \\
        \hline
    \end{tabular}
    \caption{Table of Item and User Features.}
    \label{tab:features}
\end{table}

\subsubsection{Analysis of Expedia Dataset}
The experiment setup includes a scaled confidence width $c\alpha_{t}$ with \(c = 10^{-7}\) and a pure exploration phase of \(T_0 = 10,000\) during the total time horizion $T = 100,000$. Each plot of Figure \ref{fig: expedia} shows the average cumulative regret across 10 independent runs. In our implementation, we allocate 90\% of the privacy budget to \texttt{PrivateMLE} and 10\% to \texttt{PrivateCov}, based on our findings from synthetic data experiments. 

\begin{figure}[ht]
    \centering
    \includegraphics[width=\textwidth]{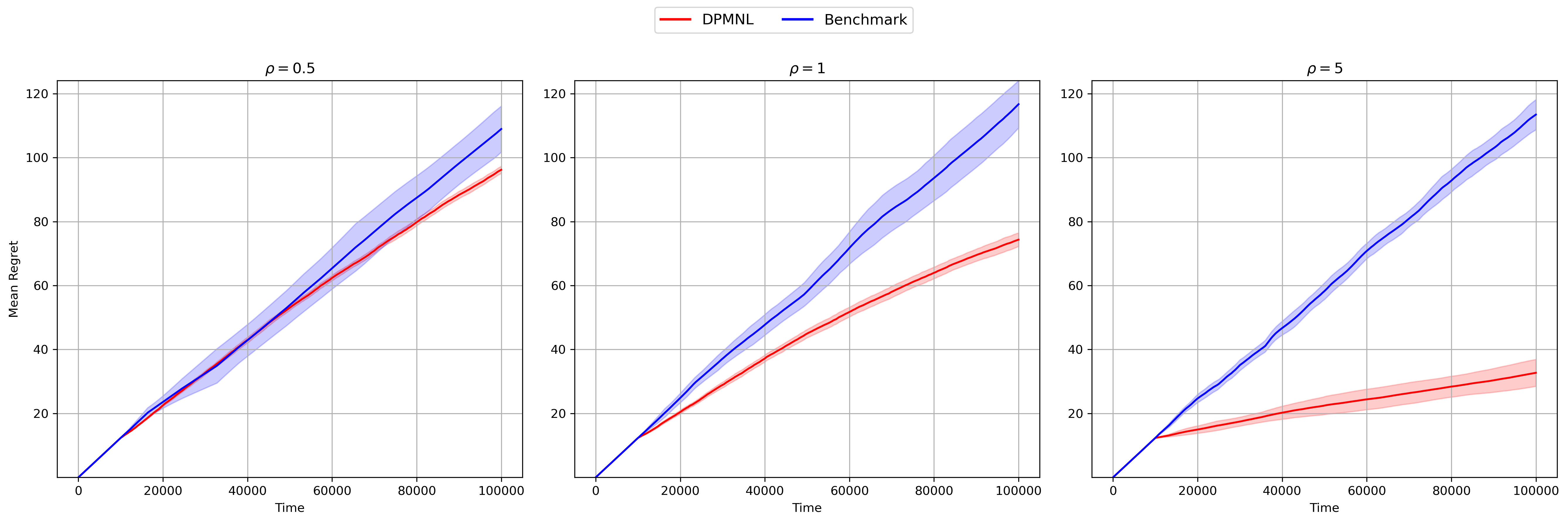}
    \caption{Comparison of cumulative regret between \texttt{DPMNL} and the benchmark across varying levels of privacy parameter $\rho \in \{0.5,1,5\}$ on Expedia dataset.}
    \label{fig: expedia}
\end{figure}

In Figure \ref{fig: expedia}, \texttt{DPMNL} consistently outperforms the benchmark across all \(\rho\) values, with the performance gap widening as \(\rho\) increases. For smaller \(\rho\), neither policy has yet exhibited a sublinear pattern in regret, suggesting that longer time horizon $T$ would be needed to further accentuate the differences. However, with larger privacy budgets (\(\rho = 1, 5\)), \texttt{DPMNL} demonstrates a clear sublinear regret, while the benchmark does not. \texttt{DPMNL} also exhibits significantly lower regret, confirming its superior effectiveness.

These results demonstrate that \texttt{DPMNL} performs robustly across both simulated and real-world data, underscoring its practicality. By leveraging \(\rho\)-joint zCDP in the bandit framework, \texttt{DPMNL} achieves better privacy-utility trade-offs than existing approaches, establishing it as a strong choice for privacy-preserving algorithms in personalized recommendation systems.

\baselineskip=16pt
\bibliographystyle{asa}
\bibliography{ref}

\newpage
\baselineskip=24pt
\setcounter{page}{1}
\setcounter{equation}{0}
\setcounter{section}{0}
\renewcommand{\thesection}{S.\arabic{section}}
\renewcommand{\thelemma}{S\arabic{lemma}}
\renewcommand{\theequation}{S\arabic{equation}}

\begin{center}
{\Large\bf Supplementary Materials} \\
\medskip
{\Large\bf ``Privacy-Preserving Dynamic Assortment Selection"}  \\
\bigskip
\textbf{Young Hyun Cho, Will Wei Sun}
\vspace{0.2in} 
\end{center}
\bigskip

\noindent
In this supplementary material, we provide detailed proofs and additional information to support the results presented in the main text. Specifically, Section \ref{sec: privacy proof} contains comprehensive proofs of our privacy analysis, including Lemma \ref{thm: billboard lemma}, Theorem \ref{thm: mle privacy}, \ref{thm: Cov DP guarantee} and \ref{thm:jdp guarantee}. In Section \ref{sec: regret proof}, we present the proof of Theorem \ref{thm: regret} regarding the regret analysis, along with key supporting lemmas. Section \ref{sec: simulation details} provides the simulation details and definitions used in our numerical studies. Finally, Section \ref{sec: technical lemma} includes technical lemmas that are instrumental in our analysis.

\section{Proofs of Privacy Analysis}\label{sec: privacy proof}
In this section, we provide detailed proofs of the privacy guarantees, starting with the proof of the billboard lemma in Lemma \ref{thm: billboard lemma}. While the billboard lemma is a widely-used tool for JDP \citep{hsu2014private, vietri2020private}, it has been presented in the \((\epsilon, \delta)\)-DP framework.

\begin{lemma}\label{thm: billboard lemma epsdel}
Suppose \( M: \mathcal{U}^{T} \rightarrow \mathcal{R} \) satisfies \((\epsilon,\delta)\)-DP. Consider any set of deterministic functions \( f_{t}:\mathcal{U}_{t} \times \mathcal{R} \rightarrow \mathcal{R}' \), where \(\mathcal{U}_{t}\) represents the data associated with the \(t^{th}\) user. The composition \(\mathcal{M}(U) = \{f_{t}(U_{t}, M(U))\}_{t=1}^{T}\) satisfies \(\epsilon,\delta\)-JDP, where \(U_{t}\) denotes the subset of the full dataset \(U\) corresponding to the \(t^{th}\) user’s data.
\end{lemma}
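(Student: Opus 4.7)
The plan is to reduce the JDP guarantee to the post-processing property of standard $(\epsilon,\delta)$-DP. The proof will closely mirror the zCDP version (Lemma \ref{thm: billboard lemma}), simply substituting the post-processing closure of $(\epsilon,\delta)$-DP in place of zCDP's.

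First, I would fix any $t$-neighboring pair $U, U' \in \mathcal{U}^{T}$. By Definition \ref{def: neighboring data}, $U$ and $U'$ agree on every coordinate except the $t$-th, i.e., $U_{s} = U'_{s}$ for all $s \neq t$. Next, I would introduce a single deterministic post-processing map $g:\mathcal{R} \to (\mathcal{R}')^{T-1}$ defined by $g(r) = \{f_{s}(U_{s}, r)\}_{s \neq t}$. Because $U_{s} = U'_{s}$ whenever $s \neq t$, this same function equivalently satisfies $g(r) = \{f_{s}(U'_{s}, r)\}_{s \neq t}$, so $g$ can be regarded as dataset-independent in the sense that its definition does not distinguish between $U$ and $U'$. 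Consequently, $\mathcal{M}_{-t}(U) = g(M(U))$ and $\mathcal{M}_{-t}(U') = g(M(U'))$ are exactly the outputs of the same post-processing map applied to the base mechanism evaluated on $U$ and $U'$, respectively.

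Finally, I would invoke the $(\epsilon,\delta)$-DP guarantee of $M$ together with the standard post-processing property of $(\epsilon,\delta)$-DP. Since $M$ satisfies $(\epsilon,\delta)$-DP and $U, U'$ are neighboring in the usual DP sense, applying the deterministic map $g$ preserves the guarantee, yielding for every measurable event $E \subseteq (\mathcal{R}')^{T-1}$,
\[
\Pr\bigl[\mathcal{M}_{-t}(U) \in E\bigr] \;\le\; e^{\epsilon}\,\Pr\bigl[\mathcal{M}_{-t}(U') \in E\bigr] + \delta.
\]
Since $t$ and the pair $(U, U')$ were arbitrary, this is precisely the defining inequality of $(\epsilon,\delta)$-JDP, completing the argument.

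The only step requiring care is the observation that $g$ can be defined in a dataset-independent manner on the coordinates $s \neq t$, which is guaranteed by the $t$-neighboring assumption; without this, $g$ would vary between $U$ and $U'$ and post-processing could not be invoked. Beyond this bookkeeping, there is no substantial obstacle, as the argument is essentially immediate from the post-processing closure of $(\epsilon,\delta)$-DP.
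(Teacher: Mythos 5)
Your proof is correct and matches the paper's approach: the paper cites this $(\epsilon,\delta)$ statement to \cite{hsu2014private} and proves its zCDP analogue (Lemma \ref{thm: billboard lemma}) by exactly the same argument, viewing $\mathcal{M}_{-t}(U)$ as a post-processing $f(M(U);U_{-t})$ of the base mechanism with the shared data $U_{-t}$ held fixed and then invoking closure under post-processing. Your observation that the map $g$ is identical for $U$ and $U'$ because they are $t$-neighboring is precisely the point the paper makes as well, so no gap remains.
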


As our paper is the first that writes JDP in a $\rho$-zCDP framework, its validity in our $\rho$-zCDP has not been established before and so we prove that billboard lemma remains to hold in our new framework as well. Moreover, we prove the privacy guarantee on \texttt{PrivateMLE} and \texttt{PrivateCov}. On top of the guarantees, by leveraging the billboard lemma, we prove the privacy guarantee of our main policy \texttt{DPMNL}.

\subsection{Proof of Lemma \ref{thm: billboard lemma}: Billboard Lemma}
Let $U$ and $U'$ be the $t$-neighboring databases so that $\Pi_{j}U = \Pi_{j}U'$ for all $i \neq t$ and $M$ be a mechanism that satisfies $\rho$-zCDP. It is sufficient to show that for all $\alpha >1$,
\[
D_{\alpha}\left(\mathcal{M}_{-t}(U) \Vert \mathcal{M}_{-t}(U')\right) \leq \rho \alpha.
\]

We use the following property:
\begin{lemma}\label{thm: postprocessing}(Invariance to Post-processing\citep{bun2016concentrated})
Let $P$ and $Q$ be distributions on $\Omega$ and let $f: \Omega \rightarrow \Theta$ be a function. Let $f(P)$ and $f(Q)$ be the distributions on $\Theta$ induced by applying $f$ to $P$ or $Q$ respectively. Then
\begin{equation*}
D_{\alpha}\left(f(P) \Vert f(Q)\right) \leq D_{\alpha}\left(P \Vert Q\right). 
\end{equation*}    
\end{lemma}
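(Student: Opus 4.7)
The plan is to derive the post-processing inequality as a consequence of Jensen's inequality applied to the conditional distribution of $X$ given $f(X) = y$. First I would let $\mu$ be any common dominating measure for $P$ and $Q$, with densities $p, q$, and rewrite the R\'enyi divergence in terms of the Hellinger-type integral $H_{\alpha}(P\|Q) := \int p^{\alpha} q^{1-\alpha}\, d\mu$, so that $D_{\alpha}(P\|Q) = \frac{1}{\alpha - 1}\log H_{\alpha}(P\|Q)$. Since $\log$ is monotone, the inequality reduces to showing $H_{\alpha}(f(P)\|f(Q)) \le H_{\alpha}(P\|Q)$ when $\alpha > 1$, with the opposite inequality for $\alpha \in (0,1)$ (which still yields the correct direction after multiplying by the negative factor $1/(\alpha-1)$).

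Next I would express the pushforward densities as
\begin{equation*}
\tilde p(y) = \int_{f^{-1}(y)} p\, d\mu, \qquad \tilde q(y) = \int_{f^{-1}(y)} q\, d\mu,
\end{equation*}
and aim to establish the pointwise bound
\begin{equation*}
\tilde p(y)^{\alpha}\, \tilde q(y)^{1-\alpha} \le \int_{f^{-1}(y)} p(x)^{\alpha} q(x)^{1-\alpha}\, d\mu(x)
\end{equation*}
for every $y$ with $\tilde q(y) > 0$. Integrating both sides over $y$ against the appropriate reference measure then yields exactly $H_{\alpha}(f(P)\|f(Q)) \le H_{\alpha}(P\|Q)$, completing the reduction.

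For the pointwise inequality, I would introduce the conditional probability measure $d\nu_{y} = q\, \mathbf{1}_{f^{-1}(y)}\, d\mu / \tilde q(y)$ on the fiber $f^{-1}(y)$ and the likelihood ratio $r = p/q$. Then $\mathbb{E}_{\nu_{y}}[r] = \tilde p(y)/\tilde q(y)$ and $\mathbb{E}_{\nu_{y}}[r^{\alpha}] = \tilde q(y)^{-1}\int_{f^{-1}(y)} p^{\alpha} q^{1-\alpha}\, d\mu$. Jensen's inequality applied to the convex function $t \mapsto t^{\alpha}$ (for $\alpha > 1$) gives $(\mathbb{E}_{\nu_{y}}[r])^{\alpha} \le \mathbb{E}_{\nu_{y}}[r^{\alpha}]$, which after multiplying through by $\tilde q(y)^{\alpha-1}$ is precisely the desired pointwise bound. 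For $\alpha \in (0,1)$ the function is concave so Jensen reverses, which correctly matches the reversal coming from $1/(\alpha-1) < 0$.

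The main obstacle is measure-theoretic bookkeeping rather than analytic difficulty: one must set up the conditional measures $\nu_{y}$ rigorously (via disintegration or regular conditional probability on a standard Borel space), handle the null set $\{y : \tilde q(y) = 0\}$ where $\tilde p(y)$ must also vanish whenever $P \ll Q$, and adopt the convention $0 \cdot \infty = 0$ in $r^{\alpha}$ on the set where $q = 0$. Once these technicalities are dispatched, the heart of the argument is a single application of Jensen's inequality to the convex power $t \mapsto t^{\alpha}$.
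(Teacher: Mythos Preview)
Your argument is correct and is the standard route to the data-processing inequality for R\'enyi divergence: reduce to the Hellinger integral, condition on the fiber $f^{-1}(y)$, and apply Jensen to the convex power $t\mapsto t^{\alpha}$. The paper itself does not supply a proof of this lemma; it is simply quoted from \cite{bun2016concentrated} and used as a black box in the proof of the billboard lemma. So there is no ``paper's own proof'' to compare against, but your sketch matches the argument one finds in Bun--Steinke (and in van Erven--Harremo\"es), including the correct handling of the sign flip for $\alpha\in(0,1)$ and the measure-theoretic caveats about disintegration and null sets.
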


Note then $\mathcal{M}_{-t}(U)$ can be regarded as $\mathcal{M}_{-t}(U) = f\left(M(U);U_{-t}\right)$ for some function $f$, where $U_{-t}$ is treated as constant since it is known for the adversaries. Therefore,
\begin{equation*}
    \begin{split}
        D_{\alpha}\left(\mathcal{M}_{-t}(U) \Vert \mathcal{M}_{-t}(U')\right) &= D_{\alpha}\left(f(M(U))\Vert f(M(U'))\right)\\
        &\leq D_{\alpha}\left(M(U)\Vert M(U')\right) \leq \rho\alpha,
    \end{split}
\end{equation*}
where the first inequality is by Lemma \ref{thm: postprocessing}.

Intuitively, $U_{-t} := \{\Pi_{j}U\}_{j \neq t} = \{\Pi_{j} U'\}_{j \neq t}$ is already a public knowledge for adversaries and does not contain the target user $t$. Therefore, adversaries can't gain further information on user $t$ by the transformation using $U_{-t}$.

\subsection{Proof of Theorem \ref{thm: mle privacy}: Objective Perturbation}

Recall the objective perturbation is
\begin{equation*}
    \mathcal{L}(\theta;\mathcal{Z},b) = \sum_{z \in \mathcal{Z}}\ell(\theta,z) + \frac{\Delta}{2}\Vert \theta \Vert_{2} + b^\top \theta,
\end{equation*} and $\hat{\theta}(Z)$ is the minimizer, where $b \sim N(0,\sigma^{2})$.

Let $\alpha > 1, q \in (0,1)$ be given and take $Z,Z'$ be neighboring datasets that share $T-1$ individuals but only differ in one individual. Without loss of generality, we take $T$-neighboring databases by $Z = (Z_{-T},z)^\top$ and $Z' = (Z_{-T},z')^\top$. Given the neighboring databases $Z$ and $Z'$, we aim to show 
\begin{equation*}
    D_{\alpha}\left(\hat{\theta}(Z) \Vert \hat{\theta}(Z')\right) \leq \rho \alpha,
\end{equation*}
where $\Delta \geq \frac{\eta}{\exp(\frac{(1-q)\rho}{R})-1}$ and $\sigma^{2} \geq \left(\frac{L\left(\sqrt{d+2q\rho}+\sqrt{d}\right)}{q\rho}\right)^{2}$, as specified in Theorem \ref{thm: mle privacy}.

Define $R(a) = \frac{\text{pdf}_{\hat{\theta}(Z)}(a)}{\text{pdf}_{\hat{\theta}(Z)}(a)}$, where $\text{pdf}_{\hat{\theta}(Z)}(\cdot)$ denotes the pdf of random variable $\hat{\theta}(Z)$ and $\Tilde{L}(a) = \log R(a)$. Then 
\begin{equation*}
\begin{split}
    D_{\alpha}\left(\hat{\theta}(Z) \Vert \hat{\theta}(Z')\right) \leq \rho \alpha &= \frac{1}{\alpha-1}\log \mathbb{E}_{a \sim \hat{\theta}}\left[(R(a))^{(\alpha-1)}\right] \\
    &= \frac{1}{\alpha-1}\log \mathbb{E}_{a \sim \hat{\theta}}\left[e^{(\alpha-1)\Tilde{L}(a)}\right]
\end{split}
\end{equation*}
Therefore, we focus mainly on getting the upper bound of $\Tilde{L}(a)$.

Observe that 
\begin{equation*}
        \Tilde{L}(a) = \log \left\vert \frac{\text{det}(\nabla b(a;Z')}{\text{det}(\nabla b(a;Z)} \right\vert \frac{v(b(a;Z);\sigma)}{v(b(a;Z');\sigma)},
\end{equation*}
where $v$ denotes the pdf of Gaussian random variable with mean $0$ and standard deviation $\sigma$.

\noindent
\textbf{Step 1. Bound the ratio of determinants.}
To bound the ratio of determinants, we start by defining \( A = -\nabla b(a; Z') = \sum_{z \in Z'} \nabla^{2}\ell(a; z) + \Delta I \), \( B = -\nabla b(a; Z) = \sum_{z \in Z} \nabla^{2}\ell(a; z) + \Delta I \), and \( C = -\nabla b(a; Z_{-n}) = \sum_{z \in Z_{-n}} \nabla^{2}\ell(a; z) + \Delta I \), so that \( A = C + \nabla^{2}\ell(a; z') \) and \( B = C + \nabla^{2}\ell(a; z) \). Then,
\begin{equation*}
    \begin{split}
        \left\vert \frac{\text{det}(\nabla b(a; Z'))}{\text{det}(\nabla b(a; Z))} \right\vert &= \left\vert \frac{\text{det}(A)}{\text{det}(B)} \right\vert = \left\vert \frac{\text{det}(C + \nabla^{2}\ell(a; z'))}{\text{det}(C + \nabla^{2}\ell(a; z))} \right\vert \\
        &= \left\vert \frac{\text{det}(C)\text{det}(I + C^{-1} \nabla^{2} \ell(a; z'))}{\text{det}(C)\text{det}(I + C^{-1} \nabla^{2}\ell(a; z))} \right\vert \\
        &= \left\vert \frac{\text{det}(I + C^{-1} \nabla^{2}\ell(a; z'))}{\text{det}(I + C^{-1} \nabla^{2}\ell(a; z))} \right\vert \\
        &\leq \left\vert \text{det}(I + C^{-1} \nabla^{2}\ell(a; z')) \right\vert \\
        &\leq \left(1 + \frac{\eta}{\Delta}\right)^{R} = \left(\frac{\Delta + \eta}{\Delta}\right)^{R}.
    \end{split}
\end{equation*}

To justify the first inequality, note that all the eigenvalues of \( C^{-1}\nabla^{2}\ell(a; z) \) are nonnegative because it is positive semidefinite. This is due to \( C^{-1} \) being positive definite and the Hessian matrix \( \nabla^{2}\ell(a; z) \) being positive semidefinite, as it corresponds to a convex function. Thus, all eigenvalues of \( I + C^{-1}\nabla^{2}\ell(a; z) \) are greater than or equal to 1, implying that \( \text{det}(I + C^{-1}\nabla^{2}\ell(a; z)) \geq 1 \). For the second inequality, note that the positive semidefinite matrix \( C^{-1}\nabla^{2}\ell(a; z') \) has rank at most \( R \), which implies that it has at most \( R \) nonzero eigenvalues. Additionally, since \( C = -\nabla b(a; Z_{-n}) = \sum_{z \in Z_{-n}} \nabla^{2}\ell(a; z) + \Delta I \), the eigenvalues of \( C \) are bounded below by \( \Delta \), making the eigenvalues of \( C^{-1} \) at most \( \frac{1}{\Delta} \). Moreover, we assume that the eigenvalues of \( \nabla^2 \ell(a; z') \) are bounded above by \( \eta \). Combining these facts, the eigenvalues of \( C^{-1}\nabla^{2}\ell(a; z') \) are bounded between \( 0 \) and \( \frac{\eta}{\Delta} \), with at most \( R \) nonzero eigenvalues.

\noindent
\textbf{Step 2. Bound the ratio of gaussian pdfs}.
To bound the remaining, we have
\begin{equation*}
\begin{split}
    \log \frac{v(b(a;Z);\sigma)}{v(b(a;Z');\sigma)} &= \log \frac{e^{-\frac{1}{2\sigma^{2}}\Vert b(a;Z)\Vert_{2}^{2}}}{e^{-\frac{1}{2\sigma^{2}}\Vert b(a;Z')\Vert_{2}^{2}}}= \frac{1}{2\sigma^{2}}\left(\Vert b(a;Z')\Vert_{2}^{2} - \Vert b(a;Z)\Vert_{2}^{2}\right).
\end{split}
\end{equation*}

Since $b(a;Z') = b(a;Z) - \nabla \ell(a;Z) + \nabla \ell(a;Z')$, we have 
\begin{equation*}
    \begin{split}
        \Vert b(a;Z')\Vert_{2}^{2} &= \Vert b(a;Z) - \nabla \ell(a;Z) + \nabla \ell(a;Z') \Vert_{2}^{2} \\   
        &\leq \Vert b(a;Z) \Vert_{2}^{2} + \Vert \nabla \ell(a;Z') - \nabla \ell(a;Z) \Vert_{2}^{2} + 2 b(a;Z)^\top [\nabla \ell(a;z;) - \nabla \ell(a;z')]\\
        &\leq \Vert b(a;Z) \Vert_{2}^{2} + 4L^{2} + 2 \Vert b(a;Z) \Vert \Vert \nabla \ell(a;z;) - \nabla \ell(a;z')\Vert\\
        &\leq \Vert b(a;Z) \Vert_{2}^{2} + 4L^{2} + 4 L \Vert b(a;Z) \Vert, \\
        \end{split}
\end{equation*}
where the first and the third inequalities are due to the assumption on the upper bound on the $\ell_{2}-$ norm of gradient and the second inequality is by Cauchy-Schwartz inequality. 

So far we have 
\begin{equation*}
    \begin{split}
        \Tilde{L}(a) &\leq \log \left\vert \frac{\text{det}(\nabla b(a;Z')}{\text{det}(\nabla b(a;Z)} \right\vert + \frac{1}{2\sigma^{2}}\left(4L^{2}+4L\Vert b(a;Z) \vert \right) \\
        &= R\log \left(\frac{\Delta+\eta}{\Delta}\right) + \frac{1}{2\sigma^{2}}\left(4L^{2}+4L\Vert b(a;Z) \Vert \right).
    \end{split}
\end{equation*}

\noindent
\textbf{Step 3. Bound the R\'enyi divergence}.
To establish an upper bound for the R\'enyi divergence, we first analyze the norm of \( b(a; Z) \). Given that \( a \sim \hat{\theta}(Z) \), meaning the estimator is based on the dataset \( Z \), it follows that \( \Vert b(a; Z) \Vert_2 \sim \sigma \chi(d) \). Using Lemma \ref{thm: chi distn bound}, we can then conclude:
\[
\mathbb{E}_{a \sim \hat{\theta}(Z)} \Vert b(a; Z) \Vert_2 \leq \sigma \sqrt{d}.
\]
With this bound in place, we can now proceed to calculate the expectation and bound the R\'enyi divergence.

\begin{equation*}
\begin{split}
    &D_{\alpha}\left(\hat{\theta}(Z) \Vert \hat{\theta}(Z')\right)\\
    &= \frac{1}{\alpha -1}\log \mathbb{E}_{a \sim \hat{\theta}(Z)}\left[e^{(\alpha-1)\Tilde{L}(a)}\right]\\
    &\leq \frac{1}{\alpha -1}\log \mathbb{E}_{a \sim \hat{\theta}(Z)}\left[\exp\left\{(\alpha -1)\left(R\log \left(\frac{\Delta+\eta}{\Delta}\right)+\frac{2L^{2}}{\sigma^{2}}\right)\right\}\exp \frac{(\alpha-1)2L}{\sigma^{2}}\Vert b(a;Z) \Vert_{2}\right]\\
    &= \frac{1}{\alpha-1}\log\left[\exp\left\{(\alpha-1)\left(R\log\left(\frac{\Delta+\eta}{\Delta}\right)+\frac{2L^{2}}{\sigma^{2}}\right)\right\}\mathbb{E}_{a \sim \hat{\theta}(Z)}\exp\left\{\frac{2(\alpha-1)L}{\sigma^{2}}\Vert b(a;Z) \Vert_{2}\right\}\right] \\
    &\leq \frac{1}{\alpha -1}\log \left[\exp \left\{(\alpha-1)\left(R\log\left(\frac{\Delta+\eta}{\Delta}\right)+\frac{2L^{2}}{\sigma^{2}}\right)\right\}\exp\left(\frac{2(\alpha -1)L}{\sigma^{2}}\mathbb{E}_{a \sim \hat{\theta}(Z) }\Vert b(a;Z) \Vert_{2}\right)\right] \\
    &\leq \frac{1}{\alpha -1}\log \left[\exp \left\{(\alpha-1)\left(R\log\left(\frac{\Delta+\eta}{\Delta}\right)+\frac{2L^{2}}{\sigma^{2}}\right)\right\}\exp\left(\frac{2(\alpha -1)L}{\sigma}\sqrt{d}\right)\right]\\
    &= R\log \left(\frac{\Delta+\eta}{\Delta}\right)+\frac{2L^{2}}{\sigma^{2}}+\frac{2L\sqrt{d}}{\sigma}\\
    &\leq \alpha \left(R\log \left(\frac{\Delta+\eta}{\Delta}\right)+\frac{2L^{2}}{\sigma^{2}}+\frac{2L\sqrt{d}}{\sigma}\right),
\end{split}
\end{equation*}
where the second inequality is due to Jensen's inequality and the third inequality is because $\Vert b(a;Z) \Vert_{2}$ is $\sigma \chi (d)$ distributed random variable given the underlying data $Z$ and we observed the upper bound above. 

Finally, some algebra show that $\Delta = \eta \left(\exp\left(\frac{(1-q)\rho}{R}\right)-1\right)^{-1}$ and $\sigma^{2} = \left(\frac{L(\sqrt{d+2q\rho}+\sqrt{d})}{q\rho}\right)^{2}$ result in $R\log \left(\frac{\Delta+\eta}{\Delta}\right)=(1-q)\rho$ and $\frac{2L^{2}}{\sigma^{2}}+\frac{2L\sqrt{d}}{\sigma}=q\rho$ respectively, so that we have the upper bound as $\alpha \rho$, as desired.

\subsection{Proof of Corollary \ref{thm: mle privacy}: Privacy Guarantee of \texttt{PrivateMLE}}
It is sufficient to identity $L, \eta$ and $R$ in our multinomial model.\label{proof: privatemle}

\noindent

\noindent
\textbf{Identifying L: bound on the gradient}.
We begin with rewriting the log-likelihood with denoting the chosen item at time \( t \) as \( i^* \). Therefore, \( y_{ti^*} = 1 \) and \( y_{ti} = 0 \) for \( i \neq i^* \) and $\ell_t(\theta) = -\log \left( \frac{\exp(x_{ti^*}^\top \theta)}{1 + \sum_{j \in S_t} \exp(x_{tj}^\top \theta)} \right)$.

This can be rewritten as:
\[ \ell_t(\theta) = -x_{ti^*}^\top \theta + \log \left( 1 + \sum_{j \in S_t} \exp(x_{tj}^\top \theta) \right).\]

Then the gradient of the negative log-likelihood function at time \( t \) with respect to \( \theta \) is:
\begin{equation*}
    \begin{split}
    \nabla_\theta \ell_t(\theta) &= -x_{ti^*} + \sum_{j \in S_t} \frac{\exp(x_{tj}^\top \theta)}{1 + \sum_{k \in S_t} \exp(x_{tk}^\top \theta)} x_{tj}\\
    &= -x_{ti^*} + \sum_{j \in S_t} p_t(j | S_t, \theta) x_{tj}.
    \end{split}
\end{equation*}

Therefore, the \( \ell_2 \) norm of the gradient can be bounded as:
\begin{equation*}
\|\nabla_\theta \ell_t(\theta)\|_2 \leq \|x_{ti^*}\|_2 + \left\| \sum_{j \in S_t} p_t(j | S_t, \theta) x_{tj} \right\|_2 \leq 1 + 1 = 2.
\end{equation*}
The first inequality is by the triangle inequality. The second inequality is by the Cauchy-Schwarz inequality that gives $\left\| \sum_{j \in S_t} p_t(j | S_t, \theta) x_{tj} \right\|_2 \leq \sum_{j \in S_t} p_t(j | S_t, \theta) \|x_{tj}\|_2 $ and \( \|x_{tj}\|_2 \leq 1 \) for all \( j \in S_t \) and \( \sum_{j \in S_t} p_t(j | S_t, \theta) = 1 \).

\noindent
\textbf{Identifying $\eta$: bound on the eigenvalue of the Hessian} 
The Hessian of the negative log-likelihood function at time \( t \) is given by
\[
\nabla_\theta^2 \ell_t(\theta) = \sum_{j \in S_t} p_t(j \mid S_t, \theta) \left( x_{tj} - \bar{x}_t \right) \left( x_{tj} - \bar{x}_t \right)^\top,
\]
where \( \bar{x}_t = \sum_{j \in S_t} p_t(j \mid S_t, \theta) x_{tj} \).

To bound the largest eigenvalue of this matrix, \( \lambda_{\max}(\nabla_\theta^2 \ell_t(\theta)) = \sup_{\Vert u \Vert_{2}=1} u^\top \nabla_\theta^2 \ell_t(\theta) u \), we begin by considering the quadratic form for any unit vector \( u \in \mathbb{R}^d \):
\[
u^\top \nabla_\theta^2 \ell_t(\theta) u = \sum_{j \in S_t} p_t(j \mid S_t, \theta) \left[ u^\top (x_{tj} - \bar{x}_t) \right]^2.
\]
To bound \( \left[ u^\top (x_{tj} - \bar{x}_t) \right]^2 \), we have $\left| u^\top (x_{tj} - \bar{x}_t) \right| \leq \| u \|_2 \| x_{tj} - \bar{x}_t \|_2 \leq \| x_{tj} - \bar{x}_t \|_2,
$
where the first inequality follows from Cauchy-Schwarz, and the second inequality holds because \( u \) is a unit vector satisfying \( \Vert u \Vert_2 = 1 \).

Given that \( \| x_{tj} \|_2 \leq 1 \) for all \( j \), we can bound \( \| \bar{x}_t \|_2 \) as follows:
\[
\| \bar{x}_t \|_2 = \left\| \sum_{j \in S_t} p_t(j \mid S_t, \theta) x_{tj} \right\|_2 \leq \sum_{j \in S_t} p_t(j \mid S_t, \theta) \| x_{tj} \|_2 \leq \sum_{j \in S_t} p_t(j \mid S_t, \theta) = 1.
\]
Therefore, we can bound \( \| x_{tj} - \bar{x}_t \|_2 \) as
\[
\| x_{tj} - \bar{x}_t \|_2 \leq \| x_{tj} \|_2 + \| \bar{x}_t \|_2 \leq 1 + 1 = 2.
\]

Combining these results gives
\[
u^\top \nabla_\theta^2 \ell_t(\theta) u = \sum_{j \in S_t} p_t(j \mid S_t, \theta) \left[ u^\top (x_{tj} - \bar{x}_t) \right]^2 \leq \sum_{j \in S_t} p_t(j \mid S_t, \theta) \| x_{tj} - \bar{x}_t \|_2^2 \leq \sum_{j \in S_t} p_t(j \mid S_t, \theta) \cdot 4 = 4.
\]
Therefore, the largest eigenvalue of the Hessian is bounded by
\[
\lambda_{\max}\left( \nabla_\theta^2 \ell_t(\theta) \right) \leq 4.
\]

\noindent
\textbf{Identifying $R$: rank of the hessian matrix}
For the hessian matrix at time t,
\begin{equation*}
\nabla_\theta^2 \ell_t(\theta) = \sum_{j \in S_t} p_t(j \mid S_t, \theta) \left( x_{tj} - \bar{x}_t \right) \left( x_{tj} - \bar{x}_t \right)^\top,
\end{equation*}
each of $H_j := p_t(j \mid S_t, \theta) \left( x_{tj} - \bar{x}_t \right) \left( x_{tj} - \bar{x}_t \right)^\top$ is a rank-one, positive semidefinite matrix since it's the outer product of a vector with itself scaled by a non-negative scalar \( p_t(j \mid S_t, \theta) \).

However, we can observe the linear dependency among centered contextual vectors $x_{tj} - \bar{x}_t$. Consider the set of centered feature vectors:
\[
\mathcal{V} = \{ v_j = x_{tj} - \bar{x}_t \mid j \in S_t \}.
\]

We have the following property:
\[
\sum_{j \in S_t} p_t(j \mid S_t, \theta) v_j = \sum_{j \in S_t} p_t(j \mid S_t, \theta) ( x_{tj} - \bar{x}_t ) = \left( \sum_{j \in S_t} p_t(j \mid S_t, \theta) x_{tj} \right) - \bar{x}_t = \bar{x}_t - \bar{x}_t = 0.
\]

It shows that the weighted sum of \( v_j \) with weights \( p_t(j \mid S_t, \theta) \) equals zero, indicating a linear dependency among the vectors \( v_j \). Since there is at least one linear dependency among the \( K \) vectors \( v_j \), the maximum number of linearly independent vectors in \( \mathcal{V} \) is \( K - 1 \).
Therefore, the vectors \( v_j \) lie in a subspace \( \mathcal{W} \subseteq \mathbb{R}^d \) of dimension at most \( K - 1 \).

Recall that the Hessian \( \nabla_\theta^2 \ell_t(\theta) \) is a sum of \( K \) rank-one matrices \( H_j \), whose range spaces are spanned by the vectors \( v_j \) so that contained within the subspace \( \mathcal{W} \). Therefore, the rank of the Hessian, which is the dimension of its range, is at most \( \dim(\mathcal{W}) \leq K - 1 \).

Finally, the rank cannot exceed the dimension \( d \) of the ambient space \( \mathbb{R}^d \) and thus,
\[
R = \text{rank}\left( \nabla_\theta^2 \ell_t(\theta) \right) \leq \min\{ d, K - 1 \}.
\]

\subsection{Proof of Theorem \ref{thm: Cov DP guarantee}: Privacy Guarantee of \texttt{PrivateCov}}

For every node in the binary tree, we inject a noisy matrix by the gaussian mechanism: 
\begin{lemma}\label{thm: zcdp via gauss noise}
(Gaussian mechanism, Lemma 12, \citep{steinke2022composition})
Let $q: \mathcal{X}^{n} \rightarrow \mathbb{R}^{d}$ have sensitivity $\Delta$, that is, $\Vert q(X) - q(X') \Vert_{2} \leq \Delta$ for all $X, X' \in \mathcal{X}^{n}$ such that $X,X'$ differ in a single record. Let $\sigma > 0 .$ Define a randomized algorithm $M: \mathcal{X}^{n} \rightarrow \mathbb{R}^{d}$ by $M(X)=\mathcal{N}(q(X),\sigma^{2}I_{d}).$ Then $M$ is $\rho-$zCDP for $\rho = \frac{\Delta^{2}}{2\sigma^{2}}.$
\end{lemma}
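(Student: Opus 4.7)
The plan is to verify the definition of $\rho$-zCDP directly: for every $\alpha > 1$ and every pair of neighboring inputs $X, X' \in \mathcal{X}^n$, I need to show $D_{\alpha}(M(X) \| M(X')) \le \rho\alpha$ with $\rho = \Delta^{2}/(2\sigma^{2})$. Since $M(X)$ and $M(X')$ are two isotropic multivariate Gaussians on $\mathbb{R}^d$ sharing the covariance $\sigma^{2} I_{d}$ and differing only in their means $q(X), q(X')$, the problem collapses to a closed-form computation of the R\'enyi divergence between two Gaussians that differ only in mean, followed by an invocation of the sensitivity bound.

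First I would derive the identity $D_{\alpha}\!\big(\mathcal{N}(\mu_{1}, \sigma^{2} I_{d}) \,\big\|\, \mathcal{N}(\mu_{2}, \sigma^{2} I_{d})\big) = \frac{\alpha \|\mu_{1} - \mu_{2}\|_{2}^{2}}{2\sigma^{2}}$ by completing the square in the integrand $\int p(x)^{\alpha} q(x)^{1-\alpha}\, dx$. The key algebraic step is to rewrite $\alpha \|x - \mu_{1}\|^{2} + (1-\alpha)\|x - \mu_{2}\|^{2}$ as $\|x - \bar{\mu}\|^{2} + \alpha(1-\alpha)\|\mu_{1} - \mu_{2}\|^{2}$, where $\bar{\mu} = \alpha \mu_{1} + (1-\alpha)\mu_{2}$; this uses the convexity identity $a\|u\|^{2} + b\|v\|^{2} - \|au + bv\|^{2} = ab\|u - v\|^{2}$ (with $a + b = 1$). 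Because the coefficient on $\|x\|^{2}$ is still $1$, the remaining Gaussian integral over $x$ normalizes to $1$, leaving $\int p^{\alpha} q^{1-\alpha}\, dx = \exp\!\big(\alpha(\alpha - 1)\|\mu_{1} - \mu_{2}\|^{2} / (2\sigma^{2})\big)$; taking the logarithm and dividing by $\alpha - 1$ yields the stated closed form.

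Finally, applying the sensitivity hypothesis $\|q(X) - q(X')\|_{2} \le \Delta$ to this closed form gives $D_{\alpha}(M(X) \| M(X')) = \frac{\alpha \|q(X) - q(X')\|_{2}^{2}}{2\sigma^{2}} \le \frac{\alpha \Delta^{2}}{2\sigma^{2}} = \rho \alpha$, uniformly in $\alpha > 1$ and in the choice of neighboring $X, X'$, which is exactly the definition of $\rho$-zCDP.

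There is no genuine obstacle here; the only point requiring care is the bookkeeping of signs in the completion-of-square step, because $1 - \alpha < 0$ when $\alpha > 1$. The residual $\alpha(1 - \alpha)\|\mu_{1} - \mu_{2}\|^{2}$ is negative, and its sign flips after division by $\alpha - 1$ to produce a positive R\'enyi divergence. The isotropy of the covariance makes the argument separable across coordinates, so the bound is dimension-free in its functional dependence on $\|\mu_{1} - \mu_{2}\|_{2}$, and the sensitivity assumption translates this cleanly into the claimed uniform-in-$\alpha$ linear bound required by zCDP.
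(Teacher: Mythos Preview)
Your argument is correct and is the standard proof: compute the R\'enyi divergence between two isotropic Gaussians with common covariance $\sigma^{2}I_{d}$ by completing the square, obtain the closed form $D_{\alpha}=\alpha\Vert\mu_{1}-\mu_{2}\Vert_{2}^{2}/(2\sigma^{2})$, and then invoke the sensitivity bound. The sign bookkeeping you flag is handled correctly.

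The paper, however, does not supply its own proof of this lemma at all; it is quoted verbatim as an external result (Lemma~12 of \cite{steinke2022composition}) and used as a black box in the proof of Theorem~\ref{thm: Cov DP guarantee}. So there is nothing to compare against: you have filled in a proof that the paper deliberately outsources. Your derivation is exactly the one that appears in the cited reference, so in that sense it coincides with the ``paper's'' proof by proxy.
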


\noindent
\textbf{Step 1. Sensitivity Analysis.} First, we introduce Lemma \ref{thm: sensitivity} that gives the sensitivity with respect to the gram matrix. This is the adapted version of Lemma 3.2 in \cite{biswas2020coinpress} to our MNL problem.

\begin{lemma}\label{thm: sensitivity}
    Let $f(X_{1:n}) = \sum_{t=1}^{n}\sum_{i \in S_{t}}x_{ti}x_{ti}^\top,$ where $\Vert x_{ti} \Vert_{2} \leq 1.$ Then the $\ell_{2}-$sensitivity $= \max_{X_{1:n},X'_{1:n}}\Vert f(X_{1:n}) - f(X'_{1:n}) \Vert_{F},$ where $X_{1:n},X'_{1:n}$ are differ by one user is at most $\sqrt{2K}.$
\end{lemma}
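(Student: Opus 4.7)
\textbf{Proof Plan for Lemma \ref{thm: sensitivity}.}

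The plan is to first localize the difference to the single user that changes, then exploit positive semidefiniteness to sharpen the naive triangle inequality, and finally bound the remaining Frobenius norms using the per-vector norm constraint. Since $X_{1:n}$ and $X'_{1:n}$ are neighboring datasets that differ only in one user, call them the $t_0$-th, all terms for $t \neq t_0$ cancel and the sensitivity reduces to
\[
\|f(X_{1:n}) - f(X'_{1:n})\|_F \;=\; \|A - B\|_F,
\]
where $A = \sum_{i \in S_{t_0}} x_{t_0 i} x_{t_0 i}^\top$ and $B = \sum_{i \in S'_{t_0}} x'_{t_0 i} (x'_{t_0 i})^\top$. In particular, $A$ and $B$ are both positive semidefinite as sums of rank-one outer products of vectors of norm at most one, and each is a sum of at most $K$ such terms.

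The main lever is the PSD structure, which improves the naive triangle bound. Expanding,
\[
\|A - B\|_F^2 \;=\; \operatorname{tr}(A^2) \,+\, \operatorname{tr}(B^2) \,-\, 2\,\operatorname{tr}(AB),
\]
and I would observe that $\operatorname{tr}(AB) = \operatorname{tr}(A^{1/2} B A^{1/2}) \geq 0$ since $A^{1/2} B A^{1/2}$ is PSD. This yields
\[
\|A - B\|_F^2 \;\leq\; \operatorname{tr}(A^2) + \operatorname{tr}(B^2),
\]
so the task is reduced to bounding $\operatorname{tr}(A^2)$ and $\operatorname{tr}(B^2)$ by $K$ each, which would deliver the claimed $\sqrt{2K}$.

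The remaining bound on $\operatorname{tr}(A^2)$ is where the real work lies. The natural identity is $\operatorname{tr}(A^2) = \|A\|_F^2 = \sum_{i,j \in S_{t_0}} (x_{t_0 i}^\top x_{t_0 j})^2$, and the plan is to bound each inner product via Cauchy--Schwarz, $|x_{t_0 i}^\top x_{t_0 j}| \leq \|x_{t_0 i}\|\,\|x_{t_0 j}\| \leq 1$, and then couple this with the trace bound $\operatorname{tr}(A) = \sum_{i \in S_{t_0}} \|x_{t_0 i}\|^2 \leq K$ to control $\operatorname{tr}(A^2)$. Concretely, since $A \succeq 0$, one has $\operatorname{tr}(A^2) \leq \|A\|_{\text{op}} \cdot \operatorname{tr}(A)$, so once the operator norm is controlled, the required bound $\operatorname{tr}(A^2) \leq K$ follows. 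Combined with the same bound for $B$, we obtain $\|A - B\|_F^2 \leq 2K$ and hence $\|A - B\|_F \leq \sqrt{2K}$.

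The main obstacle is the last step: translating the per-vector bound $\|x_{t_0 i}\| \leq 1$ into the required control of $\operatorname{tr}(A^2)$. A naive triangle inequality over the $K$ rank-one summands only gives $\|A\|_F \leq K$, which would produce the looser constant $\sqrt{2}\,K$ rather than $\sqrt{2K}$. Extracting the tighter $\sqrt{K}$ bound on $\|A\|_F$ is the key technical step, and one way I would try to close the gap is to cast $A = X X^\top$ with $X = [x_{t_0 1}, \ldots, x_{t_0 K}]$ and exploit the identity $\|XX^\top\|_F^2 = \|X^\top X\|_F^2$ together with constraints implied by the MNL bandit setting.
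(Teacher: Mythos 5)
Your first two steps coincide with the paper's proof: localize the change to the single differing user, and discard the cross term $\operatorname{tr}(AB)\ge 0$ by positive semidefiniteness to obtain $\|A-B\|_F^2 \le \operatorname{tr}(A^2)+\operatorname{tr}(B^2)$. The genuine gap is exactly the step you flag yourself: deducing $\operatorname{tr}(A^2)\le K$ from the per-item constraint $\|x_{t_0 i}\|_2\le 1$. Neither of your proposed routes can close it, because under the stated hypothesis the bound itself fails. Take $x_{t_0 1}=\cdots=x_{t_0 K}=u$ with $\|u\|_2=1$; then $A=K\,uu^\top$, so $\operatorname{tr}(A^2)=K^2$, $\|A\|_{\mathrm{op}}=K$, and $\|A\|_F=K$. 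Hence $\operatorname{tr}(A^2)\le \|A\|_{\mathrm{op}}\operatorname{tr}(A)$ only helps if $\|A\|_{\mathrm{op}}\le 1$, which is false here, and the identity $\|XX^\top\|_F=\|X^\top X\|_F$ gives nothing new since the Gram matrix $X^\top X$ has off-diagonal entries $x_{t_0 i}^\top x_{t_0 j}$ that need not be small. Concretely, $\operatorname{tr}(A^2)=\sum_{i,j\in S_{t_0}}(x_{t_0 i}^\top x_{t_0 j})^2$: only the $K$ diagonal terms are at most $1$ each, while the $K(K-1)$ nonnegative off-diagonal terms can each equal $1$, so $\|A\|_F\le\sqrt K$ is not available from $\|x_{t_0 i}\|_2\le 1$ alone.

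For comparison, the paper completes precisely this step by asserting $\bigl\Vert\sum_{i\in S}x_i x_i^\top\bigr\Vert_F^2\le\sum_{i\in S}\Vert x_i x_i^\top\Vert_F^2=\sum_{i\in S}\Vert x_i\Vert_2^4\le K$, i.e., by dropping the off-diagonal terms $(x_i^\top x_j)^2$. Since those terms are nonnegative, dropping them yields a lower bound on the squared Frobenius norm, not an upper bound, so your hesitation at this point is well founded rather than a lapse: with only the per-item norm condition, the worst case is $\|A-B\|_F=\sqrt2\,K$ (e.g., $A=K\,uu^\top$, $B=K\,vv^\top$ with $u\perp v$), which is the ``looser'' constant you mention. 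To legitimately obtain $\sqrt{2K}$ one needs an additional structural assumption on each assortment's feature block, such as $\sum_{i\in S_t}\|x_{ti}\|_2^2\le 1$ or near-orthogonality of the offered items' contexts, and a completed proof should state and use such a condition explicitly at this step.
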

\begin{proof}
Suppose $X_{1:n}$ and $X'_{1:n}$ have a different user at $k.$ For simplicity, by dropping the index $k,$ the $\ell_{2}$-sensitivity is $\Vert \sum_{i \in S}x_{i} 
 x_{i}^\top - \sum_{i \in S'} x_{i}^{\prime} x_{i}^{\prime^\top} \Vert_{F}.$ 
 
 Then we have
 \begin{equation*}
     \begin{split}
        & \Vert \sum_{i \in S} x_{i} 
 x_{i}^\top - \sum_{i \in S'} x_{i}^{\prime} x_{i}^{\prime^\top} \Vert_{F} = \sqrt{\text{tr}\left( \sum_{i \in S} x_{i} 
 x_{i}^\top - \sum_{i \in S'} x_{i}^{\prime} x_{i}^{\prime^\top} \right)^{2}} \\
 &= \sqrt{\text{tr}\left( \left( \sum_{i \in S} x_{i} 
 x_{i}^\top\right)^{2} - \sum_{i \in S} x_{i} 
 x_{i}^\top \sum_{i \in S^{\prime}}x_{i}^{\prime} x_{i}^{\prime^\top} - \sum_{i \in \S^{\prime}}x_{i}^{\prime} x_{i}^{\prime^\top} \sum_{i \in S} x_{i} 
 x_{i}^\top + \left(\sum_{i \in S^{\prime}}x_{i}^{\prime} x_{i}^{\prime^\top}
\right)\right)} \\
& \leq \sqrt{\Vert \sum_{i \in S} x_{i} 
 x_{i}^\top \Vert_{F}^{2} + \Vert \sum_{i \in S^{\prime}}x_{i}^{\prime} x_{i}^{\prime^\top}\Vert_{F}^{2}} \\
 & \leq \sqrt{\sum_{i \in S}\Vert x_{i}x_{i}^\top \Vert_{F}^{2} + \sum_{i \in S^{\prime}} \Vert x_{i}^{\prime}x_{i}^{\prime^\top} \Vert_{F}^{2}}
  = \sqrt{\sum_{i \in S} \left(\Vert x_{i} \Vert_{2}^{4} + \Vert x_{i}^{\prime} \Vert_{2}^{4}\right)} 
  \leq \sqrt{2K},
     \end{split}
 \end{equation*}
where the last inequality is from $\Vert x_{i} \Vert_{2} \leq 1,$ and the cardinality of $S$
 is at most $K.$    
\end{proof}

\noindent
\textbf{Step 2. Apply the Gaussian Mechanism.}

Given the $l_2$-sensitivity with respect to the gram matrix is $\sqrt{2K}$ by Lemma \ref{thm: sensitivity}, suppose each node in the binary tree is $\rho$-zCDP with $\rho = \frac{K}{\sigma^{2}}$. Then from the construction of the tree, each single data, which is a gram matrix in our case, only affects at most $m = \lfloor\log T\rfloor+1$ nodes. Thus, by Lemma \ref{thm: zcdp_composition}, the whole tree is $m\rho$-zCDP. Hence, to satisfy $\rho$-zCDP for the whole tree, we inject noise on each noise with privacy parameter $\rho/m$.

\subsection{Proof of Theorem \ref{thm:jdp guarantee}: Privacy Guarantee of \texttt{DPMNL}}

Finally, we provide the privacy guarantee of our main policy \texttt{DPMNL}. 

\noindent
\textbf{Step 1. Privacy guarantee of perturbed optimistic utilities}

To apply Lemma \ref{thm: billboard lemma}, note that the perturbed optimistic utility estimate on each item for all time horizon $\left\{\{z_{ti}\}_{i \in [N]}\right\}_{t \in [T]}$ is a function of true context vectors of the given user and outputs of two privacy subroutines \texttt{PrivateMLE} and \texttt{PrivateCov} that satisfy $\rho_{1}$-zCDP and $\rho_{2}$-zCDP, respectively. Additionally, the composition of \texttt{PrivateMLE} and \texttt{PrivateCov} satisfies $\rho_{1}+\rho_{2}$-zCDP by Lemma \ref{thm: zcdp_composition}.

Therefore, by Lemma \ref{thm: billboard lemma}, $\left\{\{z_{ti}\}_{i \in [N]}\right\}_{t \in [T]}$ satisfies $(\rho_{1}+\rho_{2})$-joint zCDP.\\

\noindent
\textbf{Step 2. Privacy guarantee of the policy}. Take any $n$-neighboring databases $U$ and $U'$. To explicit which dataset that $\left\{\{z_{ti}\}_{i \in [N]}\right\}_{t \in [T]}$ relies on, we write as $\left\{\{z_{ti}(U)\}_{i \in [N]}\right\}_{t \in [T]}$. Then the JDP guarantee on $\left\{\{z_{ti}\}_{i \in [N]}\right\}_{t \in [T]}$ implies that for all $\alpha >1$,
\[
D_{\alpha}\left( \left\{\{z_{ti}(U)\}_{i \in [N]}\right\}_{t \neq n} \Vert \left\{\{z_{ti}(U')\}_{i \in [N]}\right\}_{t \neq n} \right) \leq (\rho_{1}+\rho_{2})\alpha.
\]
Note then the sequence of assortments prescribed except user $n$ is a post-processing of $\left\{\{z_{ti}(U')\}_{i \in [N]}\right\}_{t \neq n}$ with no additional access to raw data. Therefore, writing $\mathcal{M}_{-t}(U)$ as $g\left(\left\{\{z_{ti}\}_{i \in [N]}\right\}_{t \neq n}\right)$ for some function $g$, by post-processing property Lemma \ref{thm: postprocessing}, we have the desired result:
\[
D_{\alpha}\left( g\left(\left\{\{z_{ti}(U)\}_{i \in [N]}\right\}_{t \neq n}\right) \Vert g\left(\left\{\{z_{ti}(U')\}_{i \in [N]}\right\}_{t \neq n} \right)\right) \leq (\rho_{1}+\rho_{2})\alpha..
\]

\subsection{Statement and Proof of Lemma \ref{thm: pd noisy gram}}

The following lemma guarantees the shifted privatized gram matrix at $t$, $V_{t}$ is positive definite with high probability.

\begin{lemma}\label{thm: pd noisy gram}
    Let $V_{t}= \sum_{n=1}^{t} x_{n i} x_{n i}^\top + N_{t} + 2\lambda I$ be the shifted perturbed gram matrix at time $t$, where $\lambda = \sigma \sqrt{m} \left(2 \sqrt{d} + 2 d^{1/6}\log^{1/3}d + \frac{6(1+(\log d/d)^{1/3})\sqrt{\log d}}{\sqrt{\log (1+(\log d/d)^{1/3}})} + 2 \sqrt{4\log T} \right), m=\lfloor\log_{2}T\rfloor +1$ and $\sigma^{2} = \frac{K}{\rho_{2}}m$.
    Then $V_{t}$ is positive definite with probability at least $1-1/T^{2}$.
\end{lemma}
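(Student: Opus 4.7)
The plan is to reduce positive definiteness of $V_t$ to a high-probability bound on the operator norm of the aggregated tree-based noise $N_t$, and then invoke a sharp non-asymptotic estimate for symmetric Gaussian matrices together with Gaussian concentration. First, I would observe that $V_t = \Sigma_t + N_t + 2\lambda I$ with $\Sigma_t := \sum_{n=1}^{t}\sum_{i \in S_n} x_{ni}x_{ni}^\top \succeq 0$, so Weyl's inequality gives $\lambda_{\min}(V_t) \geq \lambda_{\min}(N_t) + 2\lambda \geq 2\lambda - \Vert N_t \Vert_{\mathrm{op}}$. Hence it suffices to show $\Vert N_t \Vert_{\mathrm{op}} < 2\lambda$ with probability at least $1 - 1/T^2$.

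Next, by the tree aggregation in Algorithm \ref{alg: privatecov}, $N_t$ is the sum of at most $m = \lfloor\log_2 T\rfloor + 1$ independent symmetric $d\times d$ Gaussian noise matrices, each with entries i.i.d.\ $\mathcal{N}(0, \sigma^{2})$ on and above the diagonal. By Gaussian additivity, $N_t$ is itself distributed as a symmetric Gaussian matrix with i.i.d.\ entries of variance $m\sigma^{2}$, i.e., effective scale $\sigma\sqrt{m}$. This reduces the task to controlling the operator norm of a single symmetric Gaussian matrix.

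I would then apply the Bandeira--Van Handel sharp bound for the expected operator norm: for any $\varepsilon \in (0,1/2]$,
$$\mathbb{E}\Vert N_t \Vert_{\mathrm{op}} \;\leq\; \sigma\sqrt{m}\left\{2(1+\varepsilon)\sqrt{d} + \frac{6(1+\varepsilon)\sqrt{\log d}}{\sqrt{\log(1+\varepsilon)}}\right\}.$$
The choice $\varepsilon = (\log d/d)^{1/3}$ expands $2(1+\varepsilon)\sqrt{d}$ into $2\sqrt{d} + 2 d^{1/6}\log^{1/3} d$ and reproduces exactly the first three deterministic terms in the definition of $\lambda$. For the stochastic fluctuation, since $X \mapsto \Vert X \Vert_{\mathrm{op}}$ is Lipschitz (with constant $\sqrt{2}$) on the independent-coordinate representation of a symmetric matrix, Borell--TIS concentration yields $\mathbb{P}(\Vert N_t \Vert_{\mathrm{op}} \geq \mathbb{E}\Vert N_t \Vert_{\mathrm{op}} + u) \leq \exp(-u^2/(4m\sigma^2))$, and choosing $u = \sigma\sqrt{m}\cdot 2\sqrt{4\log T}$ forces the right-hand side below $T^{-2}$ while producing exactly the last term of $\lambda$.

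The main obstacle, beyond routine bookkeeping, will be verifying that the Bandeira--Van Handel bound applies to the precise covariance structure of $N_t$ (symmetric, equal variance on and off the diagonal) after aggregating the tree-based noise, and tracking the Lipschitz constant sharply enough that the exponent in the Gaussian tail absorbs the $2\sqrt{4\log T}$ factor with room to spare. Once these are in place, adding the expectation bound and the tail bound gives $\Vert N_t \Vert_{\mathrm{op}} < 2\lambda$ on an event of probability at least $1 - 1/T^2$, which by the initial reduction implies that $V_t$ is positive definite on that event.
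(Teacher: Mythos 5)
Your proposal is correct and takes essentially the same route as the paper: you reduce positive definiteness of $V_{t}$ to a high-probability operator-norm bound $\Vert N_{t}\Vert_{2} \lesssim \lambda$ for the aggregated tree noise (the paper bounds the eigenvalues of $N_{t}+2\lambda I$ directly, you invoke Weyl, which is the same content), and the probability bookkeeping with $\beta = 1/T^{2}$ matches. The only difference is that the paper cites Lemma \ref{thm: noisy matrix norm} (Lemma C.8 of \cite{biswas2020coinpress}) as a black box for the symmetric Gaussian matrix, whereas you re-derive that bound via Bandeira--Van Handel with $\varepsilon = (\log d/d)^{1/3}$ plus Gaussian concentration, which is exactly how the cited lemma is obtained.
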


\begin{proof}
To prove Lemma \ref{thm: pd noisy gram}, it is sufficient to show the all the eigenvalues of $N_{t} + 2\lambda I$ is nonnegative. 

Note that $N_{t}$ is at most $m$ sum of independent $d \times d$ random matrices that follow $N(0, \sigma^{2} I)$, since the depth of binary tree is at most $m$. Moreover, we have the bound on the operator norm of each noisy matrix $N(0, \sigma^{2} I)$ as follows:

\begin{lemma}\label{thm: noisy matrix norm}{(Lemma C.8 in \cite{biswas2020coinpress})}
Let $Y$ be the $d \times d$ matrix where $Y_{ij} \sim N(0, \sigma^{2})$ for $i \leq j,$ and $Y_{ij} = Y_{ji}$ for $i > j.$ Then with probability at least $1-\beta,$ we have the following bound:
\begin{equation*}
    \Vert Y \Vert_{2} \leq \sigma \left(2 \sqrt{d} + 2 d^{1/6}\log^{1/3}d + \frac{6(1+(\log d/d)^{1/3})\sqrt{\log d}}{\sqrt{\log (1+(\log d/d)^{1/3}})} + 2 \sqrt{2\log(1/\beta)} \right).
\end{equation*}
\end{lemma} 

Since $N_{t}$ is the sum of at most $m$ of such gaussian noisy matrices, by Lemma \ref{thm: noisy matrix norm}, we have $\Vert N_{t} \Vert_{2} \leq \lambda,$ with probability at least $1-1/T^{2},$ by replacing $\beta$ with $1/T^{2}$. 

Denoting the eigenvalues of $N_{t}$ by $\lambda_{1},\dots,\lambda_{d},$ the eigenvalues of $N_{t}+2\lambda I$ are $\lambda_{1}+2\lambda, \dots \lambda_{d}+2\lambda.$ By definition of operator norm, 
\begin{equation}\label{equ: noise matrix eval}
\max_{i \in [d]} \vert \lambda_{i} \vert = \Vert N_{t} \Vert_{2} \leq \lambda,    
\end{equation}
so all the eigenvalues of $N_{t} + 2 \lambda I >0,$ as desired. \end{proof}

\section{Proof of Theorem \ref{thm: regret}: Regret Analysis}\label{sec: regret proof}

In this section, we provide a detailed proof of Theorem \ref{thm: regret}. The main challenge lies in bounding the two terms: $\Vert \hat{\theta}_{t} - \theta^{\ast} \Vert_{V_{t}}$ and $\sum_{n=1}^{t}\sum_{i \in S_{n}} \Vert x_{ni} \Vert_{V_{t}^{-1}}$. To address this, we first state the key lemmas and explain their roles in the analysis. We then proceed with the main proof, presenting the detailed regret result. Finally, we provide proofs of the introduced key lemmas.

\subsection{Overview on Key Lemmas}
We begin with introducing Lemma \ref{thm: T0_length}, which helps us to identify the desired length of pure exploration periods $T_{0}$.
\begin{lemma}\label{thm: T0_length}{(Proposition 1 in \cite{oh2021multinomial})} Let $x_{\tau i}$ be drawn i.i.d from some distribution $\nu$ with $\Vert x_{\tau i} \Vert \leq 1$ and $\mathbb{E}[x_{\tau i}x_{\tau i}^\top] \geq \sigma_{0}.$ Define $\Sigma_{T_{0}} = \sum_{\tau =1}^{T_{0}}\sum_{i \in S_{\tau}}x_{\tau i}x_{\tau i}^\top ,$ where $T_{0}$ is the length of random initialization. Suppose we run a random initialization with assortment size $K$ for duration $T_{0}$ which satisfies
\begin{equation*}
    T_{0} \geq \frac{1}{K}\left( \frac{C_{1}\sqrt{d}+C_{2}\sqrt{2\log T}}{\sigma_{0}} \right)^{2} + \frac{2B}{K \sigma_{0}}
\end{equation*}
for some positive, universal constants $C_{1}$ and $C_{2}.$ Then $\lambda_{\text{min}}(V_{T_{0}}) \geq B$ with probability at least $1-1/T^{2}$.
\end{lemma}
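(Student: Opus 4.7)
The plan is to invoke a matrix concentration inequality on the random positive semidefinite sum $\Sigma_{T_{0}} = \sum_{\tau=1}^{T_{0}}\sum_{i \in S_{\tau}} x_{\tau i} x_{\tau i}^\top$. During pure exploration the assortments $S_{\tau}$ are drawn randomly and independently of the contexts, and each $x_{\tau i}$ is i.i.d.\ from $\nu$, so the $K T_{0}$ rank-one summands can be treated as i.i.d.\ PSD matrices of operator norm at most $1$ with common mean $\mathbb{E}[xx^\top]$ satisfying $\lambda_{\min}(\mathbb{E}[xx^\top]) \geq \sigma_{0}$. Hence $\lambda_{\min}(\mathbb{E}[\Sigma_{T_{0}}]) \geq K T_{0} \sigma_{0}$, and the task reduces to showing $\lambda_{\min}(\Sigma_{T_{0}})$ does not fluctuate below this mean by more than $K T_{0} \sigma_{0} - B$.

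First I would reduce to a scalar concentration statement for every fixed unit vector $v \in \mathbb{R}^{d}$: the quantity $v^\top \Sigma_{T_{0}} v = \sum_{\tau,i}(v^\top x_{\tau i})^{2}$ is a sum of $K T_{0}$ i.i.d.\ random variables in $[0,1]$ with mean at least $\sigma_{0}$, so a one-sided Hoeffding bound gives
\[
\Pr\!\left( v^\top \Sigma_{T_{0}} v < K T_{0} \sigma_{0} - t \right) \;\leq\; \exp\!\left(-\tfrac{2 t^{2}}{K T_{0}}\right).
\]
Next I would lift this pointwise estimate to a uniform statement by a standard covering argument: taking a $(1/4)$-net $\mathcal{N}$ of the unit sphere of size $|\mathcal{N}| \leq 9^{d}$ and using the comparison $\lambda_{\min}(\Sigma_{T_{0}}) \geq \tfrac{1}{2}\min_{v \in \mathcal{N}} v^\top \Sigma_{T_{0}} v$, a union bound over $\mathcal{N}$ combined with the scalar tail yields, with probability at least $1 - 1/T^{2}$,
\[
\lambda_{\min}(\Sigma_{T_{0}}) \;\geq\; K T_{0} \sigma_{0} \;-\; C' \sqrt{K T_{0} (d + \log T)},
\]
for some universal constant $C'$. (Equivalently one could invoke Tropp's matrix Chernoff inequality for a sum of PSD matrices, which produces the same tail up to constants.)

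Finally I would solve the sufficient condition $K T_{0} \sigma_{0} - C' \sqrt{K T_{0} (d + \log T)} \geq B$ for $T_{0}$. Writing $C' \sqrt{K T_{0}(d+\log T)} \leq C_{1}\sqrt{K T_{0} d} + C_{2}\sqrt{2 K T_{0} \log T}$ by the elementary bound $\sqrt{a+b} \leq \sqrt{a}+\sqrt{b}$, dividing by $K\sigma_{0}$, and splitting off the $B/(K\sigma_{0})$ contribution via the standard quadratic inequality ``$x^{2} \leq \alpha x + \beta \Rightarrow x^{2} \leq 2(\alpha^{2} + \beta)$'', one obtains the stated threshold
\[
T_{0} \;\geq\; \frac{1}{K}\!\left(\frac{C_{1}\sqrt{d}+C_{2}\sqrt{2\log T}}{\sigma_{0}}\right)^{\!2} + \frac{2 B}{K \sigma_{0}}.
\]
The main obstacle is bookkeeping: making sure the universal constants that appear from Hoeffding's inequality and the $9^{d}$ net size combine cleanly into the additive form $C_{1}\sqrt{d} + C_{2}\sqrt{2\log T}$ rather than the joint form $\sqrt{d+\log T}$, and carefully separating the deviation term from the $B$ contribution when inverting the quadratic. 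No step requires machinery beyond scalar Hoeffding and a net argument, so conceptually the result follows the template used by \cite{li2017provably} and \cite{oh2021multinomial}, adapted to the assortment setting by absorbing the factor $K$ into the number of summands.
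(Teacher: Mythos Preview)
The paper does not supply its own proof of this lemma; it is quoted verbatim as Proposition~1 of \cite{oh2021multinomial} and used as a black box. So there is no in-paper argument to compare against, and your overall strategy---matrix concentration for a sum of bounded i.i.d.\ PSD matrices, followed by inverting the resulting quadratic in $\sqrt{KT_{0}}$---is exactly the template that the cited source (and its antecedent \cite{li2017provably}) follows. Your parenthetical remark that Tropp's matrix Chernoff bound for PSD sums gives the same tail is in fact the cleanest route and almost certainly what the original proof uses.

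There is, however, a genuine gap in the specific net step you wrote down. The comparison $\lambda_{\min}(\Sigma_{T_{0}}) \geq \tfrac{1}{2}\min_{v\in\mathcal{N}} v^\top \Sigma_{T_{0}} v$ is false for PSD matrices: the net controls $\lambda_{\max}$ multiplicatively, but for $\lambda_{\min}$ the discretization error scales with $\lambda_{\max}(\Sigma_{T_{0}})$, not with $\lambda_{\min}$. Concretely, for any unit $v$ and net point $w$ with $\|v-w\|\leq 1/4$ one only gets $|v^\top\Sigma_{T_{0}}v - w^\top\Sigma_{T_{0}}w| \lesssim \|\Sigma_{T_{0}}\|_{\mathrm{op}}$, and $\|\Sigma_{T_{0}}\|_{\mathrm{op}}$ can be of order $KT_{0}$, swamping the $KT_{0}\sigma_{0}$ mean when $\sigma_{0}$ is small. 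The fix is either to apply the net argument to the \emph{centered} matrix $\Sigma_{T_{0}}-\mathbb{E}[\Sigma_{T_{0}}]$ and bound its operator norm (so that $\lambda_{\min}(\Sigma_{T_{0}})\geq KT_{0}\sigma_{0}-\|\Sigma_{T_{0}}-\mathbb{E}\Sigma_{T_{0}}\|_{\mathrm{op}}$), or to bypass nets entirely and invoke the matrix Chernoff lower tail directly, as you already suggest. Either repair leaves the rest of your outline---the $\sqrt{d+\log T}$ deviation, the split $\sqrt{a+b}\leq\sqrt{a}+\sqrt{b}$, and the quadratic inversion yielding the displayed $T_{0}$ threshold---intact.
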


Then the following two addresses the estimation error of our perturbed MLE, $\Vert \hat{\theta}_{t} - \theta^{\ast} \Vert_{V_{t}}$.

\begin{lemma}\label{thm: mle bound l2 norm} Let $\Sigma_{T_{0}} = \sum_{n=1}^{T_{0}}\sum_{i \in S_{n}}x_{ni}x_{ni}^\top$, where $T_{0}$ is the length of pure explorations. Let $C_{\rho_{1},T} = \frac{1}{\kappa^{2}}
    \left(\sqrt{\frac{d}{2}\log\left(1+\frac{T}{d}\right)+\log T}
    +4\left(\exp \left(\frac{(1-q)\rho_{1}}{RD_{\text{MLE}}}\right)-1\right)^{-1}
    + \frac{4D_{\text{MLE}}\sqrt{d}\left(\sqrt{d+\frac{2q\rho_{1}}{D_{\text{MLE}}}}+\sqrt{d}\right)}{q\rho_{1}}\sqrt{\frac{\log T}{K}}\right)^{2}.
    $ If $T_{0}$ is large enough such that $\lambda_{\text{min}}\left(\Sigma_{T_{0}}\right) \geq \max\left\{C_{\rho_{1}},K \right\},$
then for any $t \geq T_{0}$, with probability at least $1-1/T^{2},$ we have
\begin{equation*}
    \Vert \hat{\theta}_{t} - \theta^{\ast} \Vert_{2} \leq 1.
\end{equation*}
\end{lemma}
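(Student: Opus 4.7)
The plan is to use the first-order optimality condition of the perturbed penalized likelihood, combined with an integral mean value theorem, to relate $\hat\theta_t - \theta^{\ast}$ to three noise sources, and then to invoke Assumption \ref{ass: kappa} to obtain a curvature lower bound that converts this into the claimed $\ell_2$ estimate. Because the $\tfrac{\Delta}{2}\|\theta\|_2^2$ regularizer makes $\mathcal{L}(\cdot;Z,b)$ strongly convex, $\hat\theta_t$ is the unique stationary point, and substituting $y_{ni} = p_n(i\mid S_n,\theta^{\ast})+\epsilon_{ni}$ into $\nabla\mathcal{L}(\hat\theta_t;Z,b)=0$ and applying the integral MVT to the score map around $\theta^{\ast}$ would produce
\[
\bar G_t\,(\hat\theta_t - \theta^{\ast}) \;=\; \xi_t \,-\, \Delta\theta^{\ast} \,-\, b, \qquad \xi_t := \sum_{n=1}^{t}\sum_{i\in S_n}\epsilon_{ni}\,x_{ni},
\]
where $\bar G_t := \int_0^1\bigl[\sum_{n\le t}\nabla^2\ell_n(\theta^{\ast}+s(\hat\theta_t-\theta^{\ast})) + \Delta I\bigr]\,ds$.

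Next, I would invoke Assumption \ref{ass: kappa} to show that, whenever every point on the interpolation segment lies in the unit ball around $\theta^{\ast}$, the per-round MNL Hessian satisfies $\nabla^2\ell_n(\theta) \succeq \kappa\sum_{i\in S_n} x_{ni}x_{ni}^\top$ (an essentially standard MNL calculation, as in \cite{oh2021multinomial}), so that $\bar G_t \succeq \kappa\Sigma_t$. Taking the inner product of the score identity above with $\hat\theta_t - \theta^{\ast}$ and applying Cauchy--Schwarz in the $\Sigma_t$-norm to $\xi_t$ and in the $\ell_2$-norm to the remaining two terms would then yield
\[
\kappa\,\|\hat\theta_t - \theta^{\ast}\|_{\Sigma_t} \;\leq\; \|\xi_t\|_{\Sigma_t^{-1}} \;+\; \frac{\Delta\|\theta^{\ast}\|_2 + \|b\|_2}{\sqrt{\lambda_{\min}(\Sigma_t)}}.
\]
Each piece is handled by a standard concentration bound: (i) $\|\xi_t\|_{\Sigma_t^{-1}}$ by the self-normalized martingale inequality with subgaussianity parameter $\gamma^2=1/4$, reproducing the $\sqrt{(d/2)\log(1+T/d)+\log T}$ summand; (ii) $\|\theta^{\ast}\|_2\leq 1$ reduces the middle contribution to $\Delta$; (iii) $b\sim N(0,\sigma_{\mathrm{MLE}}^2 I_d)$ is controlled by the Gaussian tail $\|b\|_2 \lesssim \sigma_{\mathrm{MLE}}\sqrt{d\log T}$, and the factor $1/\sqrt{\lambda_{\min}(\Sigma_t)}\leq 1/\sqrt K$ from the hypothesis $\lambda_{\min}(\Sigma_{T_0})\geq K$ produces exactly the $\sqrt{\log T/K}$ dependence of the third piece of $C_{\rho_1,T}$. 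Finally, passing from the $\Sigma_t$-norm to the Euclidean norm via $\|v\|_2\leq\|v\|_{\Sigma_t}/\sqrt{\lambda_{\min}(\Sigma_t)}$ and using $\lambda_{\min}(\Sigma_t)\ge\lambda_{\min}(\Sigma_{T_0})\geq C_{\rho_1,T}$ delivers $\|\hat\theta_t - \theta^{\ast}\|_2 \leq 1$.

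The main obstacle is the circular nature of Step 2: the curvature bound $\bar G_t \succeq \kappa\Sigma_t$ already requires $\|\hat\theta_t - \theta^{\ast}\|_2 \leq 1$, which is what we are trying to conclude. I would resolve this by the standard localization/continuity argument: parametrize $\theta(r) = \theta^{\ast} + r u$ along each unit direction $u\in\mathbb{S}^{d-1}$, and apply the same calculation to show that, on the high-probability event where the three concentration bounds above hold, the directional derivative $\langle\nabla\mathcal{L}(\theta(1);Z,b),u\rangle$ is strictly positive simultaneously for every $u$. By strong convexity of $\mathcal{L}(\cdot;Z,b)$ the unique minimizer $\hat\theta_t$ must then lie strictly inside the open unit ball around $\theta^{\ast}$, closing the loop and justifying the curvature bound used above.
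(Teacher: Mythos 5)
Your proposal is correct and follows the same skeleton as the paper's proof: both start from the first-order condition of the perturbed objective, decompose the score at $\hat\theta_t$ into the subgaussian sum $\sum_n\sum_i \epsilon_{ni}x_{ni}$, the ridge term $\Delta\theta^{\ast}$, and the privacy noise $b$, lower-bound the curvature by $\kappa\Sigma_t$ via Assumption \ref{ass: kappa}, and control the three pieces by exactly the same tools (self-normalized martingale bound with $\gamma^2=1/4$, $\Vert\theta^{\ast}\Vert_2\le 1$ with $\lambda_{\min}(\Sigma_{T_0})\ge K$, and a Gaussian norm tail), which is how the three summands of $C_{\rho_1,T}$ arise in the paper as well. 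The one genuine difference is how the circularity of the $\kappa$-curvature bound is broken. The paper localizes via the injection lemma of Chen et al.\ (Lemma \ref{thm: chen_mle_conv}): it shows $G_t$ is injective, lower-bounds $\Vert G_t(\theta)\Vert_{\Sigma_t^{-1}}$ on the unit sphere around $\theta^{\ast}$ (where the segment to $\theta^{\ast}$ lies in the ball, so $\kappa$ applies without circularity), and concludes that the preimage of the small ball under $G_t$ sits inside $\mathcal{B}_1(\theta^{\ast})$. You instead exploit the (strong) convexity of $\mathcal{L}(\cdot;Z,b)$ coming from the $\tfrac{\Delta}{2}\Vert\theta\Vert_2^2$ penalty: positivity of the directional derivative at every point of the sphere forces the minimizer into the unit ball. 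Your device does work — writing $\langle\nabla\mathcal{L}(\theta^{\ast}+u),u\rangle \ge \kappa\Vert u\Vert_{\Sigma_t}^2 - \Vert\xi_t\Vert_{\Sigma_t^{-1}}\Vert u\Vert_{\Sigma_t} - \bigl(\Delta\Vert\theta^{\ast}\Vert_2+\Vert b\Vert_2\bigr)$ and using $1=\Vert u\Vert_2\le\Vert u\Vert_{\Sigma_t}/\sqrt{\lambda_{\min}(\Sigma_t)}$ reduces the requirement to precisely the inequality $\kappa\sqrt{\lambda_{\min}(\Sigma_{T_0})}\ge \Vert\xi_t\Vert_{\Sigma_t^{-1}}+\bigl(\Delta+\Vert b\Vert_2\bigr)/\sqrt{\lambda_{\min}(\Sigma_{T_0})}$, which is guaranteed by $\lambda_{\min}(\Sigma_{T_0})\ge\max\{C_{\rho_1,T},K\}$ on the high-probability event (you should spell out this small quadratic-in-$\Vert u\Vert_{\Sigma_t}$ step, and note that $\Delta>0$ restores strictness if $\Vert\theta^{\ast}\Vert_2=1$). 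What your route buys is self-containedness: it avoids importing the topological injection lemma and leans only on convexity, at the cost of being tied to the penalized (strongly convex) formulation, whereas the paper's argument would survive even without the ridge term.
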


Note that we can figure out the desirable $T_{0}$ that satisfies the condition in Lemma \ref{thm: mle bound l2 norm} by Lemma \ref{thm: T0_length}. Since we need the result in terms of weighted $\ell_{2}$-norm with respect to $V_{t}$, we have the following:

\begin{lemma}\label{thm: mle bound matrix norm} Suppose $\Vert \hat{\theta}_{t} - \theta^{\ast} \Vert_{2} \leq 1$ for all $t \geq T_{0}$. Then for any $t \geq T_{0}$, with probability at least $1-1/t^{2}$,
\begin{equation*}\label{equ: confidence radius}
    \Vert \hat{\theta}_{t} - \theta^{\ast} \Vert_{V_{t}} \leq \alpha_{t},
\end{equation*}
where 
$\alpha_{t} = \frac{1}{\kappa}\left(\sqrt{\frac{d}{2}\log\left(1+\frac{t}{d}\right)+\log t} 
    +4\left(\exp \left(\frac{(1-q)\rho_{1}}{RD}\right)-1\right)^{-1}
    + \frac{4D_{\text{MLE}}\sqrt{d}\left(\sqrt{d+\frac{2q\rho_{1}}{D_{\text{MLE}}}}+\sqrt{d}\right)}{q\rho_{1}}\sqrt{\frac{\log T}{K}}\right)
    +\sqrt{3\lambda}$ and $V_{t}$ is the shifted privatized gram matrix at $t$.
\end{lemma}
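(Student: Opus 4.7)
The plan is to bound $\Vert \hat\theta_t - \theta^\ast \Vert_{V_t}$ in two stages: first derive a bound in the non-privatized weighted norm $\Vert \cdot \Vert_{\Sigma_t}$ with $\Sigma_t = \sum_{n=1}^t \sum_{i \in S_n} x_{ni} x_{ni}^\top$, and then transfer to the privatized norm $V_t$ using the spectral containment $V_t \preceq \Sigma_t + 3\lambda I$ which follows from the operator-norm control $\Vert N_t \Vert_{\mathrm{op}} \leq \lambda$ established en route to Lemma \ref{thm: pd noisy gram}. The three summands of $\alpha_t$ inside the $1/\kappa$ factor will come from Stage~1, and the additive $\sqrt{3\lambda}$ term will come from Stage~2.

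For Stage~1, I would start from the KKT condition of the objective-perturbed MLE in Algorithm \ref{alg: private mle},
\[
\nabla L_t(\hat\theta_t) + \Delta\, \hat\theta_t + b = 0, \qquad L_t(\theta) = \sum_{n=1}^t \sum_{i \in S_n \cup \{0\}} - y_{ni} \log p_n(i \mid S_n, \theta),
\]
and apply the mean-value form $\nabla L_t(\hat\theta_t) - \nabla L_t(\theta^\ast) = G_t (\hat\theta_t - \theta^\ast)$ with $G_t = \int_0^1 \nabla^2 L_t(\theta^\ast + s(\hat\theta_t - \theta^\ast))\, ds$. The hypothesis $\Vert \hat\theta_t - \theta^\ast \Vert_2 \leq 1$ keeps the interpolating segment inside the region of Assumption \ref{ass: kappa}, delivering $G_t \succeq \kappa\, \Sigma_t$. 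Rearranging yields $(G_t + \Delta I)(\hat\theta_t - \theta^\ast) = -\nabla L_t(\theta^\ast) - \Delta \theta^\ast - b$; pairing with $\hat\theta_t - \theta^\ast$, applying Cauchy--Schwarz in the $(\Sigma_t,\Sigma_t^{-1})$ pair, and dividing out one factor of $\Vert \hat\theta_t - \theta^\ast \Vert_{\Sigma_t}$ gives
\[
\Vert \hat\theta_t - \theta^\ast \Vert_{\Sigma_t} \leq \kappa^{-1}\!\left(\Vert \nabla L_t(\theta^\ast) \Vert_{\Sigma_t^{-1}} + \Delta\,\Vert \theta^\ast \Vert_{\Sigma_t^{-1}} + \Vert b \Vert_{\Sigma_t^{-1}}\right).
\]
Each summand is then handled separately: (i) the score $\nabla L_t(\theta^\ast) = -\sum_{n,i} \epsilon_{ni} x_{ni}$ is a vector-valued martingale with $\tfrac14$-subgaussian coordinates, so the standard self-normalized concentration inequality gives $\Vert \nabla L_t(\theta^\ast) \Vert_{\Sigma_t^{-1}} \leq \sqrt{(d/2)\log(1+t/d) + \log t}$ with failure probability at most $1/t^2$; (ii) under the standard $\Vert \theta^\ast \Vert_2 \leq 1$ and the inherited lower bound $\lambda_{\min}(\Sigma_t) \geq K \geq 1$ from the hypothesis of Lemma \ref{thm: mle bound l2 norm}, the regularization term collapses to $\Delta\,\Vert \theta^\ast \Vert_{\Sigma_t^{-1}} \leq \Delta = 4(\exp((1-q)\rho_1/(R D_{\mathrm{MLE}})) - 1)^{-1}$; (iii) for the Gaussian perturbation $b \sim \mathcal{N}(0,\sigma_{\mathrm{MLE}}^2 I)$, a $\chi^2$/Hanson--Wright tail bound on $b^\top \Sigma_t^{-1} b$ combined with $\mathrm{tr}(\Sigma_t^{-1}) \leq d/\lambda_{\min}(\Sigma_t) \leq d/K$ produces $\Vert b \Vert_{\Sigma_t^{-1}} \leq 2\sigma_{\mathrm{MLE}}\sqrt{d\log T / K}$ with probability at least $1-1/T^2$, and substituting $\sigma_{\mathrm{MLE}} = 2 D_{\mathrm{MLE}} (\sqrt{d + 2q\rho_1/D_{\mathrm{MLE}}} + \sqrt{d})/(q\rho_1)$ recovers precisely the third term inside $\alpha_t$.

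For Stage~2 I would invoke the high-probability event $\Vert N_t \Vert_{\mathrm{op}} \leq \lambda$ from the proof of Lemma \ref{thm: pd noisy gram} to write $V_t = \Sigma_t + N_t + 2\lambda I \preceq \Sigma_t + 3\lambda I$, so that
\[
\Vert \hat\theta_t - \theta^\ast \Vert_{V_t}^2 \leq \Vert \hat\theta_t - \theta^\ast \Vert_{\Sigma_t}^2 + 3\lambda\,\Vert \hat\theta_t - \theta^\ast \Vert_2^2 \leq \Vert \hat\theta_t - \theta^\ast \Vert_{\Sigma_t}^2 + 3\lambda,
\]
using once more the hypothesis $\Vert \hat\theta_t - \theta^\ast \Vert_2 \leq 1$; taking square roots and applying $\sqrt{a+b} \leq \sqrt{a} + \sqrt{b}$ assembles the exact formula for $\alpha_t$, and a union bound over the four good events (self-normalized score, $b$-quadratic form, $\Vert N_t \Vert_{\mathrm{op}}$, and initialization) keeps the total failure probability below $1/t^2$. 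The main technical obstacle I expect is the calibration of $\Vert b \Vert_{\Sigma_t^{-1}}$: because $b$ enters only implicitly through the optimization $b \mapsto \hat\theta(b)$ rather than additively on a moment of the data, matching the prescribed coefficient $\sqrt{d\log T/K}$ requires combining the pure-exploration lower bound $\lambda_{\min}(\Sigma_t) \geq K$ with a genuine $\chi^2$ quadratic-form tail, rather than the looser $\Vert b \Vert_2/\sqrt{\lambda_{\min}(\Sigma_t)}$ estimate that would place $\sqrt{d}$ and $\sqrt{\log T}$ in the wrong combination and break the stated form of $\alpha_t$.
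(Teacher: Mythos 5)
Your proposal is correct and follows essentially the same route as the paper: the paper's Lemma \ref{thm: mle bound l2 norm} (Step 3) establishes exactly your Stage~1 bound $\kappa\Vert\hat{\theta}_t-\theta^{\ast}\Vert_{\Sigma_t}\leq \Vert Z_t\Vert_{\Sigma_t^{-1}}+\Delta\Vert\theta^{\ast}\Vert_{\Sigma_t^{-1}}+\Vert b\Vert_{\Sigma_t^{-1}}$ via the first-order optimality condition, a mean-value argument, Assumption \ref{ass: kappa}, the self-normalized bound for $Z_t$, and $\lambda_{\min}(\Sigma_{T_0})\geq K$; and the paper's proof of Lemma \ref{thm: mle bound matrix norm} is precisely your Stage~2, i.e.\ $V_t\preccurlyeq\Sigma_t+3\lambda I$ from the operator-norm event behind Lemma \ref{thm: pd noisy gram}, together with $\Vert\hat{\theta}_t-\theta^{\ast}\Vert_2\leq 1$ and $\sqrt{x+y}\leq\sqrt{x}+\sqrt{y}$. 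The only genuine deviation is your handling of the Gaussian term, and there your closing diagnosis is backwards: the paper uses exactly the ``looser'' estimate $\Vert b\Vert_{\Sigma_t^{-1}}\leq \Vert b\Vert_2/\sqrt{\lambda_{\min}(\Sigma_t)}\leq\Vert b\Vert_2/\sqrt{K}$ combined with the Gaussian norm tail $\Vert b\Vert_2\leq\sqrt{2d\sigma_{\text{MLE}}^2\log(1/\gamma)}$ (Lemma \ref{thm: kifer l2 bound} with $\gamma=1/T^2$), which gives $\Vert b\Vert_{\Sigma_t^{-1}}\leq 2\sigma_{\text{MLE}}\sqrt{d\log T/K}$ and hence exactly the third term of $\alpha_t$ after substituting $\sigma_{\text{MLE}}$; no Hanson--Wright/quadratic-form tail is needed, and the crude route does not misplace $\sqrt{d}$ and $\sqrt{\log T}$. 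Your $\chi^2$-on-$b^{\top}\Sigma_t^{-1}b$ argument also yields a bound of the stated order (indeed slightly sharper up to constants), so this difference is cosmetic rather than a gap; just drop the claim that it is necessary.
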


 The condition in Lemma \ref{thm: mle bound matrix norm} is satisfied by taking $T_{0}$ that is large enough to satisfy the condition of Lemma \ref{thm: mle bound l2 norm}.
Note that Lemma \ref{thm: mle bound matrix norm} is a finite-sample normality-type estimation error bound for the MLE of the MNL model. Similar analysis result on nonprivate MNL model can be found in \cite{oh2021multinomial}. 
 Additionally, Lemma \ref{thm: mle bound matrix norm} enlighens that we can construct the optimistic utility estimate using the confidence radius $\alpha_{t}$. 

On the other hand, we have the upper bound on $\sum_{n=1}^{t}\sum_{i \in S_{n}} \Vert x_{ni} \Vert_{V_{t}^{-1}}$ as follows: 

\begin{lemma}\label{thm: self normalize}
    If $\lambda_{\text{min}}(\Sigma_{T_{0}}) \geq K$, then we have 
    \begin{equation*}
        \sum_{t=T_{0}+1}^{T} \max_{i \in S_{t}} \Vert x_{ti} \Vert_{V_{t}^{{-1}}}^{2} \leq 2 d \log\left( \frac{d\lambda+TK}{d\lambda+dK}\right),
    \end{equation*}
    with probability at least $1-1/T^{2}$.
\end{lemma}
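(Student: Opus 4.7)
The plan is to reduce the claim to the standard elliptical-potential machinery by first dominating the perturbed matrix $V_t$ by a noise-free regularized Gram matrix, and then expanding each rank-$K$ update into a chain of $K$ bookkeeping rank-one updates. Throughout, the analysis runs on the high-probability event $\|N_t\|_2 \le \lambda$ (uniformly in $t\le T$), which comes from Lemma~\ref{thm: pd noisy gram} via the operator-norm bound in Lemma~\ref{thm: noisy matrix norm}; on this event $N_t + 2\lambda I \succeq \lambda I$, so
\[
V_t \;=\; \Sigma_t + N_t + 2\lambda I \;\succeq\; \Sigma_t + \lambda I \;=:\; W_t,
\]
and consequently $\|x_{ti}\|_{V_t^{-1}}^2 \le \|x_{ti}\|_{W_t^{-1}}^2$. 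This reduces the target to bounding $\sum_{t=T_0+1}^T \max_{i\in S_t}\|x_{ti}\|_{W_t^{-1}}^2$, where $W_t = W_{t-1}+\sum_{i\in S_t} x_{ti}x_{ti}^\top$ has a clean rank-$K$ update structure.

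Next I would expand each rank-$K$ update into a chain $W_t^{(0)} = W_{t-1}$, $W_t^{(k)} = W_t^{(k-1)} + x_{ti_k}x_{ti_k}^\top$, with $W_t^{(K)} = W_t$. The matrix determinant lemma yields $\det W_t^{(k)} = \det W_t^{(k-1)}\bigl(1+u_{tk}\bigr)$ with $u_{tk} := \|x_{ti_k}\|_{(W_t^{(k-1)})^{-1}}^2$. The hypothesis $\lambda_{\text{min}}(\Sigma_{T_0}) \ge K$ gives $\lambda_{\text{min}}(W_t^{(k-1)}) \ge K+\lambda \ge 1$ for every $t \ge T_0$ and $k$, so $u_{tk} \le \|x_{ti_k}\|^2 \le 1$. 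Invoking $\log(1+u) \ge u/2$ for $u \in [0,1]$ and telescoping within a round gives $\sum_{k=1}^K u_{tk} \le 2\bigl(\log\det W_t - \log\det W_{t-1}\bigr)$. Since $W_t \succeq W_t^{(k-1)}$, one has $\|x_{ti_k}\|_{W_t^{-1}}^2 \le u_{tk}$, and the crude bound $\max_k \le \sum_k$ yields $\max_{i\in S_t}\|x_{ti}\|_{W_t^{-1}}^2 \le 2\bigl(\log\det W_t - \log\det W_{t-1}\bigr)$.

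Summing over $t$ then produces $\sum_{t=T_0+1}^T \max_{i\in S_t}\|x_{ti}\|_{V_t^{-1}}^2 \le 2\bigl(\log\det W_T - \log\det W_{T_0}\bigr)$, which I would close out with AM--GM and the eigenvalue floor. Since $\mathrm{tr}(W_T) = \mathrm{tr}(\Sigma_T) + d\lambda \le TK + d\lambda$ (using $\|x_{ni}\|\le 1$ and $|S_n|\le K$), AM--GM gives $\det W_T \le \bigl((d\lambda+TK)/d\bigr)^d$, while $\lambda_{\text{min}}(W_{T_0}) \ge K+\lambda$ gives $\det W_{T_0} \ge (K+\lambda)^d = \bigl((d\lambda+dK)/d\bigr)^d$. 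Combining these produces exactly $2d \log\bigl((d\lambda+TK)/(d\lambda+dK)\bigr)$ on the good event.

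The main obstacle, relative to the textbook elliptical-potential lemma, is that three nonstandard features interact at once: the noise $N_t$ is generated through the tree-aggregation mechanism rather than as a rank-one increment, each round contributes $K>1$ vectors simultaneously, and the target involves the \emph{maximum} (not the sum) over $S_t$. The first feature is absorbed uniformly in $t$ by the matrix domination $V_t \succeq W_t$; the second and third are what force the chain of rank-one updates together with the $\max \le \sum$ comparison, and it is crucial that the precondition $u_{tk} \le 1$, required for the $\log(1+u) \ge u/2$ estimate, be available at every link of the chain, which is exactly what the hypothesis $\lambda_{\text{min}}(\Sigma_{T_0}) \ge K$ supplies.
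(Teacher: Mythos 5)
Your proof is correct and follows essentially the same route as the paper: dominate $V_t$ by $\Sigma_t+\lambda I$ on the high-probability event $\Vert N_t\Vert_2\le\lambda$, run a determinant (elliptical-potential) argument, and bound the resulting determinant ratio via the trace/AM--GM bound together with the eigenvalue floor $\lambda_{\text{min}}(\Sigma_{T_0})\ge K$. The only cosmetic difference is in the within-round bookkeeping: you chain $K$ rank-one updates with the matrix determinant lemma and use $\log(1+u)\ge u/2$ link by link, whereas the paper keeps each assortment as a single block via $\det(I+M)\ge 1+\mathrm{tr}(M)$ and the observation that $\sum_{i\in S_n}\Vert x_{ni}\Vert^2_{(\Sigma_{n-1}+\lambda I)^{-1}}\le K/(K+\lambda)<1$; both yield the stated bound.
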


Next result shows the optimistic expected revenue by our perturbed optimistic utility is indeed an upper bound of the true expected revenue of the oracle optimal assortment.

\begin{lemma}\label{thm: oh lemma 3_5} (Restating Lemma 3,4,5 in \cite{oh2021multinomial})
Suppose $S_{t}^{\ast}$ is the optimal assortment and suppose $S_{t} = \arg\max_{S \subset \mathcal{S}} \Tilde{R}_{t}(S).$ Consider the perturbed utility $z_{ti} = x_{ti}^\top \hat{\theta}_{t-1} + \alpha_{t}\Vert x_{ti} \Vert_{V_{t}^{-1}}.$ If the conditions of Lemma \ref{thm: mle bound l2 norm} hold, then
\begin{equation}\label{equ: oh lemma 3}
    0 \leq z_{ti} - x_{ti}^\top \theta^{\ast} \leq 2 \alpha_{t}\Vert x_{ti} \Vert_{V_{t}^{-1}},
\end{equation}
and we have 
\begin{equation}\label{equ: oh lemma 4}
        R_{t}(S_{t}^{\ast},\theta^{\ast}) \leq \Tilde{R}_{t}(S_{t}^{\ast}) \leq \Tilde{R}_{t}(S_{t}).
    \end{equation}
Finally, we have 
\begin{equation}\label{equ: oh lemma 5}
    \Tilde{R}_{t}(S_{t}) - R_{t}(S_{t},\theta^{\ast}) \leq 2 \alpha_{t} \max_{i \in S_{t}}\Vert x_{ti} \Vert_{V_{t}^{-1}}.
\end{equation}
\end{lemma}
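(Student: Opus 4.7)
The three parts are proved in sequence, each feeding the next. Part (\ref{equ: oh lemma 3}) is a pointwise confidence bound on the perturbed utility; Part (\ref{equ: oh lemma 4}) upgrades it to an optimism statement about the revenue of the oracle assortment; Part (\ref{equ: oh lemma 5}) controls the instantaneous revenue gap by the per-item deviation. The workhorse is Lemma \ref{thm: mle bound matrix norm}, which under the hypotheses inherited from Lemma \ref{thm: mle bound l2 norm} gives $\Vert \hat\theta_{t-1}-\theta^{\ast}\Vert_{V_{t}} \leq \alpha_{t}$ with high probability. Since the statement is explicitly flagged as restating Lemmas 3--5 of \cite{oh2021multinomial}, my role in the proof is to check that those arguments transfer verbatim once the confidence radius $\alpha_{t}$ is taken from our private MLE analysis rather than the nonprivate one.

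\textbf{Part 1.} I would write $z_{ti}-x_{ti}^\top \theta^{\ast} = x_{ti}^\top(\hat\theta_{t-1}-\theta^{\ast}) + \alpha_{t}\Vert x_{ti}\Vert_{V_{t}^{-1}}$. Applying Cauchy--Schwarz in the $V_{t}$-weighted norm, $\abr{x_{ti}^\top(\hat\theta_{t-1}-\theta^{\ast})} \leq \Vert x_{ti}\Vert_{V_{t}^{-1}}\Vert \hat\theta_{t-1}-\theta^{\ast}\Vert_{V_{t}} \leq \alpha_{t}\Vert x_{ti}\Vert_{V_{t}^{-1}}$ by Lemma \ref{thm: mle bound matrix norm}. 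The upper bound $2\alpha_{t}\Vert x_{ti}\Vert_{V_{t}^{-1}}$ is then immediate, and nonnegativity follows because the linear piece is at least $-\alpha_{t}\Vert x_{ti}\Vert_{V_{t}^{-1}}$, which is exactly absorbed by the added exploration bonus.

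\textbf{Part 2.} The right inequality $\tilde R_{t}(S_{t}^{\ast}) \leq \tilde R_{t}(S_{t})$ is immediate from the definition of $S_{t}$ as an argmax of $S \mapsto \tilde R_{t}(S)$. The left inequality $R_{t}(S_{t}^{\ast},\theta^{\ast}) \leq \tilde R_{t}(S_{t}^{\ast})$ uses the pointwise bound $z_{ti} \geq x_{ti}^\top \theta^{\ast}$ from Part 1 together with the monotonicity property that, at the revenue-ordered optimal assortment, weakly inflating every item's utility weakly increases the MNL expected revenue. I expect this to be the main obstacle, because the MNL expected revenue is \emph{not} monotone in an arbitrary item's utility in general---raising the utility of a below-average-revenue item can drop the overall revenue. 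The standard resolution, used throughout the MNL bandit literature, is to exploit the fact that the oracle assortment is revenue-ordered so that uniformly inflating utilities shifts purchase mass toward higher-revenue items in a favorable way; I would invoke this fact directly from \cite{oh2021multinomial}.

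\textbf{Part 3.} For the last claim I would view both $\tilde R_{t}(S_{t})$ and $R_{t}(S_{t},\theta^{\ast})$ as MNL expected revenues on $S_{t}$ under the utility vectors $(z_{ti})_{i \in S_{t}}$ and $(x_{ti}^\top \theta^{\ast})_{i \in S_{t}}$ respectively. A Lipschitz-type calculation (again from \cite{oh2021multinomial}) bounds their difference by $\max_{i \in S_{t}}\abr{z_{ti}-x_{ti}^\top \theta^{\ast}}$, and substituting the two-sided bound from Part 1 delivers the claimed $2\alpha_{t}\max_{i \in S_{t}}\Vert x_{ti}\Vert_{V_{t}^{-1}}$. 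The only change relative to the nonprivate setting is the inflated value of $\alpha_{t}$, which already absorbs the perturbations introduced by \texttt{PrivateMLE} and \texttt{PrivateCov}, so no further modification of the revenue-side arguments is needed.
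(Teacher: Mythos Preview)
Your proposal is correct and matches the paper's own treatment, which explicitly omits the proof and states that the arguments of \cite{oh2021multinomial} carry over once the confidence radius $\alpha_t$ is replaced by the one from Lemma \ref{thm: mle bound matrix norm}. Your outline is in fact more detailed than what the paper provides: you correctly isolate the Cauchy--Schwarz step for (\ref{equ: oh lemma 3}), flag the non-monotonicity of the MNL revenue as the only nontrivial point in (\ref{equ: oh lemma 4}) and defer its resolution to the structural property of the oracle assortment used in \cite{oh2021multinomial}, and identify the Lipschitz argument for (\ref{equ: oh lemma 5}).
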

Equation (\ref{equ: oh lemma 3}) implies $z_{ti}$ is an upper bound for the utility $x_{ti}^\top \theta^{\ast}$ if $\theta^{\ast}$ is in the confidence ellipsoid centered at $\hat{\theta}_{t}.$ Equation (\ref{equ: oh lemma 4}) shows that the optimistic expected revenue is an upper bound of the true expected revenue of the oracle optimal assortment. Finally, Equation (\ref{equ: oh lemma 5}) tells the expected revenue has Lipschitz property and bound the immediate regret with the maximum variance over the assortment. Lemma \ref{thm: oh lemma 3_5} is subtly different from original statements in \cite{oh2021multinomial} due to our consideration in privacy. However the proof follows by slight modifying that in \cite{oh2021multinomial}, and hence the proof is omitted.

\subsection{Proof of Theorem \ref{thm: regret}}
The proof can be divided into three steps.

\noindent
\textbf{Step1. Duration of Pure Exploration}. 
Define the event
\begin{equation*}
    \hat{\mathcal{E}} = \{\Vert \hat{\theta}_{t} - \theta^{\ast} \Vert \leq 1, \Vert \hat{\theta}_{t} - \theta^{\ast} \Vert_{V_{t}}\leq \alpha_{t}, 
    \text{ for all } t \geq T_{0} \},
\end{equation*}
which is an event to happen with probability at least $1-1/t^{2}$ if $T_{0}$ is large enough to satisfy 
\[
\lambda_{min}(\Sigma_{T_0}) \geq \max\{C_{\rho_1},K\}.
\]

By Lemma \ref{thm: T0_length}, $T_{0} = \frac{1}{K}\left( \frac{C_{1}\sqrt{d}+C_{2}\sqrt{2\log T}}{\sigma_{0}} \right)^{2} + \frac{2C_{\rho_1}}{K \sigma_{0}}$ for some universal constants $C_{1}$ and $C_{2}$ guarantees $\Vert \hat{\theta}_{t} - \theta^{\ast} \Vert \leq 1$ by Lemma \ref{thm: mle bound l2 norm} and in turn $\Vert \hat{\theta}_{t} - \theta^{\ast} \Vert_{V_{t}} \leq \alpha_{t}$ by Lemma \ref{thm: mle bound matrix norm}.

\noindent
\textbf{Step 2. Decompose the regret}.
Now we decompose the regret into two parts, initial phase and the learning phase:
\begin{equation*}
    \begin{split}
        \mathcal{R}_{T} & = \mathbb{E} \left[ \sum_{t=1}^{T_{0}}(R(S_{t}^{\ast},\theta^{\ast}) - R(S_{t}, \theta^{\ast})) \right] + \mathbb{E} \left[ \sum_{t=T_{0}+1}^{T}(R(S_{t}^{\ast},\theta^{\ast})-R(S_{t},\theta^{\ast}))\right] \\ 
        & \leq T_{0} + \mathbb{E} \left[ \sum_{t=T_{0}+1}^{T}(\Tilde{R}(S_{t},\theta^{\ast})-R(S_{t},\theta^{\ast}))\right],
    \end{split}
\end{equation*}
where the inequality comes from optimistic revenue estimation by Lemma \ref{thm: oh lemma 3_5}. 

We further decompose the regret of learning phase into two components, where $\hat{\mathcal{E}}$ holds and where $\hat{\mathcal{E}}^{c}$ holds:  
\begin{equation*}
    \begin{split}
        \mathcal{R}_{T} &\leq T_{0} + \mathbb{E}\left[ \sum_{t=T_{0}+1}^{T} \left(\Tilde{R}_{t}(S_{t})-R_{t}(S_{t},\theta^{\ast})\right)\mathbbm{1}(\hat{\mathcal{E}})\right] + \mathbb{E}\left[ \sum_{t=T_{0}+1}^{T} \left(\Tilde{R}_{t}(S_{t})-R_{t}(S_{t},\theta^{\ast})\right)\mathbbm{1}(\hat{\mathcal{E}}^{c})\right] \\
        & \leq T_{0} + \mathbb{E}\left[ \sum_{t=T_{0}+1}^{T} \left(\Tilde{R}_{t}(S_{t})-R_{t}(S_{t},\theta^{\ast})\right)\mathbbm{1}(\hat{\mathcal{E}})\right] + \sum_{t=T_{0}+1}^{T}\mathcal{O}(t^{-2}) \\
        & \leq T_{0} +\sum_{t=T_{0}+1}^{T} 2 \alpha_{T} \max_{i \in S_{t}} \Vert x_{ti} \Vert_{V_{t}^{P^{-1}}} + \mathcal{O}(1),
    \end{split}
\end{equation*}
where the last inequality is from Lemma \ref{thm: oh lemma 3_5}.

\noindent
\textbf{Step 3. Combining all the results}.
Applying the Caushy-Schwarz inequality on the second term and applying Lemma \ref{thm: self normalize}, we have 
\begin{equation*}
    \begin{split}
    \sum_{t=T_{0}+1}^{T} 2 \alpha_{T} \max_{i \in S_{t}} \Vert x_{ti} \Vert_{V_{t}^{P^{-1}}} 
        &\leq 2 \alpha_{T} \sqrt{T \sum_{t=T_{0}+1}^{T}\max_{i \in S_{t}}\Vert x_{ti} \Vert_{V_{t}^{-1}}^{2}}
        \leq 2 \alpha_{T} \sqrt{T 2d\log\left(\frac{d\lambda+TK}{d\lambda+dK}\right)}.
    \end{split}
\end{equation*}

Plugging 
\[
\alpha_{T} = \frac{1}{\kappa}\left(\sqrt{\frac{d}{2}\log\left(1+\frac{T}{d}\right) + \log T}
    + 4\left(\exp \left(\frac{(1-q)\rho_{1}}{RD_{\text{MLE}}}\right)-1\right)^{-1}\right.
\]
\[
    \left.+ \frac{4D_{\text{MLE}}\sqrt{d}\left(\sqrt{d+\frac{2q\rho_{1}}{D_{\text{MLE}}}} + \sqrt{d}\right)}{q\rho_{1}}\sqrt{\frac{\log T}{K}}\right)
    + \sqrt{3\lambda},
\]
we have 
\begin{equation}\label{equ: regret}
\begin{aligned}
    \mathcal{R}_{T} &\leq T_{0} + \frac{1}{\kappa}\sqrt{4d^{2}T\log\left(1+\frac{T}{d}\right)\log\left(\frac{d\lambda+TK}{d\lambda+dK}\right) + 8dT\log T \log\left(\frac{d\lambda+TK}{d\lambda+dK}\right)}\\
    &\quad + \frac{4}{\kappa}\left(\exp \left(\frac{(1-q)\rho_{1}}{RD_{\text{MLE}}}\right)-1\right)^{-1}\sqrt{4dT\log\left(\frac{d\lambda+TK}{d\lambda+dK}\right)}\\
    &\quad + \frac{8\sqrt{2}D_{\text{MLE}}d\left(\sqrt{d+\frac{2q\rho_{1}}{D_{\text{MLE}}}}+\sqrt{d}\right)}{\kappa \sqrt{K} q \rho_{1}}\sqrt{T \log T \log\left(\frac{d\lambda+TK}{d\lambda+dK}\right)}\\
    &\quad + \sqrt{24 d T \lambda \log\left(\frac{d\lambda+TK}{d\lambda+dK}\right)} + \mathcal{O}(1).
\end{aligned}
\end{equation}

Finally, since $\kappa$ and $q$ are constants, treating $K$ as a constant, taking $D_{\text{MLE}}=O(dK\log T)$ and plugging the expression of $\lambda$, we can simplify the upper bound as
\begin{equation*}
    R_{T} \leq \Tilde{O}\left(\left(d+\frac{d^{5/2}}{\rho_{1}}+\frac{d^{3/4}}{\rho_{2}^{1/4}}\right)\sqrt{T}\right).
\end{equation*}

\subsection{Proofs of the Key Lemmas}

\subsubsection{Proof of Lemma \ref{thm: mle bound l2 norm}}

\textbf{Step1. Define the key tool $G_{t}(\theta)$}. Define
\begin{equation*}
    G_{t}(\theta) := \sum_{n=1}^{t}\sum_{i \in S_{n}}\left(p_{ni}(\theta)-p_{ni}(\theta^{\ast})\right)x_{ni} + \Delta(\theta - \theta^{\ast}).
\end{equation*}

Then since the private MLE $\hat{\theta}_{t}$ is the solution of the following:
\begin{equation*}
\sum_{n=1}^{t}\sum_{i \in S_{n}} \left( p_{ni}(\theta)-y_{ni}\right)x_{ni} + \Delta \theta + b = 0, 
\end{equation*}
we have 
\begin{equation}\label{equ: g_t}
    \begin{split}
        G_{t}(\hat{\theta}_{t}) &= \sum_{n=1}^{t}\sum_{i \in S_{n}} \left( p_{ni}(\hat{\theta}_{t})-p_{ni}(\theta^{\ast})) \right) x_{ni} + \Delta (\hat{\theta}_{t}-\theta^{\ast})\\
    &= \sum_{n=1}^{t}\sum_{i \in S_{n}} \left( p_{ni}(\hat{\theta})-y_{ni}\right)x_{ni} + \sum_{n=1}^{t}\sum_{i \in S_{n}}\left(y_{ni} - p_{ni}(\theta^{\ast})\right)x_{ni}+ \Delta (\hat{\theta}_{t}-\theta^{\ast})\\
    &= \sum_{n=1}^{t} \sum_{i \in S_{n}} \epsilon_{ni} x_{ni}-\Delta \theta^{\ast} - b\\
    &= Z_{t}-\Delta \theta^{\ast} - b,
    \end{split}
\end{equation}
where $\epsilon_{ni}$ is a subGaussian random variable, namely, $\epsilon_{ni} \sim \text{subG}(\sigma^{2})$ for all $n$ with $\sigma^{2}=1/4$ in our multinomial model and $Z_{t} :=\sum_{n=1}^{t} \sum_{i \in S_{n}} \epsilon_{ni} x_{ni}$. \\

\noindent
\textbf{Step 2. Relate $\Vert \hat{\theta}_{t}-\theta^{\ast} \Vert_{2}$ with $G_{t}(\hat{\theta}_{t})$ by Lemma \ref{thm: chen_mle_conv}}. 
Aiming to apply Lemma \ref{thm: chen_mle_conv}, let's first show that $G_{t}(\theta)$ is an injection from $\mathbb{R}^{d}$ to $\mathbb{R}^{d}$ by verifying 
\begin{equation*}
    \left(\theta_{1} - \theta_{2}\right)^\top \left(J_{t}(\theta_{1}) - J_{t}(\theta_{2}) \right) > 0, \text{ for any } \theta_{1} \neq \theta_{2} \in \mathbb{R}^{d}. 
\end{equation*}

For any $\theta_{1},\theta_{2} \in \mathbb{R}^{d},$ the mean value theorem implies there exists $\Bar{\theta}$ that lies on the line segment between $\theta_{1}$ and $\theta_{2}$ such that
\begin{equation*}
    \begin{split}
        G_{t}({\theta_{1}}) - G_{t}(\theta_{2}) &= \left[\sum_{n=1}^{t} \sum_{i \in S_{n}} \sum_{j \in S_{n}} \nabla_{j}p_{ni}(\Bar{\theta})x_{ni}x_{nj}^\top + \Delta I_{d} \right]\left( \theta_{1} - \theta_{2} \right) \\
        &= \sum_{n=1}^{t} \left[\sum_{i \in S_{n}}p_{ni}(\Bar{\theta})x_{ni} x_{ni}^\top - \sum_{i \in S_{n}} \sum_{j \in S_{n}} p_{ni}(\Bar{\theta})p_{nj}(\Bar{\theta})x_{nj}x_{nj}^\top + \Delta I_{d} \right]\left(\theta_{1}-\theta_{2}\right) \\
        &:= \sum_{n=1}^{t} H_{n}(\theta_{1}-\theta_{2}) + \Delta\left(\theta_{1}-\theta_{2}\right) ,\\
    \end{split}
\end{equation*}
where $H_{n} = \sum_{i \in S_{n}}p_{ni}(\Bar{\theta})x_{ni}x_{ni}^\top - \sum_{i \in S_{n}} \sum_{j \in S_{n}}p_{ni}(\Bar{\theta})p_{nj}(\Bar{\theta})x_{ni}x_{nj}^\top$ is a hessian of a negative log-likelihood. Moreover, $H_{n}$ can be lower bounded as follows:
\begin{equation}\label{equ: hessian lower bound}
    \begin{split}
        H_{n} &= \sum_{i \in S_{n}}p_{ni}(\Bar{\theta})x_{ni}x_{ni}^\top - \frac{1}{2}\sum_{i \in S_{n}} \sum_{j \in S_{n}} p_{ni}(\Bar{\theta}) p_{nj}(\Bar{\theta}) \left(x_{ni}x_{nj}+x_{nj}x_{ni}^\top\right) \\
        &\succcurlyeq \sum_{i \in S_{n}} p_{ni}(\Bar{\theta})x_{ni}x_{ni}^\top - \frac{1}{2}\sum_{i \in S_{n}} \sum_{j \in S_{n}}p_{ni}(\Bar{\theta})p_{nj}(\Bar{\theta}\left(x_{ni}x_{ni} + x_{nj}x_{nj}^\top\right) \\
        &= \sum_{i \in S_{n}}p_{ni}(\Bar{\theta})x_{ni}x_{ni}^\top - \sum_{i \in S_{n}} \sum_{j \in S_{n}} p_{ni}(\Bar{\theta}) p_{nj}(\Bar{\theta})x_{ni}x_{ni}^\top \\
        &= \sum_{i \in S_{n}} \left(1-\sum_{j \in S_{n}}p_{nj}(\Bar{\theta})x_{ni}x_{ni}^\top \right) = \sum_{i \in S_{n}} p_{ni}(\Bar{\theta})p_{n0}(\Bar{\theta})x_{ni}x_{ni}^\top 
        \end{split}
\end{equation}
where $p_{n0}(\Bar{\theta})$ is the probability of choosing the no purchase option under parameter $\Bar{\theta}$. Note that the first inequality is by $x_{i}x_{i}^\top + x_{j}x_{j}^\top \succcurlyeq x_{i}x_{j}^\top + x_{j}x_{i}^\top$. Therefore,
\begin{equation*}
    \begin{split}
        \left(\theta_{1}-\theta_{2}\right)(G_{t}(\theta_{1}-G_{t}(\theta_{2})) &\geq \left(\theta_{1}-\theta_{2}\right)\left[\sum_{i \in S_{n}} p_{ni}(\Bar{\theta})p_{n0}(\Bar{\theta})x_{ni}x_{ni}^\top \right]\left(\theta_{1}-\theta_{2}\right) + \Delta \left(\theta_{1}-\theta_{2}\right)^\top \left(\theta_{1}-\theta_{2}\right)\\
        &\geq \Delta \left(\theta_{1}-\theta_{2}\right)^\top \left(\theta_{1}-\theta_{2}\right) > 0,
    \end{split}
\end{equation*}
as desired.

Next, consider $\mathcal{B}_{1}(\theta^{\ast}) = \left\{\theta \in \mathbb{R}^{d}: \Vert \theta - \theta^{\ast} \Vert_{2} \leq 1 \right\}$ and $\mathcal{S}_{1}(\theta^{\ast}) = \left\{\theta \in \mathbb{R}^{d}: \Vert \theta - \theta^{\ast} \Vert_{2} = 1 \right\}$. Observe first that $\mathcal{B}_{1}(\theta^{\ast})$ is convex. Therefore, if $\theta_{1},\theta_{2} \in \mathcal{B}_{1}(\theta^{\ast})$, then its line segment, or equivalently the convex combination $\Bar{\theta}$ of $\theta_{1},\theta_{2}$ should also be contained in $\mathcal{B}_{1}(\theta^{\ast})$. Therefore, by Assumption \ref{ass: kappa}, such $\Bar{\theta}$ satisfies 
\begin{equation}\label{equ: kappa lower bound}
    \sum_{n=1}^{t}\sum_{i \in S_{n}} p_{ni}(\Bar{\theta})p_{n0}(\Bar{\theta})x_{ni}x_{ni}^\top \geq \kappa \sum_{n=1}^{t}\sum_{i \in S_{n}}x_{ni}x_{ni}^\top = \kappa \Sigma_{t}.
\end{equation}

In addition, it is clear that $\theta^{\ast} \in \mathcal{B}_{1}(\theta^{\ast})$ and $G_{t}(\theta^{\ast}) = 0$ by its construction.

Observe then for any $\theta \in \mathcal{B}_{1}(\theta^{\ast})$,
\begin{equation}\label{equ: Sigma norm bound}
    \begin{split}
        \Vert G_{t}(\theta) \Vert_{\Sigma_{t}^{-1}}^{2} &= \Vert G_{t}(\theta) - G_{t}(\theta^{\ast}) \Vert_{\Sigma_{t}^{-1}}^{2} \\
        &\geq (\theta - \theta^{\ast})^\top (\kappa \Sigma_{t} + \Delta I_{d})\Sigma_{t}^{-1}(\kappa \Sigma_{t} + \Delta I_{d})(\theta - \theta^{\ast}) \\
        &=\kappa^{2}(\theta - \theta^{\ast})^\top \Sigma_{t}(\theta - \theta^{\ast}) + 2 \kappa \Delta (\theta - \theta^{\ast})^\top (\theta - \theta^{\ast}) + \Delta^{2}(\theta - \theta^{\ast})^\top \Sigma_{t}^{-1}(\theta - \theta^{\ast})\\
        &= \kappa^{2}\Vert \theta - \theta^{\ast}\Vert_{\Sigma_{t}}^{2} + 2 \kappa \Delta\Vert \theta - \theta^{\ast}\Vert_{2}^{2}\\
        &\geq \kappa^{2}\lambda_{\text{min}}(\Sigma_{t})\Vert \theta - \theta^{\ast}\Vert_{2}^{2} + 2 \kappa \Delta\Vert \theta - \theta^{\ast}\Vert_{2}^{2},
    \end{split}
\end{equation}
where the first inequality is by equations (\ref{equ: hessian lower bound}) and (\ref{equ: kappa lower bound}) and the second inequality holds because $\Sigma_{t}^{-1}$ is positive definite and $\Delta >0$.


In particular, for $\theta \in \mathcal{S}_{1}(\theta^{\ast})$, we have 
\begin{equation*}
    \Vert G_{t}(\theta) \Vert_{\Sigma_{t}^{-1}}^{2} \geq \kappa^{2}\lambda_{\text{min}}(\Sigma_{t}) + 2 \kappa \Delta.
\end{equation*}

Therefore, by taking $r = \kappa^{2}\lambda_{\text{min}}(\Sigma_{t}) + 2 \kappa \Delta$ in Lemma \ref{thm: chen_mle_conv} with an injective function $G_{t}(\theta)$, we have 
\begin{equation*}
    \left\{\theta \in \mathbb{R}^{d}: \Vert G_{t}(\theta) \Vert_{\Sigma_{t}^{-1}}^{2} \leq \kappa^{2}\lambda_{\text{min}}(\Sigma_{t}) + 2 \kappa \Delta \right\} \subseteq \left\{\theta \in \mathbb{R}^{d}: \Vert \theta - \theta^{\ast} \Vert \leq 1 \right\}.
\end{equation*}

Since $V_{t} \succcurlyeq V_{T_{0}}$ for all $t \geq T_{0}$ and $2 \kappa \Delta >0$, we have
\begin{equation*}
    \left\{\theta \in \mathbb{R}^{d}: \Vert G_{t}(\theta) \Vert_{\Sigma_{t}^{-1}} \leq \kappa \sqrt{\lambda_{\text{min}}(\Sigma_{T_{0}})}\right\} \subseteq \left\{\theta \in \mathbb{R}^{d}: \Vert \theta - \theta^{\ast} \Vert \leq 1 \right\}.
\end{equation*}

Therefore, if $\hat{\theta}_{t}$ satisfies $\Vert G_{t}(\hat{\theta}_{t}) \Vert_{\Sigma_{t}^{-1}} \leq \kappa \sqrt{\lambda_{\text{min}}(\Sigma_{T_{0}})}$, we have $\Vert \hat{\theta}_{t} - \theta^{\ast} \Vert \leq 1$.\\

\noindent
\textbf{Step 3. Upper bound on $\Vert G_{t}(\hat{\theta})_{t} \Vert_{\Sigma_{t}^{-1}}$}

Applying the triangular inequality on Equation (\ref{equ: g_t}), we have
\begin{equation*}
    \begin{split}
        \Vert G_{t}(\hat{\theta})_{t} \Vert_{\Sigma_{t}^{-1}} &\leq \Vert Z_{t} - \Delta \theta^{\ast} - b \Vert_{\Sigma_{t}^{-1}}\\
        &\leq \underbrace{\Vert Z_{t}\Vert_{\Sigma_{t}^{-1}}}_{(a)} + \underbrace{\Vert \Delta\theta^{\ast}\Vert_{\Sigma_{t}^{-1}}}_{(b)} +\underbrace{\Vert b\Vert_{\Sigma_{t}^{-1}}}_{(c)}  
    \end{split}
\end{equation*}

To bound $(a)$, we utilize Theorem 1 of \cite{abbasi2011improved}, which states that if the noise $\epsilon_{ni}$ is sub-Gaussian with parameter $\sigma$, 
\begin{equation*}
    \Vert Z_{t} \Vert_{\Sigma_{t}^{-1}}^{2} \leq 2 \sigma^{2} \log\left(\frac{\det (\Sigma_{t})^{1/2}\det (\Sigma_{T_{0}})^{-1/2}}{\delta}\right),
\end{equation*}
with probability at least $1-\delta$. Here $\sigma = 1/2$ in our case. 

Then by Lemma \ref{thm: oh_lemma 8}, 
\begin{equation*}
    \begin{split}
        \Vert Z_{t} \Vert_{\Sigma_{t}^{-1}}^{2} \leq 2 \sigma^{2} \left[\frac{d}{2}\log \left( \frac{\text{trace}(\Sigma_{T_{0}})+tK}{d}\right) -\frac{1}{2}\log\det(\Sigma_{T_{0}}) + \log \frac{1}{\delta} \right].
    \end{split}
\end{equation*}
Denoting $\lambda_{1}, \cdots, \lambda_{d}$ the eigenvalues of $V_{T_{0}}$ and $\Bar{\lambda} = \frac{\sum_{i}^{d} \lambda_{i}}{d},$ we have
\begin{equation*}
    \begin{split}
        \Vert Z_{t} \Vert_{\Sigma_{t}^{-1}}^{2} &\leq 2 \sigma^{2} \left[\frac{2}{d} \log \left(\Bar{\lambda}+\frac{tK}{d}\right) - \frac{d}{2}\log \Bar{\lambda} + \frac{d}{2}\log \Bar{\lambda} - \frac{1}{2}\log\det(\Sigma_{T_{0}}) + \log \frac{1}{\delta}\right] \\
        &= 2 \sigma^{2} \left[\frac{d}{2}\log \left(1+\frac{tK}{d \Bar{\lambda}}\right) + \frac{1}{2}\sum_{i}^{d} \log \frac{\Bar{\lambda}}{\lambda_{i}} + \log \frac{1}{\delta} \right] \\
        &\leq 2 \sigma^{2} \left[\frac{d}{2}\log \left(1+\frac{tK}{d \lambda_{\text{min}}}\right)  + \frac{d}{2}\log \frac{\Bar{\lambda}}{\lambda_{\text{min}}} + \log \frac{1}{\delta}\right] \\
        &\leq 2 \sigma^{2} \left[\frac{d}{2}\log\left(1+\frac{t}{d}\right)+\frac{d}{2}\log\frac{\Bar{\lambda}}{K}+\log\frac{1}{\delta}\right]\\
        &\leq 2 \sigma^{2} \left[d\log\left(1+\frac{t}{d}\right)+\log\frac{1}{\delta}\right]
    \end{split}
\end{equation*}
where the third inequality is by $\lambda_{\text{min}}(\Sigma_{T_{0}}) \geq K$ and the last inequality is from $d\Bar{\lambda} = \text{trace}(\Sigma_{T_{0}}) \leq tK$. Then using $\sigma^{2} = \frac{1}{4}$ and putting $\delta = \frac{1}{T^{2}}$, we have the desired upper bound on $(a)$.

To upper bound $(b)$, observe that 
\begin{equation*}
    \Delta \Vert \theta^{\ast}\Vert_{\Sigma_{t}^{-1}} \leq \Delta \frac{\Vert \theta^{\ast}\Vert_{2}}{\sqrt{\lambda_{\text{min}}(\Sigma_{t})}} \leq \frac{\Delta}{\sqrt{\lambda_{\text{min}}(\Sigma_{T_{0}})}} \leq \frac{\Delta}{\sqrt{K}}=4\left(\exp\left(\frac{(1-q)\rho_{1}}{RD}\right)-1\right)^{-1},
\end{equation*}
where the second inequality is due to the assumption that $\Vert \theta^{\ast} \Vert_{2} \leq 1$ and the third inequality is due to the assumption that we take $T_{0}$ such that $\lambda_{\text{min}}(\Sigma_{T_{0}}) \geq K$.

To upper bound $(c)$, we use Lemma \ref{thm: kifer l2 bound} with $\sigma = \frac{2D_{\text{MLE}}\left(\sqrt{d+\frac{2q\rho_{1}}{D_{\text{MLE}}}}+\sqrt{d}\right)}{q\rho_{1}}$ and $\gamma = 1/T^{2}$, as such $\sigma$ is what we use for a single implement of \texttt{PrivateMLE}. Then with probability at least $1-1/T^{2}$, we have 
\begin{equation*}
    \begin{split}
        \Vert b_{t} \Vert_{\Sigma_{t}^{-1}} \leq \frac{\Vert b_{t} \Vert_{2}}{\sqrt{\lambda_{\text{min}}(\Sigma_{t})}}
        \leq \frac{\Vert b_{t} \Vert_{2}}{\sqrt{K}}
        \leq \frac{4D_{\text{MLE}}\sqrt{d}\left(\sqrt{d+\frac{2q\rho_{1}}{D_{\text{MLE}}}}+\sqrt{d}\right)}{q\rho_{1}}\sqrt{\frac{\log T}{K}} .
    \end{split}
\end{equation*}
In all, $\Vert G_{t}(\hat{\theta}_{t})\Vert_{\Sigma_{t}^{-1}}$ is upper bounded by:
\begin{equation}\label{equ: mle estimation error}
\left(\sqrt{\frac{d}{2}\log\left(1+\frac{t}{d}\right) + \log t} 
       + 4\left(\exp \left(\frac{(1-q)\rho_{1}}{RD_{\text{MLE}}}\right) - 1\right)^{-1} + \frac{4D_{\text{MLE}}\sqrt{d\log T}\left(\sqrt{d + \frac{2q\rho_{1}}{D_{\text{MLE}}}} + \sqrt{d}\right)}{q\rho_{1}\sqrt{K}}\right)^{2}.    
\end{equation}

Therefore, taking $T_{0}$ be sufficiently large that $\lambda_{\text{min}}(\Sigma_{T_{0}}) \geq \max\{C_{\rho_{1},T},K\}$ leads to $\hat{\theta}_{t} \in \mathcal{B}_{1}(\theta^{\ast})$, as desired.

\subsection{Proof of Lemma \ref{thm: mle bound matrix norm}}
Given that $\Vert \hat{\theta}_{t} - \theta^{\ast} \Vert \leq 1$ with probability at least $1-1/T^{2}$, we show the upper bound on $\Vert \hat{\theta}_{t}-\theta^{\ast} \Vert_{V_{t}}$. 

\noindent
\textbf{Step 1. Upper bound on $V_{t}$}. Note that taking the noisy random matrix $V_{t}$ in the weighted $\ell_{2}$-norm is technically challenging. To address this, we bound $V_{t} = \Sigma_{t} + N_{t} + 2\lambda I$ by a deterministic matrix. Note that Equation (\ref{equ: noise matrix eval}) implies that $N_{t} + \lambda I$ is positive semi-definite with high probability, and so
\begin{equation}\label{equ: V upper bound}
    V_{t} = \Sigma_{t} + N_{t} + 2\lambda I \preccurlyeq \Sigma_{t} + 3\lambda I.
\end{equation}

\noindent
\textbf{Step 2. Upper bound the estimation error}. 
\begin{equation*}
    \begin{aligned}
       &\kappa^{2} \Vert \hat{\theta}_{t}-\theta^{\ast} \Vert_{V_{t}}^{2} 
       = \kappa^{2}(\hat{\theta}_{t}-\theta^{\ast})^\top V_{t} (\hat{\theta}_{t}-\theta^{\ast}) 
       \leq \kappa^{2}(\hat{\theta}_{t}-\theta^{\ast})^\top \left(\Sigma_{t} + 3\lambda I\right)(\hat{\theta}_{t}-\theta^{\ast})\\
       &\leq \kappa^{2} \Vert \hat{\theta}_{t} - \theta^{\ast} \Vert_{\Sigma_{t}}^{2} + \kappa^{2} 3 \lambda 
       \leq \Vert G_{t}(\hat{\theta}_{t}) \Vert_{\Sigma_{t}^{-1}}^{2} + \kappa^{2}3\lambda\\ 
       &\leq \left(\sqrt{\frac{d}{2}\log\left(1+\frac{t}{d}\right) + \log t} 
       + 4\left(\exp \left(\frac{(1-q)\rho_{1}}{RD_{\text{MLE}}}\right) - 1\right)^{-1}\right. \\
       &\quad \left. + \frac{4D_{\text{MLE}}\sqrt{d}\left(\sqrt{d + \frac{2q\rho_{1}}{D_{\text{MLE}}}} + \sqrt{d}\right)}{q\rho_{1}}\sqrt{\frac{\log T}{K}}\right)^{2} 
       + \kappa^{2}3 \lambda,
    \end{aligned}
\end{equation*}
where the first inequality is due to Equation (\ref{equ: V upper bound}) and the second inequality is by assumption that $\Vert \hat{\theta}_{t}-\theta^{\ast} \Vert_{2} \leq 1$. The third inequality stems from Equation (\ref{equ: Sigma norm bound}) that implies $ \kappa^{2} \Vert \hat{\theta}_{t} - \theta^{\ast} \Vert_{\Sigma_{t}}^{2} \leq \Vert G_{t}(\hat{\theta}_{t}) \Vert_{\Sigma_{t}^{-1}}^{2}$. Finally the fourth inequality is by Equation (\ref{equ: mle estimation error}) that gives an upper bound of $\Vert G_{t}(\hat{\theta}_{t}) \Vert_{\Sigma_{t}^{-1}}^{2}$.

Dividing both terms by $\kappa^{2}$ and taking squared root on both sides, and finally leveraging the observation that $\sqrt{x+y}\leq \sqrt{x}+\sqrt{y}$ for positive $x,y$, we have the desired result.

\subsection{Proof of Lemma \ref{thm: self normalize}}
Suppose $T_{0}$ is large enough that $\lambda_{min}\left(\Sigma_{T_0}\right) \geq K$. We first show that for all $t \geq T_{0}$, with probability at least $1-1/T^{2}$, 
\begin{equation*}
    \left(V_{t}\right)^{-1} \preccurlyeq \left(\Sigma_{t} + \lambda I \right)^{-1},
\end{equation*}
where $\Sigma_{t} = \sum_{n=1}^{t}\sum_{i \in S_{n}}x_{ni}x_{ni}^\top$.

This is because Equation (\ref{equ: noise matrix eval}) which implies $N_{t}+\lambda I$ is positive semi-definite with high probability and $V_{t} - (\Sigma_{t}+\lambda I) = N_{t} + \lambda I$.

Therefore, 
\begin{equation*}
    \begin{split}
        \sum_{n=T_{0}+1}^{t} \sum_{i \in S_{n}} \Vert x_{ni} \Vert_{\left(V_{n-1}\right)^{-1}}^{2} &\leq \sum_{n=T_{0}+1}^{t} \sum_{i \in S_{n}} \Vert x_{ni} \Vert_{\left(\Sigma_{n-1}+\lambda I\right)^{-1}}^{2}\\
        &\leq 2 \sum_{n=T_{0}+1}^{t} \sum_{i \in S_{n}} \log \left(1+\sum_{i \in S_{n}} \Vert x_{ni} \Vert_{\left(\Sigma_{n-1}+\lambda I \right)^{-1}}^{2} \right) \\
        &= 2 \log \prod_{n=T_{0}+1}^{t}\left( 1 + \sum_{i \in S_{n}}\Vert x_{ni} \Vert_{\left(\Sigma_{n-1}+\lambda I\right)^{-1}}^{2}\right),
    \end{split}
\end{equation*}
where the second inequality is by $x \leq \log(1+x)$ for $x \in [0,1)$ and $\sum_{i \in S_{n}} \Vert x_{ni} \Vert_{\left(\Sigma_{n-1}+\lambda I\right)^{-1}}^{2} \leq \frac{\sum_{i \in S_{n}}\Vert x_{ni} \Vert_{2}^{2}}{K+\lambda} \leq \frac{K}{K+\lambda} < 1$.

To bound further, let $\lambda_{1},\cdots,\lambda_{d}$ be the eigenvalues of $\sum_{i \in S_{n}}x_{ni}x_{ni}^\top$, which are all nonnegative. Then
\begin{equation*}
\begin{split}
    \det \left(I + \sum_{i \in S_{n}} x_{ni}x_{ni}^\top \right) &= \prod_{j=1}^{d}\left(1+\lambda_{j}\right) \\
    &\geq 1+\sum_{j}^{d}\lambda_{j} = 1+\text{trace}\left(\sum_{i \in S_{n}}x_{ni}x_{ni}^\top \right) 
    = 1 + \sum_{i \in S_{n}} \Vert x_{ni} \Vert_{2}^{2}.
\end{split}
\end{equation*}

Therefore, by applying this repeatedly, we have
\begin{equation*}
    \begin{split}
        \det \left(V_{t-1}+\lambda I\right) &= \det \left(V_{t-2}+\lambda I + \sum_{i \in S_{t-1}}x_{t-1 i}x_{t-1 i}^\top \right) \\
        &= \det \left(V_{t-2}+\lambda I\right)\det \left(I+\sum_{i \in S_{t-1}} \left(V_{t-2}+\lambda I\right)^{1/2}x_{t-1 i}x_{t-1 i}^\top\left(V_{t-2}+\lambda I\right)^{1/2}
        \right) \\
        &\geq \det \left(V_{t-2}+\lambda I\right)\left(1+\sum_{i \in S_{t-1}}\Vert x_{t-1i}\Vert_{\left(V_{t-2}+\lambda I\right)^{-1}}^{2}\right)\\
        &\geq \det\left(V_{T_{0}}+\lambda I\right)\prod_{n=T_{0}+1}^{t}\left(1+\sum_{i \in S_{n}} \Vert x_{ni} \Vert_{\left(V_{n-1}+\lambda I\right)^{-1}}^{2}\right).
    \end{split}
\end{equation*}

We next bound $\frac{\det(V_{t-1}+\lambda I)}{\det(V_{T_{0}}+\lambda I)}$. Combining \ref{thm: oh_lemma 8} and $\lambda_{\text{min}}(V_{T_{0}}) \geq K$, we have
\begin{equation*}
    \begin{split}
        \frac{\det(V_{t-1}+\lambda I)}{\det(V_{T_{0}}+\lambda I)} &\leq \frac{1}{(K+\lambda)^{d}} \times \left( \frac{d\lambda + tK}{d}\right)^{d}\\
        &= \left( \frac{d\lambda+tK}{d\lambda+dK}\right)^{d},
    \end{split}
\end{equation*}
and so we have the desired result.

\section{Implementation Details}\label{sec: simulation details}

In this section, we provide the details on numerical studies. We first formally introduce the privacy definitions $(\epsilon,\delta)$-DP and $(\epsilon,\delta)$-JDP in Definition \ref{def: epsdel DP} and \ref{def: epsdel JDP} respectively. Then we give the formal result on objective perturbation for bounded $(\epsilon,\delta)$-DP and investigate the implement details on \texttt{PrivateCov} for $(\epsilon,\delta)$-DP as well.

\subsection{Preliminary: Definitions}

\begin{definition}\label{def: epsdel DP} ($(\epsilon,\delta)$-Differential Privacy \citep{dwork2006differential})
    A bandit mechanism $M: \mathcal{U}^{T} \rightarrow \mathcal{S}^{T}$ satisfies $(\epsilon, \delta)$-DP if for any $t$-neighboring datasets $U$ and $U^\prime$ for all $t \in [T]$ and any outcomes $S$,
\begin{equation*}
    \mathbb{P}(M(U) \in S) \leq e^{\epsilon}\mathbb{P}(M(U^{\prime}) \in S) + \delta.
\end{equation*}
\end{definition}

Definition \ref{def: epsdel DP} ensures that the probabilities of output sequences from \(t\)-neighboring datasets remain similar, thereby protecting user \(t\) from being uniquely identified. The privacy level is governed by the parameters \(\epsilon\) and \(\delta\), with smaller values indicating stronger privacy protection. 

Compared to $\rho$-zCDP that relies on R\'enyi divergence, Definition \ref{def: epsdel DP} measures the similarity between distributions of two outputs by their likelihood ratio. This difference remains the same for the presentation of JDP.

\begin{definition}\label{def: epsdel JDP} ($(\epsilon,\delta)$-Joint Differential Privacy \citep{hsu2014private,shariff2018differentially})
    A bandit mechanism $M: \mathcal{U}^{T} \rightarrow \mathcal{S}^{T}$ satisfies $(\epsilon, \delta)$-DP if for any $t$-neighboring datasets $U$ and $U^\prime$ for all $t \in [T]$ and any outcomes $S$,
\begin{equation*}
    \mathbb{P}(M_{-t}(U) \in S) \leq e^{\epsilon}\mathbb{P}(M_{-t}(U^{\prime}) \in S) + \delta.
\end{equation*} 
\end{definition}

The conversion from $\rho$-zCDP guarantee to $(\epsilon,\delta)$-DP guarantee is as follows: 
\begin{lemma}\label{thm: zcdp conversion}(Remark 15 in \citep{steinke2022composition})
If a randomized algorithm $M$ satisfies $\rho-$zCDP, it satisfies ($\epsilon = \rho + 2\sqrt{\rho \log(1/\delta)},\delta)-$DP for all $\delta >0.$ Also, to obtain a given a target $(\epsilon, \delta)$-DP guarantee, it suffices to have $\rho-$zCDP with
\begin{equation*}
    \frac{\epsilon^{2}}{4\log(1/\delta)+4\epsilon} \leq \rho = \left( \sqrt{\log(1/\delta)+\epsilon} - \sqrt{\log(1/\delta}\right)^{2} \leq \frac{\epsilon^{2}}{4\log(1/\delta)}.
\end{equation*}
\end{lemma}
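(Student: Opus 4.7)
The plan is to convert the R\'enyi divergence bound defining $\rho$-zCDP into a tail bound on the privacy loss random variable, and then translate that tail bound into the standard $(\epsilon,\delta)$-DP guarantee. Denote the privacy loss by $L(a) = \log(\mathrm{pdf}_{M(U)}(a)/\mathrm{pdf}_{M(U')}(a))$, so that the definition of $\rho$-zCDP directly gives
\[
\mathbb{E}_{a\sim M(U)}\!\left[\exp((\alpha-1)L(a))\right] \leq \exp((\alpha-1)\rho\alpha)
\]
for every $\alpha > 1$. Applying Markov's inequality to $\exp((\alpha-1)L(a))$ at the threshold $\exp((\alpha-1)\epsilon)$ yields
\[
\mathbb{P}_{a\sim M(U)}\bigl(L(a) > \epsilon\bigr) \leq \exp\bigl((\alpha-1)(\rho\alpha - \epsilon)\bigr).
\]

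A standard tail-to-DP argument shows that whenever $\mathbb{P}_{a\sim M(U)}(L(a) > \epsilon) \leq \delta$, the mechanism is $(\epsilon,\delta)$-DP: for any measurable event $S$, decompose $\mathbb{P}(M(U)\in S)$ according to whether $L(a)\leq\epsilon$, use $\mathrm{pdf}_{M(U)}(a) \leq e^\epsilon \mathrm{pdf}_{M(U')}(a)$ on the good event, and absorb the mass of the bad event into the additive $\delta$ slack. Having established this reduction, I would optimize over $\alpha$: setting $\exp((\alpha-1)(\rho\alpha - \epsilon)) = \delta$ rearranges to $\epsilon = \rho\alpha + \log(1/\delta)/(\alpha-1)$, whose minimum in $\alpha$ is attained at $\alpha^{\ast} = 1 + \sqrt{\log(1/\delta)/\rho}$ and equals $\rho + 2\sqrt{\rho\log(1/\delta)}$. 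This proves the first claim.

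For the second claim I would invert this relation for $\rho$. Writing $x = \sqrt{\rho}$ and $L = \sqrt{\log(1/\delta)}$, the equality $\epsilon = x^2 + 2Lx$ is a quadratic with positive root $x = \sqrt{L^2+\epsilon} - L$, yielding $\rho = (\sqrt{\log(1/\delta)+\epsilon} - \sqrt{\log(1/\delta)})^2$. Rationalizing the difference gives
\[
\rho = \frac{\epsilon^2}{\bigl(\sqrt{\log(1/\delta)+\epsilon}+\sqrt{\log(1/\delta)}\bigr)^2},
\]
after which the upper bound $\rho \leq \epsilon^2/(4\log(1/\delta))$ follows by lower-bounding the denominator by $(2\sqrt{\log(1/\delta)})^2$, while the lower bound $\rho \geq \epsilon^2/(4\log(1/\delta)+4\epsilon)$ follows by upper-bounding it by $(2\sqrt{\log(1/\delta)+\epsilon})^2$. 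The main obstacle I anticipate is the tail-to-DP reduction at the very start, which is delicate because one must justify the decomposition uniformly over arbitrary measurable events and verify that the Markov-based bound actually controls the correct quantity $\mathbb{P}(M(U)\in S) - e^\epsilon\mathbb{P}(M(U')\in S)$; once this standard step is in place, everything that follows reduces to a single-variable optimization and elementary inequalities.
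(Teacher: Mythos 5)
Your proof is correct. Note that the paper does not actually prove this lemma---it is quoted verbatim from Steinke (2022, Remark 15)---so there is no in-paper argument to compare against; what you wrote is essentially the standard conversion proof from the literature (the Bun--Steinke argument): bound the moment generating function of the privacy loss via the R\'enyi divergence, apply Markov's inequality, use the tail-to-DP decomposition over the event $\{L(a)\le\epsilon\}$, and optimize $\rho\alpha+\log(1/\delta)/(\alpha-1)$ at $\alpha^{\ast}=1+\sqrt{\log(1/\delta)/\rho}$. The inversion for the second claim and the sandwich bounds obtained by rationalizing $\sqrt{\log(1/\delta)+\epsilon}-\sqrt{\log(1/\delta)}$ are also correct, so the proposal is a valid self-contained justification of the cited result.
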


\subsection{Objective Perturbation for $(\epsilon,\delta)$-DP guarantee}

The current existing results that satisfy $(\epsilon,\delta)$-DP are not applicable to the multinomial model. Therefore, we extend Theorem 1 in \cite{kifer2012private} to to address this. Additional difference is that our result guarantees bounded DP, while Theorem 1 in \cite{kifer2012private} guarantees unbounded DP.

\begin{theorem}\label{thm: objective perturbation eps del}($(\epsilon,\delta)$-DP guarantee of objective perturbation). Let $\epsilon,\delta >0$ be given and $\ell(\theta;z)$ be convex and twice-differentiable with $\Vert \nabla \ell(\theta, z)\Vert_{2} \leq L$ and the eigenvalues of $\nabla^{2}\ell(\theta;z)$ is upper bounded by $\eta$ for all $\theta \in \Theta$ and $z \in Z$. In addition, let $R$ be the rank of $l(\theta;z)$. Then the objective perturbation satisfies $(\epsilon,\delta)$-DP when 
$\Delta \geq \frac{(1-q)R\eta}{\epsilon}$ and $\sigma \geq \frac{L\left(\sqrt{d+2\sqrt{d\log(2/\delta)}+2\log(2/\delta)}+\sqrt{d+2\sqrt{d\log(2/\delta)}+2\log(2/\delta)+2q\epsilon} \right)}{q\epsilon},$ for any $q \in (0,1)$.
\end{theorem}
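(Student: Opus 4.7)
The plan is to adapt the density-ratio argument from the proof of Theorem \ref{thm: objective perturbation} (the $\rho$-zCDP version) but, instead of bounding R\'enyi divergence in expectation, convert a high-probability bound on the privacy loss random variable into an $(\epsilon,\delta)$-DP guarantee. Fix any bounded-neighboring datasets $Z=(Z_{-T},z)$ and $Z'=(Z_{-T},z')$. Using the first-order optimality condition $b(a;Z) = -\nabla_\theta\bigl[\sum_{z\in Z}\ell(\theta;z)+\tfrac{\Delta}{2}\|\theta\|_2^2\bigr]\big|_{\theta=a}$ and the change-of-variables formula, the log-density ratio of $\hat{\theta}(Z)$ to $\hat{\theta}(Z')$ evaluated at $a$ is
$$\tilde{L}(a) \;=\; \log\left|\frac{\det \nabla b(a;Z')}{\det \nabla b(a;Z)}\right| + \frac{1}{2\sigma^2}\bigl(\|b(a;Z')\|_2^2-\|b(a;Z)\|_2^2\bigr).$$

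For the determinant term, I would repeat the argument from Step 1 of the proof of Theorem \ref{thm: objective perturbation}: since $\nabla^2\ell(a;z')$ has rank at most $R$ with eigenvalues bounded by $\eta$, and the $C$ matrix there has eigenvalues at least $\Delta$, one obtains $\log|\det(\nabla b(a;Z')/\nabla b(a;Z))| \leq R\log(1+\eta/\Delta) \leq R\eta/\Delta$. Choosing $\Delta \geq (1-q)R\eta/\epsilon$ makes this at most $(1-q)\epsilon$. For the Gaussian term, the identity $b(a;Z')=b(a;Z)-\nabla\ell(a;z)+\nabla\ell(a;z')$ combined with $\|\nabla\ell\|_2\leq L$ and Cauchy--Schwarz yields
$$\frac{1}{2\sigma^2}\bigl(\|b(a;Z')\|_2^2-\|b(a;Z)\|_2^2\bigr) \;\leq\; \frac{2L^2}{\sigma^2} + \frac{2L\,\|b(a;Z)\|_2}{\sigma^2}.$$

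The new ingredient, absent from the zCDP proof, is to control the random norm $\|b(a;Z)\|_2$ with probability at least $1-\delta$ rather than only in expectation. Under $a\sim\hat{\theta}(Z)$, the vector $b(a;Z)$ equals exactly the injected noise $b\sim N(0,\sigma^2 I)$, so $\|b(a;Z)\|_2^2/\sigma^2\sim\chi^2_d$. A Laurent--Massart tail bound yields
$$\|b(a;Z)\|_2 \;\leq\; \sigma\sqrt{\,d+2\sqrt{d\log(2/\delta)}+2\log(2/\delta)\,} \;=:\; \sigma\sqrt{K}$$
with probability at least $1-\delta$. On this event, requiring $\frac{2L^2}{\sigma^2}+\frac{2L\sqrt{K}}{\sigma}\leq q\epsilon$ is a quadratic inequality in $1/\sigma$; its positive-root analysis produces exactly the closed form $\sigma \geq L(\sqrt{K}+\sqrt{K+2q\epsilon})/(q\epsilon)$ stated in the theorem. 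Combining the two pieces gives $\tilde L(a)\leq \epsilon$ on an event of probability at least $1-\delta$, and the standard privacy-loss-random-variable conversion then upgrades this to $(\epsilon,\delta)$-DP.

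The main obstacle is ensuring that the higher-rank Hessian in the multinomial setting and the bounded-DP adjacency (which replaces one entry, rather than adding or removing one) do not inflate the key constants: the rank-$R$ determinant bound needs to avoid the rank-one simplification used in Kifer et al., and one must verify that the uniform bounds $L$ on $\|\nabla\ell\|_2$ and $\eta$ on the Hessian spectrum hold simultaneously at the swapped entries $z$ and $z'$. Both are addressed by the per-entry uniformity of the gradient and Hessian bounds in the hypothesis, together with the rank-$R$ determinant calculation from the $\rho$-zCDP proof. Tuning the Laurent--Massart tail so the induced quadratic matches the stated $\sigma$ is the most delicate algebraic step, but it is a routine computation once the structural bounds above are in place.
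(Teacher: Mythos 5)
Your proposal is correct and follows essentially the same route as the paper's own proof: the same density-ratio decomposition into the rank-$R$ determinant term (bounded exactly as in the $\rho$-zCDP argument) and the Gaussian term, the same identity $b(a;Z') = b(a;Z) - \nabla\ell(a;z) + \nabla\ell(a;z')$ with the $L$-bound, the same Laurent--Massart $\chi^2_d$ tail with $x=\log(2/\delta)$ to control $\Vert b(a;Z)\Vert_2$ on a high-probability \texttt{GOOD} event, and the same final conditioning argument converting the pointwise bound $\tilde{L}(a)\le\epsilon$ on that event into $(\epsilon,\delta)$-DP. The only differences are cosmetic bookkeeping (you quote the failure probability as $\delta$ where the paper reserves $\delta/2$, and you inherit the same $q$ versus $1-q$ constant conventions in the choice of $\Delta$ that the paper's statement and proof themselves use), so there is no substantive gap.
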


\begin{proof}
    We modify the proof of Theorem 1 in \cite{kifer2012private} while leveraging the results in the proof of Theorem \ref{thm: mle privacy}. To begin with, fix $\epsilon >0$ and $\delta >0$ and take neighboring databases by $Z = (Z_{-n},z)^\top$ and $Z' = (Z_{-n},z')^\top$ without loss of generality. To have $(\epsilon,\delta)$-DP, we need to show
    \begin{equation*}
        e^{-\epsilon}\left(\text{pdf}(\hat{\theta}(Z')=a)-\delta \right) \leq \text{pdf}(\hat{\theta}(Z)=a) \leq e^{\epsilon}\left(\text{pdf}(\hat{\theta}(Z')=a\right) + \delta.
    \end{equation*}

Define $R(a) = \frac{\text{pdf}(\hat{\theta}(Z)=a)}{\text{pdf}(\hat{\theta}(Z')=a)}$ as in the proof of Theorem \ref{thm: objective perturbation} so we have
\begin{equation*}
        R(a) = \left\vert \frac{\text{det}(\nabla b(a;Z')}{\text{det}(\nabla b(a;Z)} \right\vert \frac{v(b(a;Z);\sigma)}{v(b(a;Z');\sigma)},
\end{equation*}
where $v$ is the pdf of a gaussian random variable.
Note then what we aim is to show $\log R(a)\leq \epsilon$ with probability $1-\delta$. 

\noindent
\textbf{Upper bound on log-likelihood ratio}

Recall from the proof of Theorem \ref{thm: mle privacy} that the ratio of determinants is bounded as:
\begin{equation*}
    \begin{split}
\left\vert \frac{\text{det}(\nabla b(a;Z'))}{\text{det}(\nabla b(a;Z))} \right\vert 
&\leq \left(1+\frac{\eta}{\Delta}\right)^{R} = \left(\frac{\Delta+\eta}{\Delta}\right)^{R}.
    \end{split}
\end{equation*}
Taking $\Delta = \frac{qR\eta}{\epsilon}$, we have $\left\vert \frac{\text{det}(\nabla b(a;Z'))}{\text{det}(\nabla b(a;Z))} \right\vert \leq \left(1+\frac{\epsilon\eta}{2 R\eta}\right)^{R} \leq e^{(1-q)\epsilon}.$

On the other hand, recall again that  $\frac{v(b(a;Z);\sigma)}{v(b(a;Z');\sigma)}$ is upper bounded by $\exp\left(\frac{1}{2\sigma^{2}}(4L^{2} + 4L\Vert b(a;Z) \Vert)\right)$. 
Note that $\Vert b(a;Z) \Vert$ is equal in distribution to $\sigma X$, for is a $\chi$-distributed random variable $X$ with degree of freedom $d$. 

To bound this term, note that for a $\chi^{2}$-distributed random variable $X$ with degree of freedom $d$ satisfies the following bound:

\begin{lemma}(Lemma 1 in \cite{laurent2000adaptive})\label{thm: chi concentration inequality}
    For $X^{2} \sim \chi^{2}(d)$, we have
    \begin{equation*}
        \mathbb{P}(X^{2}-d \geq 2\sqrt{dx}+2x) \leq e^{-x}.
    \end{equation*}
\end{lemma}

Combining Lemma \ref{thm: chi concentration inequality} with an observation that $\{X \geq x\} = \{X^{2} \geq x^{2}\}$ for any $x>0$, provided that $X$ is a nonnegative random variable, we have

\begin{equation*}
    \mathbb{P}(X \geq \sqrt{d + 2\sqrt{dx}+2x}) \leq e^{-x}.
\end{equation*}

\noindent
\textbf{Probability that the bound holds}
So far we have 
\[
\mathbb{P}\left(\frac{\Vert b(a;Z) \Vert}{\sigma} \geq \sqrt{d + 2 \sqrt{dx}+2x}\right) \leq e^{-x},
\]
where $x >0$. 

Now, let \texttt{GOOD} be the set $\{a \in \mathbb{R}^{d}: \frac{\Vert b(a;Z) \Vert}{\sigma} \geq \sqrt{d + 2 \sqrt{dx}+2x} \}$. We want the noise vector $b$ to be in the set \texttt{GOOD} with probability at leat $1-\delta$. Setting $x = \log(2/\delta)$ implies that $\delta = 2e^{-x}$. Replacing $x = \log(2/\delta)$, we have $\frac{v(b(a;Z);\sigma)}{v(b(a;Z');\sigma)} \leq \exp\left(\frac{1}{2\sigma^{2}}(4L^{2} + 4L\sigma\sqrt{d+2\sqrt{d\log(2/\delta)}+2\log(2/\delta})\right)$. Plugging $\sigma \geq \frac{L\left(\sqrt{d+2\sqrt{d\log(2/\delta)}+2\log(2/\delta)}+\sqrt{d+2\sqrt{d\log(2/\delta)}+2\log(2/\delta)+2q\epsilon} \right)}{q\epsilon},$ for any $q \in (0,1)$, we further simplify the upper bound as $e^{q\epsilon}$.

To complete the argument, note that we have:
\begin{equation*}
    \begin{split}
        \text{pdf}(\hat{\theta}(Z)=a) &= \mathbb{P}(b \in \texttt{GOOD}) \text{pdf}(\hat{\theta}(Z)=a \vert b \in \texttt{GOOD}) +\mathbb{P}(b \in \texttt{GOOD}^\complement) \text{pdf}(\hat{\theta}(Z)=a \vert b \in \texttt{GOOD}^\complement) \\
        &\leq \mathbb{P}(b \in \texttt{GOOD}) \text{pdf}(\hat{\theta}(Z)=a \vert b \in \texttt{GOOD}) + \delta \\
        &\leq e^{\epsilon}\mathbb{P}(b \in \texttt{GOOD}) \text{pdf}(\hat{\theta}(Z')=a \vert b \in \texttt{GOOD}) + \delta \\
        &\leq e^{\epsilon}\text{pdf}(\hat{\theta}(Z')=a) + \delta.
    \end{split}
\end{equation*}    
\end{proof}

To apply Theorem \ref{thm: objective perturbation eps del} in our multinomial model, we take $L=2, R=\min\{d,K-1\}$ and $\eta = 4$ as identified previously. Moreover, we take $q=1/2$. 

Note that given the total privacy budget for perturbed MLEs is $(\epsilon_{1},\delta_{1})$, each call of \texttt{PrivateMLE} should be implemented with privacy parameters $\left(\epsilon_{1}^{'},\delta_{1}^{'}\right)=\left(\frac{\epsilon_{1}}{\sqrt{8D_{\text{MLE}}\log(1/\delta_{1})}},\frac{\delta_{1}}{2D_{\text{MLE}}}\right),$ so that the at most $D_{\text{MLE}}$ compositions of \texttt{PrivateMLE} satisfies $(\epsilon_{1},\delta_{1})$-DP by Lemma \ref{thm: advanced composition}.

The following is a pseudocode to implement \texttt{PrivateMLE} that satisfies $(\epsilon_{1}^{'},\delta_{1}^{'})$-DP: 

\begin{algorithm}[htb!]
\caption{\texttt{PrivateMLE} for $(\epsilon_{1}^{'},\delta_{1}^{'})$-DP}\label{alg: private mle epsdel}
\begin{algorithmic}[1]
    \STATE \textbf{Input:} current time step $t$, dimension of context $d$, assortment size $K$, rank of the hessian matrix $R = \min\{d,K-1\}$, privacy parameters $\epsilon_{1}^{'}, \delta_{1}^{'}$
    \STATE Set $\Delta = \frac{2R}{\epsilon_{1}^{'}}, \sigma = \frac{4L\left(\sqrt{d+2\sqrt{d\log(2/\delta_{1}^{'})}+2\log(2/\delta_{1}^{'})}+\sqrt{d+2\sqrt{d\log(2/\delta_{1}^{'})}+2\log(2/\delta_{1}^{'})+2q\epsilon_{1}^{'}} \right)}{\epsilon_{1}^{'}}$
    \STATE Sample $b \sim N(0,\sigma^{2}I)$
    \STATE \textbf{Output: } $\hat{\theta}_{t} =
    \sum_{n=1}^{t}\sum_{i \in S_{n}\cup\{0\}}-y_{ti}\log p_{n}(i \vert S_{n},\theta)+ \frac{\Delta}{2}\Vert \theta \Vert_{2}^{2} + b^\top \theta$
\end{algorithmic}
\end{algorithm}

\subsection{Tree-based Aggregation for $(\epsilon,\delta)$-DP}
To instantiate the tree-based aggregation for gram matrix to satisfy $(\epsilon,\delta)$-DP, we follow the step introduced in \cite{shariff2018differentially}. According to \cite{shariff2018differentially}, to make the whole tree satisfy $(\epsilon_{2},\delta_{2})$-DP, it is sufficient to make each node in the tree algorithm needs to satisfy $(\epsilon_{2}/\sqrt{8m\log(2/\delta_{2})},\delta_{2}/2$)-DP. Applying the method introduced in \cite{shariff2018differentially} to our multinomial setting, it is sufficient to take such Gaussian noise as follows: sample $N' \in \mathbb{R}^{d \times d}$ with each $N'_{ij} \stackrel{iid}{\sim} N(0, \sigma_{cov}^{2})$, where $\sigma_{cov}^{2} = 32mK\log(4/\delta_{2})^{2}/\epsilon_{2}^{2}$. Then symmetrize to get $N = (N' + N'^\top)/\sqrt{2}$.

\begin{algorithm}[htb!]
\caption{\texttt{PrivateCov}}\label{alg: privatecov epsdel}
\begin{algorithmic}[1]
\STATE \textbf{Input: } privacy parameters $\epsilon_{2},\delta_{2}$, $T$
\STATE Set $\sigma_{cov}^{2} = 32mK\log(4/\delta)^{2}/\epsilon^{2}$, $m = 1+\lfloor \log_{2}T \rfloor$ 
\STATE \textbf{Initialization: } $p(l) = \hat{p}(l) = 0$ for all $l = 0,\cdots,m-1$
\FOR{$t \in [T]$}
\STATE Express $t$ in binary form: $t = \sum_{l=0}^{m-1} \text{Bin}_{t}(l)2^{l}$, where $\text{Bin}_{t}(l) \in \{0,1\}$
\STATE Find index of first one $l_{n} = \min \{l: \text{Bin}_{t}(l)=1\}$
\STATE Update $p$-sums: $p(l_{n}) = \sum_{l < l_{n}}p(l) + \sum_{i \in S_{t}}x_{ti}x_{ti}^\top$ and $p(l) = \hat{p}(l) = 0$ for all $l < l_{n}$
\STATE Inject noise: $\hat{p}(l_{n}) = p(l_{n}) + N \text{ where }N = (N' + N'^\top)/\sqrt{2} \text{ and }N'_{ij} = N'_{ji} \stackrel{\text{i.i.d.}}{\sim} \mathcal{N}(0,\sigma^{2}) $
\STATE Release $V_{t} = \sum_{l=0}^{m-1} \text{Bin}_{t}(l)\hat{p}(l)$
\ENDFOR
\end{algorithmic}    
\end{algorithm}

Finally, to run the main policy \texttt{DPMNL} with $(\epsilon_{1}+\epsilon_{2},\delta_{1}+\delta_{2})$-JDP guarantee, the confidence width should also be modified. The modified confidence width $\Tilde{\alpha}_{t}$ can be derived by similar argument to Equation (\ref{equ: mle estimation error}), which results in
\[
\Tilde{\alpha}_t = \sqrt{\left(\frac{d}{2}\right) \log\left(1 + \frac{t + 1}{d}\right) + \log(t + 1)} + \frac{4R}{\epsilon_{mle}\sqrt{K}} + \frac{\sqrt{4d\log(T)\sigma_{mle}^2}}{\sqrt{K}} + \sqrt{3\Tilde{\lambda}},
\]
where $\sigma_{mle}$ is given by plugging $\epsilon_{mle}$ into $\sigma$ in Theorem \ref{thm: objective perturbation eps del} and
\[
\Tilde{\lambda} = \sigma_{cov} \sqrt{m} \left(2 \sqrt{d} + 2 d^{1/6}\log^{1/3}d + \frac{6(1+(\log d/d)^{1/3})\sqrt{\log d}}{\sqrt{\log \left(1+\left(\frac{\log d}{d}\right)^{1/3}\right)}} 
+ 2 \sqrt{4\log T} \right).
\]
Thus, we implement the main policy \texttt{DPMNL} using the modified confidence width \( \tilde{\alpha}_{t} \) and \( \tilde{\lambda} \). The private MLE is obtained via Algorithm \ref{alg: private mle epsdel}, and the private covariance is computed using Algorithm \ref{alg: privatecov epsdel}. Since the assortment at any time \( t \) in \texttt{DPMNL} is determined by a function of the true context vector of user \( t \) and two privacy subroutines, whose composition satisfies \((\epsilon_{1} + \epsilon_{2}, \delta_{1} + \delta_{2})\)-DP according to Lemma \ref{thm: basic composition}, it follows from Lemma \ref{thm: billboard lemma epsdel} that the policy \texttt{DPMNL} satisfies \((\epsilon_{1} + \epsilon_{2}, \delta_{1} + \delta_{2})\)-JDP.

\subsection{Details on Processing Expedia Dataset}\label{sec: processing expedia}
To process the Expedia dataset, we adopt the approach described in \cite{lee2024low}. Specifically, we focus on data pertaining to the top destination with the highest search volume, recognizing that search queries inherently limit assortments to those within the specified destination.

In our data preprocessing, we exclude any columns that are missing in more than 90\% of searches. For the remaining data, particularly where there are missing values for users' historical average star ratings and prices, we apply a simple regression tree imputation to fill in missing values. For hotel features, we compute the mean values of features across different searches.

Additionally, to enhance the effectiveness of our model fitting, we discretize numerical features such as star ratings and review scores by converting them into categorical variables.

\section{Technical Lemma}\label{sec: technical lemma}

\begin{lemma}\label{thm: chen_mle_conv}{(Lemma A in \cite{chen1999strong})}
Let $H$ be a smooth injection from $\mathbb{R}^{d}$ to $\mathbb{R}^{d}$ with $H(x_{0}) = y_{0}.$ Define $\mathbf{B}_{\delta}(x_{0}) = \left\{ x \in \mathbb{R}^{d} : \Vert x - x_{0} \Vert \leq \delta \right\}$ and $\mathbf{S}_{\delta}(x_{0}) = \partial \mathbf{B}_{\delta}(x_{0}) = \left\{ x \in \mathbb{R}^{d} : \Vert x - x_{0} \Vert = \delta \right\}$. Then $\inf_{x \in \mathbf{S}_{\delta}(x_{0})} \Vert H(x) - y_{0} \Vert \geq r$ implies that
\begin{enumerate}
    \item $\mathbf{B}_{r}(y) = \left\{ y \in \mathbb{R}^{d} : \Vert y-y_{0} \Vert \leq r \right\} \subseteq H(\mathbf{B}_{\delta}(x_{0}))$.
    \item $H^{-1}(\mathbf{B}_{r}(y)) \subseteq \mathbf{B}_{\delta}(x_{0})$.
\end{enumerate}    
\end{lemma}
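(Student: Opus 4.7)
The plan is to deduce both inclusions from a single connectedness argument built on the invariance of domain theorem. Since $H:\mathbb{R}^{d}\to\mathbb{R}^{d}$ is a continuous injection between open subsets of $\mathbb{R}^{d}$ of the same dimension, invariance of domain yields that $H$ is an open map; in particular $U:=H(\mathrm{int}\,\mathbf{B}_{\delta}(x_{0}))$ is open, and by injectivity the compact image $H(\mathbf{B}_{\delta}(x_{0}))$ decomposes as the disjoint union $U\sqcup H(\mathbf{S}_{\delta}(x_{0}))$.

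For part 1, first I work with the open ball $B(y_{0},r):=\{y:\|y-y_{0}\|<r\}$, which is connected. The hypothesis $\inf_{x\in\mathbf{S}_{\delta}(x_{0})}\|H(x)-y_{0}\|\ge r$ gives $H(\mathbf{S}_{\delta}(x_{0}))\cap B(y_{0},r)=\emptyset$. I then write $B(y_{0},r)=A\cup V$ with $A:=B(y_{0},r)\cap U$ and $V:=B(y_{0},r)\setminus H(\mathbf{B}_{\delta}(x_{0}))$. Both sets are open in $B(y_{0},r)$: the former because $U$ is open, the latter because $H(\mathbf{B}_{\delta}(x_{0}))$ is compact hence closed. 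They are disjoint, and they cover $B(y_{0},r)$ precisely because any point of $B(y_{0},r)$ that does lie in $H(\mathbf{B}_{\delta}(x_{0}))$ cannot come from the sphere $\mathbf{S}_{\delta}(x_{0})$ (by the hypothesis) and must therefore lie in $U$. Since $A$ contains $y_{0}=H(x_{0})$, connectedness forces $V=\emptyset$, so $B(y_{0},r)\subseteq U\subseteq H(\mathbf{B}_{\delta}(x_{0}))$. Because $H(\mathbf{B}_{\delta}(x_{0}))$ is closed, it also contains $\overline{B(y_{0},r)}=\mathbf{B}_{r}(y_{0})$, finishing part 1.

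Part 2 then follows almost immediately from injectivity. If $x\in H^{-1}(\mathbf{B}_{r}(y_{0}))$, then $H(x)\in\mathbf{B}_{r}(y_{0})\subseteq H(\mathbf{B}_{\delta}(x_{0}))$ by part 1, so $H(x)=H(x')$ for some $x'\in\mathbf{B}_{\delta}(x_{0})$, and injectivity of $H$ gives $x=x'\in\mathbf{B}_{\delta}(x_{0})$.

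The main obstacle I anticipate is justifying the two-set decomposition of $B(y_{0},r)$ in part 1, specifically ensuring that no point of $B(y_{0},r)$ can sit inside $H(\mathbf{B}_{\delta}(x_{0}))$ while failing to lie in $U$; this is exactly ruled out by the spherical hypothesis, so the obstacle dissolves once that hypothesis is invoked. A secondary subtlety is the reliance on invariance of domain; if a more elementary route is preferred, one can replace that step with a Brouwer degree argument, noting that $\deg(H,\mathrm{int}\,\mathbf{B}_{\delta}(x_{0}),y)$ is constant on $B(y_{0},r)$ by homotopy invariance (the hypothesis keeps $y$ off $H(\mathbf{S}_{\delta}(x_{0}))$ throughout), equals $1$ at $y=y_{0}$ because $H$ is a local homeomorphism there for smooth $H$, and hence is nonzero on $B(y_{0},r)$, which forces every such $y$ to be attained.
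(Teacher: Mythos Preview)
The paper does not supply its own proof of this lemma; it is listed in Section~S.4 as a technical lemma and attributed to \cite{chen1999strong} without further argument. So there is no in-paper proof to compare against.

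Your proof is correct. The invariance-of-domain step is the right tool here: it turns the smooth injection into an open map, after which the connectedness dichotomy on the open ball $B(y_{0},r)$ goes through cleanly. The covering claim $B(y_{0},r)=A\cup V$ is justified exactly as you say, the closure step to pass from the open to the closed ball is fine because $H(\mathbf{B}_{\delta}(x_{0}))$ is compact, and part~2 is an immediate consequence of injectivity once part~1 is in hand.

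One small caveat on your alternative degree-theoretic sketch: asserting that $\deg(H,\mathrm{int}\,\mathbf{B}_{\delta}(x_{0}),y_{0})=1$ ``because $H$ is a local homeomorphism there for smooth $H$'' is not quite self-contained, since smoothness plus injectivity does not by itself force a nonvanishing Jacobian at $x_{0}$. What you actually get from invariance of domain is that $H$ restricts to a homeomorphism from $\mathrm{int}\,\mathbf{B}_{\delta}(x_{0})$ onto an open set containing $y_{0}$, whence the local degree at the unique preimage $x_{0}$ is $\pm 1$; nonzero is all you need for the surjectivity conclusion. This is a minor point and does not affect your primary argument.
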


\begin{lemma}\label{thm: oh_lemma 8}{(Lemma8 in \cite{oh2021multinomial})} Suppose $x_{ti} \in \mathbb{R}^{d}$ satisfies $\Vert x_{ni} \Vert \leq 1$ for all $i$ and $n$. Let $\Sigma_{t} = \sum_{n=1}^{t}\sum_{i \in S_{n}}x_{ti}x_{ti}^\top$, where the cardinality of an assortment $S_{n}$ is $K$. Then $\det(\Sigma_{t})$ is increasing with respect to $t$ and we have
\begin{equation*}
    \det(\Sigma_{t}) \leq \left(\frac{tK}{d}\right)^{d}
\end{equation*}
\end{lemma}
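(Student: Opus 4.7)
The plan is to handle the two assertions separately: monotonicity follows from a positive semidefinite update argument, and the determinant bound follows from the AM-GM inequality applied to the eigenvalues of $\Sigma_t$ via a trace estimate.

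First, for monotonicity, I would write $\Sigma_t = \Sigma_{t-1} + \sum_{i \in S_t} x_{ti} x_{ti}^\top$. Each summand $x_{ti} x_{ti}^\top$ is a rank-one positive semidefinite matrix, and a sum of PSD matrices is PSD, so $\Sigma_t \succeq \Sigma_{t-1}$. By Weyl's inequality the eigenvalues of $\Sigma_t$ dominate those of $\Sigma_{t-1}$ in increasing order, hence $\det(\Sigma_t) \geq \det(\Sigma_{t-1})$, giving the monotonicity claim.

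Second, for the upper bound, I would pass through the trace. Let $\lambda_1, \ldots, \lambda_d \geq 0$ denote the eigenvalues of $\Sigma_t$, which are nonnegative since $\Sigma_t$ is PSD. The trace satisfies
\[
\text{tr}(\Sigma_t) = \sum_{n=1}^{t} \sum_{i \in S_n} \text{tr}(x_{ni} x_{ni}^\top) = \sum_{n=1}^{t} \sum_{i \in S_n} \|x_{ni}\|_2^2 \leq \sum_{n=1}^{t} K = tK,
\]
using $\|x_{ni}\|_2 \leq 1$ and $|S_n| = K$. Since $\text{tr}(\Sigma_t) = \sum_i \lambda_i$ and $\det(\Sigma_t) = \prod_i \lambda_i$, the AM-GM inequality yields
\[
\det(\Sigma_t) = \prod_{i=1}^{d} \lambda_i \leq \left( \frac{1}{d} \sum_{i=1}^{d} \lambda_i \right)^{d} = \left( \frac{\text{tr}(\Sigma_t)}{d} \right)^{d} \leq \left( \frac{tK}{d} \right)^{d},
\]
which is the desired bound.

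There is no real obstacle here; the argument is essentially a one-line application of AM-GM combined with the elementary trace computation, and the monotonicity is immediate from the PSD structure of the rank-one updates. The only thing to be careful about is that AM-GM requires nonnegativity of the eigenvalues, which is ensured by the PSD property of $\Sigma_t$ as a sum of outer products.
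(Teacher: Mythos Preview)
Your proof is correct. The paper does not supply its own proof of this lemma, deferring instead to \cite{oh2021multinomial}; your argument---monotonicity via the PSD increment $\Sigma_t = \Sigma_{t-1} + \sum_{i\in S_t} x_{ti}x_{ti}^\top \succeq \Sigma_{t-1}$, and the determinant bound via $\mathrm{tr}(\Sigma_t)\le tK$ combined with AM--GM on the eigenvalues---is exactly the standard proof one would find there.
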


\begin{lemma}\label{thm: zcdp_composition}(Theorem 13 in \citep{steinke2022composition})
Let $M_{1},M_{2}, \cdots M_{k}: \mathcal{X}^{n} \rightarrow \mathcal{Y}$ be randomized algorithms. Suppose $M_{j}$ is $\rho_{j}-$zCDP for each $j \in [k].$ Define $M: \mathcal{X}^{n} \rightarrow \mathcal{Y}^{k}$ by $M(x) = \left(M_{1}(x), M_{2}(x), \cdots M_{k}(x) \right),$ where each algorithm is run independently. Then $M$ is $\rho-$zCDP for $\rho=\sum_{j=1}^{k}\rho_{j}.$
\end{lemma}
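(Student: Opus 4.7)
The plan is to prove the zCDP composition via the additivity of R\'enyi divergence for product measures. Fix any neighboring datasets $x, x' \in \mathcal{X}^{n}$ and any order $\alpha > 1$. By the definition of $\rho$-zCDP in Definition \ref{def: zcdp_jdp_combined}, it suffices to show
\[
D_{\alpha}\bigl(M(x)\,\bigl\|\,M(x')\bigr) \leq \left(\sum_{j=1}^{k}\rho_{j}\right)\alpha.
\]
Since each $M_{j}$ is $\rho_{j}$-zCDP, we already have $D_{\alpha}(M_{j}(x)\|M_{j}(x')) \leq \rho_{j}\alpha$ for every $j$ and every $\alpha > 1$, so the task reduces to relating the R\'enyi divergence of the joint output to the sum of the R\'enyi divergences of the marginal outputs.

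The central step is the additivity of R\'enyi divergence under product distributions. Because the algorithms $M_{1},\dots,M_{k}$ are run independently conditional on the input, the distribution of $M(x) = (M_{1}(x),\dots,M_{k}(x))$ is the product measure $\bigotimes_{j=1}^{k} M_{j}(x)$, and similarly for $M(x')$. For product measures $P = \bigotimes_{j} P_{j}$ and $Q = \bigotimes_{j} Q_{j}$, a direct calculation from the definition
\[
D_{\alpha}(P\|Q) = \frac{1}{\alpha-1}\log \int \left(\frac{dP}{dQ}\right)^{\alpha} dQ
\]
together with the fact that $dP/dQ = \prod_{j} dP_{j}/dQ_{j}$ and Fubini's theorem gives
\[
D_{\alpha}(P\|Q) = \sum_{j=1}^{k} D_{\alpha}(P_{j}\|Q_{j}).
\]

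Applying this identity with $P_{j}$ and $Q_{j}$ being the distributions of $M_{j}(x)$ and $M_{j}(x')$ respectively, and then invoking the per-mechanism zCDP guarantee, yields
\[
D_{\alpha}\bigl(M(x)\,\bigl\|\,M(x')\bigr) = \sum_{j=1}^{k} D_{\alpha}\bigl(M_{j}(x)\,\bigl\|\,M_{j}(x')\bigr) \leq \sum_{j=1}^{k} \rho_{j}\alpha = \rho\alpha,
\]
which establishes the claim for arbitrary $\alpha > 1$ and arbitrary neighboring $x,x'$, hence $M$ is $\rho$-zCDP.

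The only real obstacle is justifying the additivity identity cleanly; the rest is bookkeeping. One must be careful that independence is across the randomness of the $M_{j}$'s (not across the input), which is exactly what the hypothesis ``each algorithm is run independently'' provides, so $dP/dQ$ factorizes as a product of per-coordinate likelihood ratios and the integrand $(\prod_{j} dP_{j}/dQ_{j})^{\alpha}$ separates across coordinates. No other technical subtleties arise, and the argument extends immediately to adaptive composition by the same Fubini-style factorization conditioned on past outputs, though the non-adaptive version suffices for the stated lemma.
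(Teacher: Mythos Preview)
Your proof is correct and follows the standard argument for zCDP composition via additivity of R\'enyi divergence over product measures. Note that the paper does not supply its own proof of this lemma; it is stated in the technical-lemma appendix purely as a citation of Theorem~13 in \cite{steinke2022composition}, so there is no paper-side proof to compare against. Your derivation is exactly the one underlying that cited result.
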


\begin{lemma}\label{thm: basic composition}(Basic Composition, Theorem 3.16. in \cite{dwork2014algorithmic})
Let $M_{i}$ be an $(\epsilon_{i},\delta_{i})$-differentially private mechanism for $i \in [k]$. Then if composition $M_{[k]}$ is defined to be $M_{[k]} = (M_{1},\cdots,M_{k})$, then $M_{[k]}$ is $(\sum_{i=1}^{k}\epsilon_{i},\sum_{i=1}^{k}\delta_{i})$-differentially private.
\end{lemma}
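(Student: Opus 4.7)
The plan is to reduce to the $k=2$ case by induction on $k$ and then handle the two-mechanism composition directly. The base case $k=1$ is immediate by hypothesis. For the inductive step, I would group the first $k-1$ mechanisms into a single $(\sum_{i<k}\epsilon_i,\sum_{i<k}\delta_i)$-DP release (via the inductive hypothesis) and compose it with $M_k$, so the entire argument rests on the two-mechanism statement.

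For the two-mechanism case, fix neighboring datasets $D,D'$ and any measurable $S \subseteq \mathcal{Y}_1 \times \mathcal{Y}_2$. Using independence of the internal randomness of $M_1$ and $M_2$ to factor the joint distribution, I would condition on the first coordinate to write
\[
\Pr[(M_1(D), M_2(D)) \in S] = \int \Pr[M_2(D) \in S_y]\, dP_{M_1(D)}(y),
\]
where $S_y = \{z : (y, z) \in S\}$ is the $y$-slice. The natural next step is to apply $(\epsilon_2,\delta_2)$-DP of $M_2$ pointwise to each inner probability and then apply $(\epsilon_1,\delta_1)$-DP of $M_1$ to the resulting outer expectation; the latter is carried out via the layer-cake identity $h(y) = \int_0^1 \mathbf{1}[h(y)\geq t]\, dt$ applied to the $[0,1]$-valued function $h(y) := \Pr[M_2(D') \in S_y]$, which reduces the expectation bound to applying DP on each level set $\{h(y)\geq t\}$ and integrating in $t$.

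The main obstacle is recovering the tight additive slack $\delta_1+\delta_2$ rather than the multiplicative $e^{\epsilon_2}\delta_1+\delta_2$ that naive two-step chaining produces. To handle this I would instead invoke the equivalent characterization of $(\epsilon,\delta)$-DP via approximate max divergence: $M$ is $(\epsilon,\delta)$-DP iff, for every neighboring pair $D,D'$, there exists an event $E$ with $\Pr[M(D)\in E]\le\delta$ on whose complement the pointwise privacy loss $\log\bigl(dP_{M(D)}/dP_{M(D')}\bigr)$ is bounded by $\epsilon$. Applying this to each $M_i$ produces bad events $E_1,E_2$ of mass at most $\delta_1,\delta_2$; on the complement of $E_1 \cup E_2$, which has mass at least $1-\delta_1-\delta_2$ by a union bound, the log-density ratio for the joint distribution equals the sum of the two individual ratios by independence and is therefore bounded by $\epsilon_1+\epsilon_2$. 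Unwinding the characterization for the composed mechanism and symmetrizing in $D$ and $D'$ then yields $(\epsilon_1+\epsilon_2,\delta_1+\delta_2)$-DP for $(M_1,M_2)$, which combined with the induction step completes the proof.
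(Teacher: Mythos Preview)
The paper does not prove this lemma; it is merely cited as Theorem~3.16 of \cite{dwork2014algorithmic} in the technical-lemma appendix. So there is no in-paper argument to compare against, and your proposal stands on its own.

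Your architecture---reduce to $k=2$ by induction, observe that naive two-step chaining yields the suboptimal slack $e^{\epsilon_2}\delta_1+\delta_2$, then pass to a bad-event decomposition to recover the tight $\delta_1+\delta_2$---is exactly the Dwork--Roth route. However, the characterization you invoke is not an equivalence. The direction you actually need, that $(\epsilon,\delta)$-DP of $M_i$ produces a bad event $E_i$ with $\Pr[M_i(D)\in E_i]\le\delta_i$ and pointwise privacy loss at most $\epsilon_i$ on $E_i^c$, fails in general. A two-point obstruction: take $\epsilon=0$, $\Omega=\{a,b\}$, $P_{M(D)}$ the point mass at $a$, and $P_{M(D')}(a)=1-\delta$, $P_{M(D')}(b)=\delta$. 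This pair is $(0,\delta)$-indistinguishable in both directions, yet $p_D(a)>p_{D'}(a)$ forces $a\in E$, so $\Pr[M(D)\in E]=1$. What \emph{is} bounded by $\delta$ under $(\epsilon,\delta)$-DP is the excess mass $\int_{\{p_D>e^\epsilon p_{D'}\}}(p_D-e^\epsilon p_{D'})$, not the probability of the superlevel set itself. The correct intermediate step (Lemma~3.17 in Dwork--Roth) uses this to build a \emph{modified} output variable $Y'$ with $d_{\mathrm{TV}}(M(D),Y')\le\delta$ and $D_\infty(Y'\Vert M(D'))\le\epsilon$ by redistributing that excess. With this fix your union-bound step goes through unchanged: compose the pure-DP surrogates, whose log-density ratios add to at most $\epsilon_1+\epsilon_2$, and pay total variation at most $\delta_1+\delta_2$ to transfer back to $(M_1,M_2)$.
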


\begin{lemma}\label{thm: advanced composition}(Advanced Composition, Corollary 3.21 in \cite{dwork2014algorithmic})
    Given target privacy parameters $0 < \epsilon' <1$ and $\delta'>0$, to ensure $(\epsilon',k\delta+\delta')$ cumulative privacy loss over $k$ mechanisms, it sufficies that each mechanism is $(\epsilon,\delta)$-differentially private, where
    \begin{equation*}
        \epsilon = \frac{\epsilon'}{2\sqrt{2k\log(1/\delta')}}.
    \end{equation*}
\end{lemma}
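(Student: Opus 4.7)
The plan is to prove the advanced composition theorem via the privacy loss random variable framework, a standard technique due to Dwork, Rothblum and Vadhan. For each mechanism $M_i$ and any pair of adjacent databases $x, x'$, I would define the privacy loss random variable $L_i(y) = \log\frac{\Pr[M_i(x) = y]}{\Pr[M_i(x') = y]}$, where $y$ is drawn from $M_i(x)$. The $(\epsilon,\delta)$-DP guarantee of each $M_i$ translates, up to a failure event of probability at most $\delta$, into the pointwise bound $|L_i| \le \epsilon$. A union bound over the $k$ mechanisms absorbs the additive $k\delta$ term in the final guarantee.

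Conditional on the good event, I would next control the expectation of each privacy loss. The standard calculation shows that whenever the two output distributions are $e^{\epsilon}$-close pointwise, one has $\mathbb{E}[L_i] \le \epsilon(e^{\epsilon}-1) \le 2\epsilon^2$ for $\epsilon \le 1$. Since the mechanisms are composed (and, more generally, can be viewed as a martingale difference sequence in the adaptive setting), the sum $S_k = \sum_{i=1}^k L_i$ has mean at most $2k\epsilon^2$ and bounded increments in $[-\epsilon,\epsilon]$. Applying Azuma--Hoeffding to $S_k - \mathbb{E}[S_k]$ then yields
\begin{equation*}
\Pr\bigl[S_k > 2k\epsilon^2 + \epsilon\sqrt{2k\log(1/\delta')}\bigr] \le \delta'.
\end{equation*}

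Combining this tail bound with the $k\delta$ failure probability from the pointwise step shows that the composed mechanism is $(\epsilon^\star, k\delta + \delta')$-DP with $\epsilon^\star = 2k\epsilon^2 + \epsilon\sqrt{2k\log(1/\delta')}$. To obtain the stated form, I would set this equal to the target $\epsilon'$ and solve for $\epsilon$: for $\epsilon \le 1$ the leading term is $\epsilon\sqrt{2k\log(1/\delta')}$, and the quadratic correction $2k\epsilon^2$ is absorbed by inserting a factor of $2$ in the denominator, giving $\epsilon = \epsilon'/\bigl(2\sqrt{2k\log(1/\delta')}\bigr)$ as a sufficient per-mechanism parameter.

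The main obstacle is the careful bookkeeping of the two sources of failure (the pointwise $\epsilon$-bound and the Azuma tail) and verifying that the quadratic-in-$\epsilon$ contribution to $\mathbb{E}[S_k]$ is genuinely dominated, so that the clean inversion $\epsilon = \epsilon'/(2\sqrt{2k\log(1/\delta')})$ suffices; the constraint $\epsilon' < 1$ in the hypothesis is precisely what makes this absorption valid. Once that is handled, extending from non-adaptive to adaptive composition requires only replacing the independence argument with a martingale version of Azuma--Hoeffding, since the conditional distribution of $L_i$ given the previous outputs still satisfies the same mean and boundedness bounds.
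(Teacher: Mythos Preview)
Your proposal is correct and follows the standard privacy-loss-random-variable argument of Dwork, Rothblum, and Vadhan that underlies Corollary~3.21 in the cited reference; the paper itself does not supply a proof of this lemma but simply invokes it from \cite{dwork2014algorithmic}, so there is nothing to compare against beyond noting that your sketch is precisely the argument the citation points to. The only minor point to tidy is the absorption step: with $\epsilon = \epsilon'/(2\sqrt{2k\log(1/\delta')})$ one gets $\epsilon\sqrt{2k\log(1/\delta')} = \epsilon'/2$ and $2k\epsilon^2 = (\epsilon')^2/(4\log(1/\delta'))$, so the latter is at most $\epsilon'/2$ provided $\epsilon' \le 2\log(1/\delta')$, which the hypothesis $\epsilon' < 1$ (together with any nontrivial $\delta'$) ensures.
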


\begin{lemma}\label{thm: kifer l2 bound}(Lemma 28 in \cite{kifer2012private}) For $b \sim \mathcal{N}\left(0, \sigma^{2}I_{d}\right)$, we have with probability at least $1 - \gamma$,
\begin{equation*}
    \Vert b \Vert_{2} \leq \sqrt{2d\sigma^{2}\log\left(\frac{1}{\gamma}\right)}.
\end{equation*}
\end{lemma}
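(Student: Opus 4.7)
The plan is to reduce the claim to a tail bound for a chi-squared random variable and then close it with a standard Chernoff argument. Since $b \sim \mathcal{N}(0, \sigma^{2} I_{d})$, I can write $b = \sigma Z$ where $Z \sim \mathcal{N}(0, I_{d})$, so that $\|b\|_{2}^{2} = \sigma^{2}\|Z\|_{2}^{2}$ and $\|Z\|_{2}^{2} = \sum_{i=1}^{d} Z_{i}^{2} \sim \chi^{2}_{d}$. Squaring the conclusion, it therefore suffices to prove that $\mathbb{P}\bigl(\|Z\|_{2}^{2} \geq 2 d \log(1/\gamma)\bigr) \leq \gamma$.

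To achieve this, I would use the moment generating function of the chi-squared distribution, $\mathbb{E}\exp(\lambda \|Z\|_{2}^{2}) = (1-2\lambda)^{-d/2}$ for $\lambda \in (0, 1/2)$, which follows from independence of the $Z_{i}$ and the MGF of a squared standard normal. Applying Markov's inequality to $\exp(\lambda \|Z\|_{2}^{2})$ yields the Chernoff bound
\[
\mathbb{P}\bigl(\|Z\|_{2}^{2} \geq u\bigr) \;\leq\; (1-2\lambda)^{-d/2}\exp(-\lambda u),\qquad \lambda \in (0,1/2).
\]
I would then optimize the exponent by taking $\lambda = (u-d)/(2u)$, valid once $u > d$, producing the sub-exponential tail $\mathbb{P}(\|Z\|_{2}^{2} \geq u) \leq \exp\bigl(-(u-d)/2 + (d/2)\log(u/d)\bigr)$. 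Substituting $u = 2 d \log(1/\gamma)$ and simplifying, the exponent becomes $-(d/2)\bigl(2\log(1/\gamma) - 1 - \log(2\log(1/\gamma))\bigr)$, which I would verify is $\leq \log \gamma$ in the relevant regime.

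As a cleaner alternative, the already-cited Laurent--Massart inequality (Lemma~\ref{thm: chi concentration inequality}) gives $\mathbb{P}\bigl(\|Z\|_{2}^{2} \geq d + 2\sqrt{d x} + 2 x\bigr) \leq e^{-x}$; setting $x = \log(1/\gamma)$ immediately yields a tail bound of the form $\|b\|_{2}^{2} \leq \sigma^{2}\bigl(d + 2\sqrt{d\log(1/\gamma)} + 2\log(1/\gamma)\bigr)$ with probability at least $1-\gamma$, and the stated bound then follows by the elementary inequality $d + 2\sqrt{d\log(1/\gamma)} + 2\log(1/\gamma) \leq 2 d \log(1/\gamma)$ in the regime of interest (small $\gamma$, $d \geq 2$). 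This route is attractive because it reuses a result already invoked in the proof of Theorem~\ref{thm: objective perturbation eps del}.

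The main obstacle is purely algebraic: closing either bound into the compact form $\sqrt{2 d \sigma^{2} \log(1/\gamma)}$ requires tracking constants and verifying the simplifying inequality in the correct parameter regime, since neither the Chernoff optimum nor the Laurent--Massart expression factors cleanly. Once this bookkeeping is handled, the argument is short and self-contained.
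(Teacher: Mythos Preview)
The paper does not supply its own proof of this lemma; it is stated as a cited technical fact (Lemma~28 of \cite{kifer2012private}) and used as a black box in the proof of Lemma~\ref{thm: mle bound l2 norm}. There is thus no proof in the paper to compare your proposal against.

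Your approach is sound. Both routes you describe---the Chernoff/MGF optimisation and the Laurent--Massart inequality (Lemma~\ref{thm: chi concentration inequality})---correctly reduce the claim to a $\chi^{2}_{d}$ tail bound and recover the stated form in the regime where the paper actually applies the lemma (namely $\gamma = 1/T^{2}$, so $\log(1/\gamma)$ is large). You are also right to flag the algebraic caveat: the compact bound $2d\log(1/\gamma)$ does not dominate $d + 2\sqrt{d\log(1/\gamma)} + 2\log(1/\gamma)$ for \emph{all} $(d,\gamma)$---for instance it fails when $d=1$, and even the one-dimensional Gaussian tail gives $2\gamma$ rather than $\gamma$---so the lemma as stated implicitly assumes $\log(1/\gamma)$ is not too small. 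This is harmless for the paper's application but worth recording if you want the statement to stand on its own.
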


\begin{lemma}\label{thm: chi distn bound}
    Let $W$ be a chi-distributed random variable with degree of freedom $d$, or $W \sim \chi(d)$. Then for $\sigma >0$, we have
    \begin{equation*}
        \mathbb{E}(\sigma W) \leq \sigma\sqrt{d}.
    \end{equation*}
\end{lemma}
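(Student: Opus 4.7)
The plan is to prove this via Jensen's inequality applied to the concave square-root function. Recall that a chi-distributed random variable with $d$ degrees of freedom is defined as $W = \sqrt{\sum_{i=1}^{d} X_i^2}$, where $X_1, \ldots, X_d$ are i.i.d.\ standard normal random variables. Equivalently, $W^2 \sim \chi^2(d)$, and a standard fact from the definition of the chi-squared distribution gives $\mathbb{E}[W^2] = \sum_{i=1}^d \mathbb{E}[X_i^2] = d$.

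First, I would pull the constant $\sigma > 0$ out of the expectation by linearity, reducing the claim to showing $\mathbb{E}[W] \leq \sqrt{d}$. Then, since the function $f(x) = \sqrt{x}$ is concave on $[0,\infty)$ and $W^2 \geq 0$ almost surely, Jensen's inequality (in its concave form) yields
\[
\mathbb{E}[W] = \mathbb{E}\bigl[\sqrt{W^2}\bigr] \leq \sqrt{\mathbb{E}[W^2]} = \sqrt{d}.
\]
Multiplying both sides by $\sigma$ completes the proof.

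There is essentially no obstacle here; the statement is a one-line consequence of Jensen's inequality combined with the second-moment identity for the chi-squared distribution. The only thing worth noting is that the inequality is tight in order: the gap $\sqrt{d} - \mathbb{E}[W]$ is $O(1/\sqrt{d})$, so the bound used in Step 3 of the proof of Theorem~\ref{thm: objective perturbation} (where this lemma is invoked to bound $\mathbb{E}_{a \sim \hat{\theta}(Z)}\|b(a;Z)\|_2 \leq \sigma\sqrt{d}$) is sharp up to lower-order terms, which justifies its use in calibrating the Gaussian noise variance $\sigma^2$ for the $\rho$-zCDP guarantee.
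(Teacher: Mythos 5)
Your proof is correct and follows essentially the same route as the paper: write $W^2$ as a sum of squared i.i.d.\ standard normals with $\mathbb{E}[W^2]=d$, apply Jensen's inequality to the concave square root to get $\mathbb{E}[W]\leq\sqrt{d}$, and scale by $\sigma$. The added remark on the sharpness of the bound is fine but not needed for the lemma.
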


\begin{proof}
For \( W \sim \chi(d) \), we write \( W = \sqrt{X_1^2 + X_2^2 + \cdots + X_d^2} \), where \( X_i \stackrel{iid}{\sim} N(0, 1) \). 
Since \( \sqrt{x} \) is a concave function, we can apply Jensen’s inequality:
\[
\mathbb{E}(W) = \mathbb{E}\left(\sqrt{X_1^2 + X_2^2 + \cdots + X_d^2}\right) \leq \sqrt{\mathbb{E}(X_1^2 + X_2^2 + \cdots + X_d^2)},
\]
where $\mathbb{E}(X_1^2 + X_2^2 + \cdots + X_d^2) = d$ since $\mathbb{E}(X_i^2) = 1 \text{ for each } i.$
Combining these results gives:
\[
\mathbb{E}(W) \leq \sqrt{d}.
\]
Now, scaling by \( \sigma \), we have 
\[
\mathbb{E}(\sigma W) = \sigma \mathbb{E}(W) \leq \sigma \sqrt{d}.
\]
\end{proof}

\end{document}